\documentclass{article} 
\usepackage{iclr2027_conference,times}

\usepackage{amsmath,amsfonts,bm}

\def\eqref#1{equation~\ref{#1}}

\def\1{\bm{1}}

\DeclareMathAlphabet{\mathsfit}{\encodingdefault}{\sfdefault}{m}{sl}
\SetMathAlphabet{\mathsfit}{bold}{\encodingdefault}{\sfdefault}{bx}{n}

\newcommand{\E}{\mathbb{E}}

\newcommand{\R}{\mathbb{R}}

\newcommand{\KL}{D_{\mathrm{KL}}}

\usepackage{titletoc}
\usepackage{amsmath, amssymb, amsfonts, amsthm}
\usepackage{nicefrac, xfrac}
\usepackage{thmtools, thm-restate}
\usepackage[dvipsnames,svgnames]{xcolor}
\usepackage{graphicx}
\usepackage{caption, subcaption}
\usepackage{wrapfig}
\usepackage{booktabs, longtable, multirow, makecell, tabularx, tabularray}
\usepackage{tikz}
\usetikzlibrary{calc,decorations.pathreplacing,arrows.meta}
\usepackage{algorithm}
\usepackage{float}
\usepackage{algpseudocode}
\usepackage{enumitem}
\usepackage[most]{tcolorbox}
\definecolor{DarkKlein}{HTML}{122A82}
\definecolor{LinkBurgundy}{HTML}{8A1538}
\usepackage[
    colorlinks=true,
    linkcolor=LinkBurgundy,
    citecolor=DarkKlein,
    urlcolor=Black
]{hyperref}
\usepackage{xurl}
\usepackage[capitalize]{cleveref}
\usepackage[normalem]{ulem}

\definecolor{accentline}{HTML}{4B658A}
\definecolor{boxbg}{HTML}{F6F8FB}
\newtcolorbox{intuitionbox}[1][]{%
  enhanced, breakable, pad at break=0pt, sharp corners, boxrule=0pt, frame hidden,
  borderline west={1.75pt}{0pt}{accentline}, colback=boxbg,
  left=7pt, right=4pt, top=3pt, bottom=3pt, before skip=6pt, after skip=6pt, #1
}

\let\oldcomplement\complement 
\renewcommand{\complement}{\text{\scalebox{0.8}{$\oldcomplement$}}}

\declaretheorem[name=Theorem,numberwithin=section]{theorem}
\declaretheorem[name=Definition,style=definition,numberwithin=section]{definition}
\declaretheorem[name=Proposition,numberwithin=section]{proposition}
\declaretheorem[name=Assumption,numberwithin=section]{assumption}
\declaretheorem[name=Corollary,numberwithin=section]{corollary}
\declaretheorem[name=Lemma,numberwithin=section]{lemma}

\declaretheorem[name=Remark,style=remark,numberwithin=section]{remark}

\crefname{equation}{Eq.}{Eqs.}
\crefname{section}{Sec.}{Secs.}
\crefname{appendix}{Appx.}{Appxs.} 
\crefname{theorem}{Thm.}{Thms.}
\crefname{corollary}{Cor.}{Cors.}
\crefname{proposition}{Prop.}{Props.}
\crefname{assumption}{Asm.}{Asms.}
\crefname{definition}{Defn.}{Defns.}
\crefname{example}{Ex.}{Exs.}
\crefname{figure}{Fig.}{Figs.}
\crefname{table}{Tab.}{Tabs.}
\crefname{algorithm}{Alg.}{Algs.}

\newcommand{\cA}{\mathcal{A}}   
\newcommand{\cX}{\mathcal{X}}
\newcommand{\cQ}{\mathcal{Q}}   

\newcommand{\cE}{\mathcal{E}}
\newcommand{\cZ}{\mathcal{Z}}

\newcommand{\Pp}{\operatorname{\mathbb{P}}}    

\newcommand{\DTV}{D_{\mathrm{TV}}}     
\newcommand{\TV}{\mathrm{TV}}          
\newcommand{\AT}{\mathrm{AT}}          
\newcommand{\supp}{\operatorname{supp}}

\newcommand{\modl}{\mathfrak{A}}       
\newcommand{\dd}{\mathop{}\!\mathrm{d}}
\newcommand{\diam}{\operatorname{diam}}
\newcommand{\kl}{\mathsf{kl}}          
\newcommand{\logit}{\operatorname{\mathsf{logit}}}
\newcommand{\mmse}{\operatorname{\mathsf{mmse}}}
\newcommand{\Ent}{\operatorname{\mathsf{Ent}}}

\title{An Identifiability Theory of Masked Prediction: Mode Blindness and Mask Schedules}

\author{Yichao Cai\textsuperscript{\dag} \quad\&\quad Javen Qinfeng Shi\textsuperscript{\dag, \ddag} \\
\textsuperscript{\dag}~Australian Institute for Machine Learning, Adelaide University\\
\textsuperscript{\ddag}~Responsible AI Research Centre\\
\texttt{\{yichao.cai,javen.shi\}@adelaide.edu.au}
}

\iclrfinalcopy 
\begin{document}

\maketitle

\begin{abstract}
Masked prediction learns by inferring missing variables from visible context. When does optimizing this conditional task recover the true joint data distribution? We study this question using an $\varepsilon$-identifiability modulus, which measures the worst-case joint-distribution error permitted by excess risk at most $\varepsilon$. For distributions with separated global modes, schedules retaining large visible contexts can permit substantial mode-weight errors at exponentially small excess risk. An exact information decomposition explains why: for a fixed mask, the loss penalizes only the mode-weight mismatch that remains unresolved by the visible context. For small mode-weight perturbations, the objective's sensitivity is proportional to residual mode uncertainty averaged over masks. Under joint masked-block log loss, low-visibility masks that retain mode uncertainty restore this sensitivity, while positive full-mask probability bounds joint-distribution error in terms of excess risk. We empirically validate these predictions through exact calculations and controlled stochastic optimization.\footnote{Code available at: \url{https://github.com/YichaoCai1/Masked-Prediction-Identifiability}}
\end{abstract}

\section{Introduction}
\label{sec:introduction}

Masked prediction replaces direct modeling of a joint distribution with conditional inference from partial observations, with a \emph{mask schedule} specifying which contexts are revealed. This paradigm underlies BERT-style pretraining \citep{devlin2019bert,liu2019roberta}, masked autoencoding \citep{he2022masked}, absorbing-state discrete diffusion \citep{austin2021structured,lou2024discrete,sahoo2024simple,ou2025absorbing}, and masked latent prediction \citep{assran2023self,bardes2024revisiting,nam2026cjepa}. Despite architectural differences, these methods raise a fundamental statistical question: when does near-optimal masked prediction identify the true joint distribution?

Suppose the data contain two well-separated regimes, such as code and prose, and we perturb only their relative frequencies while leaving the distributions within each regime unchanged. With little visible context, these frequencies help determine which regime is likely and what should be predicted. Once the context leaves little uncertainty about the regime, however, its global frequency becomes practically irrelevant to the masked conditional. More visible context can therefore make conditional prediction easier while weakening the signal needed to recover global mode frequencies. Consequently, a model can achieve near-optimal masked-prediction loss while assigning substantially incorrect probabilities to the global modes, a failure mode we term \emph{mode blindness}.

We introduce an \emph{$\varepsilon$-identifiability modulus} to formalize this gap: how far can a model distribution remain from the true data distribution when its masked-prediction excess risk is at most $\varepsilon$? At $\varepsilon=0$, this formulation reduces to exact objective-level identifiability; for $\varepsilon>0$, it measures the stability of joint-distribution recovery. In rapid-mixing regimes, approximate-tensorization inequalities convert average conditional KL divergence into joint KL divergence. Uniform control of these constants over near-optimal model distributions yields a vanishing identifiability modulus as the excess-risk budget shrinks \citep{caputo2021block,koehler2023statistical,li2024promises}. For separated modes and schedules retaining large visible contexts, this conditional-to-joint control can deteriorate sharply.

An exact information decomposition explains this loss of sensitivity (see \Cref{fig:mode-blindness} for an illustration). For each mask, the joint KL divergence due to mode reweighting splits into a visible-marginal term and a masked-conditional term. Only the latter enters the objective, and its local sensitivity is proportional to posterior mode uncertainty. Averaging this uncertainty over masks determines the objective's sensitivity to global mode frequencies.

Under a mode-pinning condition, large visible contexts leave exponentially little residual mode uncertainty. This allows macroscopic mode-weight errors at excess risk exponentially small in the minimum visible-context size, forcing approximate-tensorization constants to grow at least exponentially in that size. Low-visibility masks that preserve mode uncertainty restore sensitivity to mode weights and yield explicit recovery bounds. Full masking gives a stronger guarantee: under joint masked-block log loss, any positive probability on fully masked inputs bounds joint-distribution error for every admissible model. Mask schedules therefore affect both prediction difficulty and the stability of global mode-weight recovery.

\begin{figure}[t]
    \centering
    \includegraphics[width=\linewidth]{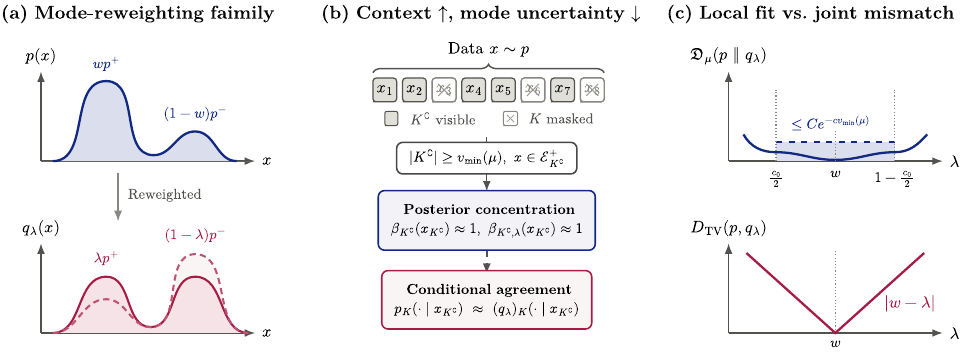}
    \caption{Mode blindness in masked prediction. (a) The true distribution $p$ and a reweighted model distribution $q_\lambda$ share identical within-mode distributions but differ in their global mode weights. (b) Once the visible context leaves little uncertainty about the mode, the masked conditionals under $p$ and $q_\lambda$ become nearly indistinguishable. (c) The masked discrepancy can therefore decay exponentially with visible-context size while the joint distributions remain a constant total-variation distance apart. See \Cref{thm:mode-blind} for the formal statement and \Cref{app:M-modes} for the extension to $M\ge2$ modes.}
\label{fig:mode-blindness}
\end{figure}

Our contributions are:
\begin{enumerate}[label=(\roman*)]
    \item We introduce an $\varepsilon$-identifiability modulus (\Cref{def:modulus}) that measures the largest joint-distribution error allowed under a given excess-risk budget. The framework describes approximate identifiability from the population objective and extends naturally to other predictive settings.
    \item We establish mode blindness under large-context mode pinning: macroscopic mode-weight errors can incur exponentially small masked discrepancy (\Cref{thm:mode-blind}), forcing exponentially large approximate-tensorization constants (\Cref{cor:at-blowup}).
    \item We show that residual mode uncertainty averaged over masks governs sensitivity to global mode weights (\Cref{thm:mode-sensitivity}). This yields recovery bounds for schedules that preserve mode uncertainty through low-visibility masks. Under joint masked-block log loss, positive full-mask probability further provides uniform joint-distribution control (\Cref{prop:full-mask}).
    \item  We validate the predicted blindness--recovery transition through exact calculations and gradient descent within the mode-reweighting family (\Cref{sec:experiments}).
\end{enumerate}

\section{Background and Setup}
\label{sec:setup}
This section formulates the masked-prediction objective on a finite product space and introduces our theoretical framework. \Cref{app:notation} provides the notation and mathematical conventions used throughout the paper. Full proofs of all results stated in the main text, together with auxiliary results, are provided in \Cref{app:extra-statements}.

\paragraph{Data space and mask schedule.} Let \(\cA\) be a finite alphabet with $|\cA|\ge2$, and let \(\cX\!:=\!\cA^N\) be the space of length-\(N\) sequences. Let \(p\in\Delta(\cX)\) denote the true data distribution, and write \(X=(X_1,\ldots,X_N)\sim p\). A \emph{mask schedule} is a probability distribution \(\mu\) over subsets \(K\subseteq[N]\), where \(K\) indexes the masked coordinates and \(K^{\complement}:=[N]\setminus K\) indexes the visible coordinates.\footnote{Notation: while $K \subseteq [N]$ and $K^\complement$ strictly denote a mask drawn from $\mu$ and its visible coordinates, we use $V$ and $V^\complement=[N]\setminus V$ to denote general, schedule-independent conditioning sets.} 
For a \emph{visibility threshold} \(s\in\{0,\ldots,N\}\), define
\begin{equation}
v_{\mathrm{min}}(\mu)
:=
\min\nolimits_{K\in\supp(\mu)}
|K^{\complement}|,
\qquad
\pi_s(\mu)
:=
\mu\big(
|K^{\complement}|\le s
\big).
\label{eq:low-visibility-mass}
\end{equation}
Here, \(v_{\mathrm{min}}(\mu)\) is the minimum visible-context size under schedule $\mu$, and \(\pi_s(\mu)\) is the probability of observing at most \(s\) visible coordinates. We refer to \(\pi_s(\mu)\) as the \emph{low-visibility mass}; in particular, \(\pi_0(\mu) = \mu(K = [N])\) is the \emph{full-mask probability} (where the visible context is empty, $|K^\complement| = 0$).

\paragraph{Admissible model distributions.} 
A model is represented by a joint distribution \(q\in\Delta(\cX)\) via its masked-block conditionals \(q_K(\cdot\mid x_{K^{\complement}})\). We call support-covering models \emph{admissible} and define
\begin{equation}
\cQ(p)
:=
\left\{
q\in\Delta(\cX):
\supp(p)\subseteq\supp(q)
\right\}.
\label{eq:model-class}
\end{equation}
For every \(q\in\cQ(p)\), every mask \(K\), and each context \(x_{K^{\complement}}\) with \(p_{K^{\complement}}(x_{K^{\complement}})>0\), the model conditional \(q_K(\cdot\mid x_{K^{\complement}})\) is well defined and satisfies \(p_K(\cdot\mid x_{K^{\complement}})\ll q_K(\cdot\mid x_{K^{\complement}})\). Restricting model conditionals to derive from a joint distribution isolates schedule-induced information loss from the issue of incompatible conditionals. This strengthens our negative results since our mode-blind constructions are themselves valid joint distributions, while properly scoping our positive guarantees. 

\begin{definition}[Masked log discrepancy]
\label{def:masked-log-discrepancy}
For \(q\in\cQ(p)\) and a fixed mask \(K\subseteq[N]\), define the fixed-mask log discrepancy
\begin{equation}
\mathfrak D_K(p\| q)
:=
\E_{X_{K^{\complement}}\sim p_{K^{\complement}}}
\left[
\KL\!\left(
p_K(\cdot\mid X_{K^{\complement}})
\,\middle\|\,
q_K(\cdot\mid X_{K^{\complement}})
\right)
\right].
\label{eq:fixed-mask-discrepancy}
\end{equation}
For a mask schedule \(\mu\), define the schedule-averaged masked log discrepancy
\begin{equation}
\mathfrak D_\mu(p\| q)
:=
\E_{K\sim\mu}
\left[\mathfrak D_K(p\| q)\right].
\label{eq:masked-discrepancy}
\end{equation}
\end{definition}   

\begin{remark}[Excess-risk interpretation]
We evaluate masked prediction via the joint masked-block logarithmic loss $-\log q_K\!\left(X_K\mid X_{K^{\complement}}\right)$. For every \(q\in\cQ(p)\),
\begin{equation}
\mathfrak D_\mu(p\| q)
=
\E_{K\sim\mu}
\E_{X\sim p}
\left[
-\log q_K(X_K\mid X_{K^{\complement}})
+
\log p_K(X_K\mid X_{K^{\complement}})
\right].
\label{eq:excess-risk-identity}
\end{equation}
Hence, \(\mathfrak D_\mu(p\| q)\) corresponds to the population masked-prediction excess risk of \(q\) relative to \(p\): the mask $K$ is sampled from $\mu$, data $X \sim p$ provides both context and target, and the model $q$ is scored on the joint conditional. Throughout, our primary setting scores the entire masked block jointly (see \Cref{app:factorized} for extensions to token-factorized heads).
\end{remark}

\paragraph{$\varepsilon$-Identifiability modulus.} 
While the discrepancy $\mathfrak D_{\mu}(p\| q)$ measures conditional agreement on visible contexts drawn from $p$, near-zero conditional excess risk does not inherently imply that the joint distributions are close. To analyze this stability of identification, we introduce a modulus that quantifies the worst-case joint error compatible with a given excess-risk budget $\varepsilon$.

\begin{definition}[$\varepsilon$-identifiability modulus]
\label{def:modulus}
Fix a schedule $\mu$ and a metric $d$ continuous on $\Delta(\cX)\times\Delta(\cX)$. For a data distribution $p\in\Delta(\cX)$ and risk budget $\varepsilon\ge0$, the $\varepsilon$-identifiability modulus is
\begin{equation}
\modl_d^{\mu}(p,\varepsilon)
:=
\sup\left\{
d(p,q):
q\in\cQ(p),\;
\mathfrak D_{\mu}(p\| q)\le\varepsilon
\right\}.
\label{eq:modulus}
\end{equation}
\end{definition}   

Since $\Delta(\cX)$ is compact and $d$ is continuous, $\modl_d^{\mu}(p,\varepsilon)$ is always finite and nondecreasing in $\varepsilon$ (see \Cref{app:prop-modulus} for the properties). The modulus distinguishes two types of guarantees: 
\begin{itemize}
    \item A \emph{lower bound} on $\modl_d^{\mu}(p,\varepsilon)$ requires constructing just one candidate distribution $q \in \cQ(p)$ that stays within the risk budget $\varepsilon$ while remaining distant from $p$ under $d$.
    \item An \emph{upper bound} must uniformly control \emph{every} admissible model inside that risk budget.
\end{itemize}
At risk budget $\varepsilon=0$, the modulus reduces to exact objective-level identifiability: $\modl_d^{\mu}(p,0)=0$ if and only if $p$ is uniquely identified by zero masked discrepancy. For $\varepsilon>0$, it characterizes the metric stability of joint-distribution recovery under optimization noise. Although formulated here for the masked log loss $\mathfrak D_\mu$ and the model distribution under consideration, this definition extends naturally to arbitrary objectives by replacing the excess-risk functional and target of interest accordingly.

\section{From Rapid-Mixing Recovery to the Bimodal Regime}
\label{sec:mixing-and-twomode}

In this section, we first examine the identifiability modulus in the rapid-mixing regime, where block approximate tensorization converts masked discrepancy into joint-distribution error and yields a quantitative upper bound on the modulus. We then introduce a bimodal data geometry where this local-to-global control weakens, formalizing residual mode uncertainty and a mode-reweighting family whose masked discrepancy admits an exact information decomposition.

\subsection{Quantitative Identifiability under Rapid Mixing}
\label{sec:mixing}

\paragraph{Operational block dynamics.} Given a distribution \(r\in\Delta(\cX)\), a configuration \(x\in\supp(r)\), and a mask \(K\), a block heat-bath update holds the visible coordinates fixed and redraws the masked block according to
\begin{equation}
\widetilde X = (\widetilde X_{K^{\complement}}, \widetilde X_{K}),
\qquad
\widetilde X_{K^{\complement}}
=
x_{K^{\complement}},
\qquad
\widetilde X_K
\sim
r_K(\cdot\mid x_{K^{\complement}}).
\label{eq:block-update}
\end{equation}
Sampling \(K\sim\mu\) at each step defines the block dynamics induced by \((r,\mu)\). Because each fixed-mask update is reversible with respect to \(r\), their mixture under $\mu$ remains reversible, rendering \(r\) stationary for the resulting dynamics. Taking \(r=p\) or \(r=q\) yields the data and model block dynamics, respectively.

\paragraph{Approximate tensorization.} We say a distribution $r\in\Delta(\cX)$ satisfies block \emph{approximate tensorization (AT)} with constant $\bar C_{\AT}(r,\mu)\in[0,\infty]$, defined as the smallest constant such that
\begin{equation}
\KL(p'\,\|\, r)
\;\le\;
\bar C_{\AT}(r,\mu)\,
\mathfrak D_{\mu}(p'\,\|\, r)\qquad
\text{for every } p'\in\Delta(\cX)\ \text{with } \supp(p')\subseteq\supp(r).
\label{eq:at-def}
\end{equation}
This formulation represents the density-ratio form of the block factorization of relative entropy. The size of $\bar C_{\AT}(r,\mu)$ determines how strongly conditional discrepancies control joint KL divergence \citep{caputo2021block,li2024promises}. Applying \Cref{eq:at-def} with reference distribution $r=q$ for an admissible model and $p'=p$ converts the masked discrepancy into a joint-distribution bound, which via Pinsker's inequality yields\footnote{Hereafter, we specialize to total variation distance $d=\DTV$ and write $\modl_{\TV}^{\mu}:=\modl_{\DTV}^{\mu}$.}
\begin{equation}
\modl_{\TV}^{\mu}(p,\varepsilon)
\le
\sqrt{\tfrac12\,\bar C_{\AT}^{\varepsilon}(p,\mu)\,\varepsilon},
\quad
\bar C_{\AT}^{\varepsilon}(p,\mu)
:=
\sup\big\{\bar C_{\AT}(q,\mu):
q\in\cQ(p),\ \mathfrak D_{\mu}(p\| q)\le\varepsilon\big\}.
\label{eq:at-upper}
\end{equation}
The AT constant is evaluated at the model distribution $q$ because \Cref{eq:at-def} uses $q$ as the reference measure. See \Cref{app:at-transfer} for the formal transfer and comparisons to classical functional inequalities.

For any sequence $(p_N,\mu_N)$ and excess-risk budgets $\varepsilon_N$, \Cref{eq:at-upper} guarantees vanishing total-variation error whenever $\bar C_{\AT}^{\varepsilon_N}(p_N,\mu_N)\varepsilon_N\to0$. We next introduce a bimodal data geometry for which the AT constants grow exponentially with the minimum visible-context size.

\subsection{Bimodal Geometry and the Mode-Reweighting Direction}
\label{sec:twomode}

Suppose the data distribution exhibits two well-separated global regimes, partitioning the sample space into disjoint mode regions $\cX = \cX_+ \sqcup \cX_-$ with $p(\cX_\tau)>0$ for each $\tau \in \{+,-\}$. This geometric partition is a property of $p$ and independent of any mask schedule. Let $w:=p(\cX_+)\in(0,1)$ denote the true global mode weight. For each mode $\tau\in\{+,-\}$, we write $-\tau$ for the complementary mode, define the within-mode conditional distribution $p^\tau(\cdot):=p(\cdot\mid\cX_\tau)$, and let $p^\tau_V$ denote its marginal over coordinates $V\subseteq[N]$.

\paragraph{Conditional mode uncertainty.}
To quantify the residual ambiguity regarding the active mode after observing context on $V\subseteq[N]$, define the binary mode indicator $Z:=\mathbf 1\{X\in\cX_+\}$ and its posterior probability
\begin{equation}
\beta_V(x_V):=p(Z=1\mid X_V=x_V).
\label{eq:mode-posterior}
\end{equation}
The \emph{residual mode uncertainty} is captured by the conditional minimum mean-square error (MMSE):
\begin{equation}
\mmse_p(Z\mid X_V)
:=
\E_p\big[(Z-\E_p[Z\mid X_V])^2\big]
=
\E_{X_V\sim p_V}\big[\beta_V(X_V)(1-\beta_V(X_V))\big].
\label{eq:mode-mmse}
\end{equation}
This variance vanishes if and only if the visible context determines the mode almost surely. For a fixed mode weight $w$, it is maximized at $w(1-w)$ when $X_V$ carries no information about $Z$.

\paragraph{Mode-weight perturbations.} For any $\lambda\in(0,1)$, define the mode-reweighted distribution
\begin{equation}
q_\lambda
:=
\lambda\,p^+ + (1-\lambda)\,p^-.
\label{eq:reweighted-law}
\end{equation}
Because the mode supports are disjoint, $q_\lambda$ preserves the within-mode distributions of $p$ exactly and changes only the global mixture weight to $q_\lambda(\cX_+)=\lambda$. Furthermore, $\lambda\in(0,1)$ guarantees $\supp(q_\lambda)=\supp(p)$, ensuring $q_\lambda\in\cQ(p)$. Analogous to \Cref{eq:mode-posterior}, let $\beta_{\lambda,V}(x_V):=q_\lambda(Z=1\mid X_V=x_V)$ denote the posterior mode probability under $q_\lambda$.

The following lemma shows that mode reweighting acts as an exact logit tilt for each visible context, reducing the conditional KL divergence between masked blocks to the binary KL divergence between their mode posteriors.

\begin{lemma}[Posterior shift under mode reweighting]
\label{lem:logit-shift}
Let $q_\lambda$ be defined by \Cref{eq:reweighted-law} for $\lambda\in(0,1)$. Fix $V\subseteq[N]$, let $V^\complement=[N]\setminus V$, and take $x_V\in\supp(p_V)$. The following properties hold:
\begin{enumerate}
\item[\textup{(i)}] The posterior mode probability under $q_\lambda$ satisfies
\begin{equation}
\beta_{\lambda,V}(x_V)
=
\frac{\frac{\lambda}{w}\,\beta_V(x_V)}
{\frac{\lambda}{w}\,\beta_V(x_V)
+
\frac{1-\lambda}{1-w}\,\big(1-\beta_V(x_V)\big)}.
\label{eq:posterior-reweighting}
\end{equation}
\item[\textup{(ii)}] If $\beta_V(x_V)\in(0,1)$, reweighting induces a logit tilt:
\begin{equation}
\logit\big(\beta_{\lambda,V}(x_V)\big)
=
\logit\big(\beta_V(x_V)\big)
+
\logit(\lambda)
-
\logit(w).
\label{eq:posterior-logit-shift}
\end{equation}
\item[\textup{(iii)}] The conditional masked-block distributions satisfy
\begin{equation}
\KL\!\left(
p_{V^\complement}(\cdot\mid x_V)
\,\|\,
q_{\lambda,V^\complement}(\cdot\mid x_V)
\right)
=
\kl\big(\beta_V(x_V)\,\|\,\beta_{\lambda,V}(x_V)\big),
\label{eq:conditional-kl-reweighting}
\end{equation}
where $\kl(\cdot\|\cdot)$ is the binary KL divergence (\Cref{eq:binary-kl-defn}). In particular, whenever $\beta_V(x_V)\in\{0,1\}$, we have $\beta_{\lambda,V}(x_V)=\beta_V(x_V)$ and the divergence in \Cref{eq:conditional-kl-reweighting} vanishes identically.
\end{enumerate}
\end{lemma}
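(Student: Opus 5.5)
The plan is to write $q_\lambda$ as a pointwise density ratio against $p$ and push that ratio through marginalization and conditioning. Because $\cX_+$ and $\cX_-$ are disjoint and $p^\tau = p(\cdot\mid\cX_\tau)$, for every $x\in\cX$ we have
\begin{equation*}
q_\lambda(x) = \rho(x)\,p(x),\qquad \rho(x):=\tfrac{\lambda}{w}\mathbf 1\{x\in\cX_+\}+\tfrac{1-\lambda}{1-w}\mathbf 1\{x\in\cX_-\}.
\end{equation*}
So $q_\lambda$ is $p$ tilted by a function of the mode indicator $Z$ alone.

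\textbf{Part (i).} Summing over $x_{V^\complement}$ and splitting by mode gives
\begin{equation*}
q_{\lambda}(X_V=x_V,Z=1)=\tfrac{\lambda}{w}\,p(X_V=x_V,Z=1)=\tfrac{\lambda}{w}\,p_V(x_V)\beta_V(x_V),
\end{equation*}
and analogously for $Z=0$ with factor $\tfrac{1-\lambda}{1-w}$. Adding the two gives $q_{\lambda,V}(x_V)$. This is positive since $x_V\in\supp(p_V)$ and $\lambda\in(0,1)$, so conditioning under $q_\lambda$ is legitimate. Dividing the $Z=1$ term by $q_{\lambda,V}(x_V)$ and cancelling $p_V(x_V)$ yields \Cref{eq:posterior-reweighting}.

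\textbf{Part (ii).} When $\beta_V\in(0,1)$, both terms in the denominator of \Cref{eq:posterior-reweighting} are positive. Taking the log-ratio of $\beta_{\lambda,V}$ to $1-\beta_{\lambda,V}$ gives $\log\frac{\beta_V}{1-\beta_V}+\log\frac{\lambda}{1-\lambda}-\log\frac{w}{1-w}$, which is \Cref{eq:posterior-logit-shift}.

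\textbf{Part (iii).} This is the step needing most care. Here $x_{V^\complement}$ is the masked block, so the full configuration $x=(x_V,x_{V^\complement})$ determines $Z$. Hence
\begin{equation*}
q_{\lambda,V^\complement}(x_{V^\complement}\mid x_V)=\frac{\rho(x)\,p(x)}{q_{\lambda,V}(x_V)}=p_{V^\complement}(x_{V^\complement}\mid x_V)\cdot\frac{\rho(x)\,p_V(x_V)}{q_{\lambda,V}(x_V)}.
\end{equation*}
By part (i), the last factor equals $\beta_{\lambda,V}(x_V)/\beta_V(x_V)$ when $x\in\cX_+$ and $(1-\beta_{\lambda,V}(x_V))/(1-\beta_V(x_V))$ when $x\in\cX_-$. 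This holds whenever the relevant denominator is nonzero, which is automatic on the support of $p_{V^\complement}(\cdot\mid x_V)$.

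The log-likelihood ratio $\log\big(p_{V^\complement}(\cdot\mid x_V)/q_{\lambda,V^\complement}(\cdot\mid x_V)\big)$ therefore depends on $x_{V^\complement}$ only through $Z$. Taking the expectation under $p_{V^\complement}(\cdot\mid x_V)$, the event $Z=1$ has probability $\beta_V(x_V)$. This gives
\begin{equation*}
\beta_V\log\frac{\beta_V}{\beta_{\lambda,V}}+(1-\beta_V)\log\frac{1-\beta_V}{1-\beta_{\lambda,V}}=\kl(\beta_V\|\beta_{\lambda,V}).
\end{equation*}
The degenerate case follows from \Cref{eq:posterior-reweighting}: if $\beta_V=1$ the second denominator term vanishes, giving $\beta_{\lambda,V}=1$; if $\beta_V=0$ the numerator vanishes, giving $\beta_{\lambda,V}=0$. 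The ratio factor is then identically $1$ on the support, so the KL is $0$.

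The main obstacle is bookkeeping rather than ideas. One must handle zero-probability terms with the $0\log0=0$ convention, and confirm that the ratio factor is only ever evaluated on the support, where no division by zero occurs.
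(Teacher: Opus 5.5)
Your proposal is correct and follows essentially the paper's argument: the same density-ratio computation gives (i) and (ii), and for (iii) you simply inline the shared-component identity of \Cref{lem:shared-mix} by showing directly that the conditional likelihood ratio depends on $x_{V^\complement}$ only through $Z$, where the paper first writes both conditionals as mixtures of the fiber components $\nu_\pm$ and then cites the lemma. A minor benefit of your inline computation is that it covers the boundary posteriors $\beta_V\in\{0,1\}$ uniformly, whereas the paper handles them in a separate step (there $\nu_\pm$ may be undefined).
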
 

\begin{proposition}[Mode-weight information decomposition]
\label{prop:info-decomp}
Let $q_\lambda$ be defined as in \Cref{eq:reweighted-law}. For every mask $K\subseteq[N]$ with visible context coordinates $K^{\complement}=[N]\setminus K$,
\begin{align}
\mathfrak D_K(p\| q_\lambda)
&\;=\;
\E_{X_{K^{\complement}}\sim p_{K^{\complement}}}\big[\kl\big(\beta_{K^{\complement}}(X_{K^{\complement}})\,\|\,\beta_{\lambda, K^{\complement}}(X_{K^{\complement}})\big)\big],
\label{eq:decomp-binary}\\
\kl(w\|\lambda)
&\;=\;
\KL(p_{K^{\complement}}\| q_{\lambda,K^{\complement}}) + \mathfrak D_K(p\| q_\lambda).
\label{eq:decomp-chain}
\end{align}
\end{proposition}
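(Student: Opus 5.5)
The first identity follows directly from \Cref{lem:logit-shift}(iii). Take $V=K^{\complement}$, so that $V^\complement=K$. For every visible context $x_{K^{\complement}}\in\supp(p_{K^{\complement}})$, the inner KL divergence in \Cref{eq:fixed-mask-discrepancy} equals $\kl\big(\beta_{K^{\complement}}(x_{K^{\complement}})\,\|\,\beta_{\lambda,K^{\complement}}(x_{K^{\complement}})\big)$. Averaging over $X_{K^{\complement}}\sim p_{K^{\complement}}$ gives \Cref{eq:decomp-binary}. Contexts outside $\supp(p_{K^{\complement}})$ have zero weight, so they do not contribute. When $\beta\in\{0,1\}$ the lemma gives a vanishing divergence, and this agrees with $\kl(\beta\|\beta)=0$.

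For \Cref{eq:decomp-chain}, I will first compute $\KL(p\|q_\lambda)$ exactly. Since $p=w\,p^++(1-w)\,p^-$ and the supports $\cX_\pm$ are disjoint, the density ratio $p/q_\lambda$ is constant on each mode region: it equals $w/\lambda$ on $\cX_+$ and $(1-w)/(1-\lambda)$ on $\cX_-$. Hence $\KL(p\|q_\lambda)=w\log\frac{w}{\lambda}+(1-w)\log\frac{1-w}{1-\lambda}=\kl(w\|\lambda)$. Next I apply the chain rule for KL divergence to the split $X=(X_{K^{\complement}},X_K)$:
\[
\KL(p\|q_\lambda)=\KL(p_{K^{\complement}}\|q_{\lambda,K^{\complement}})+\E_{X_{K^{\complement}}\sim p_{K^{\complement}}}\big[\KL\big(p_K(\cdot\mid X_{K^{\complement}})\,\|\,q_{\lambda,K}(\cdot\mid X_{K^{\complement}})\big)\big].
\]
The second term is $\mathfrak D_K(p\|q_\lambda)$ by definition, which yields \Cref{eq:decomp-chain}.

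The only delicate point is checking that the chain rule is valid, i.e.\ that every term is finite and well defined. Because $\supp(q_\lambda)=\supp(p)$, we have $p\ll q_\lambda$. This absolute continuity passes to the marginals on $K^{\complement}$, and to the conditionals on every context with $p_{K^{\complement}}(x_{K^{\complement}})>0$. On a finite space all terms are then finite, and the chain rule holds as an exact identity.

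As an optional consistency check, one can also verify the visible-marginal term directly. The ratio $p_{K^{\complement}}/q_{\lambda,K^{\complement}}$ equals $\beta/(\lambda\beta/w+(1-\lambda)(1-\beta)/(1-w))$, which is the normalizer appearing in \Cref{eq:posterior-reweighting}. Expanding $\kl(w\|\lambda)-\E[\kl(\beta\|\beta_\lambda)]$ with the logit shift \Cref{eq:posterior-logit-shift} reproduces the same identity. The chain-rule argument above, however, does not need this computation.
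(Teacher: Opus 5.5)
Your proof is correct and follows the paper's route: \Cref{lem:logit-shift}(iii) averaged over $X_{K^{\complement}}\sim p_{K^{\complement}}$ gives the first identity, and the KL chain rule combined with $\KL(p\|q_\lambda)=\kl(w\|\lambda)$ gives the second (the paper obtains this joint identity from \Cref{lem:shared-mix}; you compute the piecewise-constant density ratio directly, which is the same computation). One small slip in your optional remark: the visible-marginal ratio is $p_{K^{\complement}}/q_{\lambda,K^{\complement}}=1/\bigl(\tfrac{\lambda}{w}\beta+\tfrac{1-\lambda}{1-w}(1-\beta)\bigr)$, the reciprocal of that normalizer with no extra factor $\beta$; this does not affect your main argument.
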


\begin{remark}[Information allocation under conditioning]
\label{rem:info-allocation}
\Cref{eq:decomp-chain} provides an \emph{exact conservation law} for mode-weight distinguishability: the marginal KL divergence on the visible context represents the information already resolved, whereas the masked discrepancy captures only the residual distinguishability accessible to conditional prediction. When visible context nearly resolves the mode identity, almost all global distinguishability is absorbed by the visible marginal, leaving a correspondingly small signal for the loss. Conversely, at the full-mask endpoint ($K=[N]$), the visible context is empty and absorbs no information, yielding $\mathfrak D_{[N]}(p\|q_\lambda)=\kl(w\|\lambda)$. This conservation law governs both sides of our analysis: mode pinning diminishes the residual training signal, while preserving mode uncertainty restores objective-level sensitivity to the global structure.
\end{remark}

\section{Mode Blindness and Schedule-Controlled Recovery}
\label{sec:main}

We now characterize how masking controls the visibility of global mode weights. Large visible contexts make mode reweighting exponentially hard to detect, low-visibility masks restore this sensitivity, and the full-mask endpoint yields universal control across all admissible models.

\subsection{Mode Blindness of Masked Prediction}
\label{sec:mode-blindness}

Mode blindness arises when large visible contexts leave exponentially little uncertainty about the active global mode. We formalize this geometry through the following mode-pinning condition.

\begin{assumption}[Large-context mode pinning]
\label{ass:mode-pinning}
There exist constants $c_0\in(0,1/2]$, $c_1,\kappa>0$, and a visibility threshold $s_{\mathrm{pin}}\in[N]$ such that two conditions hold. First, the global mode weights are strictly bounded away from zero: $p(\cX_\tau)\ge c_0$ for both modes $\tau\in\{+,-\}$. Second, for every visible set $V\subseteq[N]$ of size $|V|\ge s_{\mathrm{pin}}$ and each mode $\tau$, the typical set of visible contexts,
\begin{equation}
\cE^\tau_V
:=
\big\{
x_V\in\supp(p^\tau_V):
p(\cX_{-\tau}\mid X_V=x_V)
\le
e^{-\kappa|V|}
\big\},
\label{eq:typical-context-set}
\end{equation}
has probability at least $1-e^{-c_1|V|}$ under $p^\tau_V$.
\end{assumption}

For asymptotic statements under mode pinning, the constants $c_0,c_1,\kappa$ are independent of $N$. Throughout, we denote the combined decay rate by $c:=\min\{c_1,\kappa\}$, and note that \Cref{ass:mode-pinning} bounds the true mode weight within $w\in[c_0,1-c_0]$. Because this condition holds for every visible coordinate set $V$ satisfying $|V|\ge s_{\mathrm{pin}}$, it represents a property of the data distribution $p$, establishing a schedule-independent geometry against which different masking schedules are evaluated. See \Cref{app:modepinning-just} for how \Cref{ass:mode-pinning} links to low-temperature spin systems and natural data.

\begin{remark}[Mode-pinning bottleneck]
\label{rem:mode-pinning}
A visible context typical of mode $\cX_\tau$ assigns exponentially small posterior probability to the opposite mode. Consequently, a block heat-bath update that conditions on such context remains trapped in $\cX_\tau$ with overwhelming probability. Cross-mode transitions are exponentially unlikely on typical large contexts; low-visibility masks and atypical contexts contribute the remaining terms in the transition bound. Concretely, letting $\widetilde X$ denote one step of the data block dynamics (\Cref{eq:block-update}), \Cref{lem:cross-mode-transition} yields
\begin{equation}
\Pp(
\widetilde{X}\in\cX_{-\tau}
|
X\in\cX_\tau
)
\le
\pi_s(\mu)+2e^{-cs},
\qquad\text{for every}\quad
s\ge s_{\mathrm{pin}}.
\label{eq:crossing-endpoint}
\end{equation}
When the schedule assigns little probability to low-visibility contexts, the partition $\cX_+\sqcup\cX_-$ forms a low-flow bottleneck. This metastable reading motivates the slow-mixing interpretation. Our objective-level arguments below use only the static geometry.
\end{remark}

By \Cref{prop:info-decomp}, the discrepancy of a reweighted distribution corresponds to the expected binary KL divergence between the posterior mode probabilities under $p$ and $q_\lambda$. The family $\{q_\lambda\}$ therefore isolates the global mode-weight direction: every member preserves the within-mode distributions of $p$ and perturbs only their relative proportion. Under \Cref{ass:mode-pinning}, large visible contexts leave exponentially small uncertainty regarding the mode, rendering the posterior mode probability nearly invariant to perturbations in the prior weight. Averaging over contexts and masks yields the following result (see also \Cref{app:M-modes} for the extension to $M\ge2$ modes):

\begin{theorem}[Mode blindness of masked prediction]
\label{thm:mode-blind}
Suppose \Cref{ass:mode-pinning} holds with combined rate $c=\min\{c_1,\kappa\}$. Consider the mode-reweighting family $\{q_\lambda\in \cQ(p):\lambda\in I_\mathrm{bl}\}$, where $q_\lambda$ is defined in \Cref{eq:reweighted-law} and $I_\mathrm{bl}:=[c_0/2,\,1-c_0/2]$. There exists a constant $C\le 6c_0^{-2}$, depending only on $c_0$, such that every mask schedule $\mu$ with $v_{\mathrm{min}}(\mu)\ge s_{\mathrm{pin}}$ satisfies
\begin{equation}
\sup\nolimits_{\lambda\in I_\mathrm{bl}}
\mathfrak D_\mu(p\| q_\lambda)
\;\le\;
C\,\E_{K\sim\mu}\big[e^{-c|K^{\complement}|}\big]
\;\le\;
C e^{-c\,v_{\mathrm{min}}(\mu)}.
\label{eq:mode-blindness}
\end{equation}
Hence, the modulus (\Cref{def:modulus}) remains macroscopic even at this exponentially small excess risk:
\begin{equation}
\modl_{\TV}^{\mu}(p,\varepsilon)
\;\ge\;
\frac{1-c_0}{2},
\qquad
\text{for every}\quad
\varepsilon\ge C\,\E_{K\sim\mu}\big[e^{-c|K^{\complement}|}\big].
\label{eq:mode-blindness-modulus}
\end{equation}
\end{theorem}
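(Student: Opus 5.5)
Fix a mask $K$ in the support of $\mu$ and write $V:=K^{\complement}$, so $|V|\ge v_{\mathrm{min}}(\mu)\ge s_{\mathrm{pin}}$. By \Cref{prop:info-decomp}, $\mathfrak D_K(p\|q_\lambda)=\E_{X_V\sim p_V}[\kl(\beta_V\|\beta_{\lambda,V})]$. The plan is to bound this binary KL pointwise by a constant multiple of the residual uncertainty $\beta_V(1-\beta_V)$. The key step is the logit tilt of \Cref{lem:logit-shift}(ii). Write $\delta:=\logit(\lambda)-\logit(w)$. Because $w\in[c_0,1-c_0]$ and $\lambda\in I_{\mathrm{bl}}$, the ratios $\lambda/w$, $(1-\lambda)/(1-w)$ and their inverses are all bounded by $2/c_0$. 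By the chain rule for KL along the tilt (or directly from \Cref{eq:posterior-reweighting}), $\kl(\beta\|\beta_\lambda)\le \chi^2(\beta\|\beta_\lambda)=(\beta-\beta_\lambda)^2/(\beta_\lambda(1-\beta_\lambda))$. From \Cref{eq:posterior-reweighting}, $|\beta-\beta_\lambda|\le |e^{\delta}-1|\,\beta(1-\beta)\cdot(\text{bounded factor})$ and $\beta_\lambda(1-\beta_\lambda)\ge (c_0/2)^2\,\beta(1-\beta)\cdot(\text{bounded})$. Combining these gives $\kl(\beta\|\beta_\lambda)\le C_0\,\beta(1-\beta)$ with $C_0$ depending only on $c_0$. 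The constant bookkeeping aimed at $6c_0^{-2}$ comes later. The degenerate cases $\beta\in\{0,1\}$ give zero by \Cref{lem:logit-shift}(iii).

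Next I bound $\E_{p_V}[\beta_V(1-\beta_V)]=\mmse_p(Z\mid X_V)$ using \Cref{ass:mode-pinning}. I decompose $p_V=w\,p^+_V+(1-w)\,p^-_V$. For $x_V\in\cE^+_V$ we have $1-\beta_V(x_V)\le e^{-\kappa|V|}$, and symmetrically on $\cE^-_V$ we have $\beta_V\le e^{-\kappa|V|}$. On the atypical sets I use $\beta(1-\beta)\le 1/4$, and these sets have $p^\tau_V$-mass at most $e^{-c_1|V|}$. Hence
\[
\mmse_p(Z\mid X_V)\le e^{-\kappa|V|}+\tfrac14 e^{-c_1|V|}\le \tfrac54 e^{-c|V|}.
\]
One subtlety needs checking: a context typical for mode $+$ is evaluated under $p^+_V$. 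Since $\beta(1-\beta)\le\min(\beta,1-\beta)$, the bound $1-\beta\le e^{-\kappa|V|}$ suffices. Averaging over $K\sim\mu$ gives the first inequality of \Cref{eq:mode-blindness} with $C=\tfrac54 C_0$, uniformly in $\lambda\in I_{\mathrm{bl}}$. The second inequality follows because $|K^{\complement}|\ge v_{\mathrm{min}}(\mu)$ almost surely.

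For the modulus, I choose the endpoint of $I_{\mathrm{bl}}$ farthest from $w$. Since the supports are disjoint, $\DTV(p,q_\lambda)=|w-\lambda|$. If $w\le 1/2$, I take $\lambda=1-c_0/2$, which gives $|w-\lambda|\ge 1/2-c_0/2=(1-c_0)/2$; the case $w>1/2$ is symmetric. This $q_\lambda$ lies in $\cQ(p)$ and satisfies $\mathfrak D_\mu(p\|q_\lambda)\le\varepsilon$, so the sup in \Cref{def:modulus} is at least $(1-c_0)/2$.

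The main obstacle is the pointwise inequality $\kl(\beta\|\beta_\lambda)\lesssim\beta(1-\beta)$ with an explicit constant of order $c_0^{-2}$. My first approach is the $\chi^2$ bound, writing $\beta_\lambda=a\beta/(a\beta+b(1-\beta))$ with $a=\lambda/w$, $b=(1-\lambda)/(1-w)$. This gives $\beta-\beta_\lambda=\beta(1-\beta)(b-a)/(a\beta+b(1-\beta))$ and $\beta_\lambda(1-\beta_\lambda)=ab\,\beta(1-\beta)/(a\beta+b(1-\beta))^2$. Hence $\chi^2=(b-a)^2\beta(1-\beta)/(ab)$, and bounding $(b-a)^2/(ab)$ using $a,b\in[c_0/2\cdot\ldots,\,2/c_0]$ yields the required $O(c_0^{-2})$ constant.
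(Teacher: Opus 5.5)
Your proof is correct, but its key fiberwise step differs from the paper's. Write $\delta$ for your logit shift (the paper's $\Delta_\lambda$), $B_0:=\log\frac{(2-c_0)(1-c_0)}{c_0^2}\ge|\delta|$ and $A_0:=e^{B_0}-1+B_0$.

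\textbf{The paper's route.} It writes $\kl(\beta\|\beta_\lambda)=\log(1-\beta+\beta e^{\delta})-\beta\delta$ and bounds this by $\min\{B_0,\,A_0\min\{\beta,1-\beta\}\}$: a uniform cap plus a term linear in the smaller posterior probability. It then splits the context average over the pooled confidence set of \Cref{lem:pooled}. On that set it pays $A_0e^{-\kappa|V|}$, and off it the cap $B_0$, giving $C=A_0+B_0$.

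\textbf{Your route.} You dominate the binary KL by $\chi^2$ and use the exact identity $\chi^2(\beta\|\beta_\lambda)=(e^{\delta}+e^{-\delta}-2)\,\beta(1-\beta)$. Each fixed-mask discrepancy is then at most a constant times $\mmse_p(Z\mid X_{K^\complement})$. You bound that MMSE by $\frac54e^{-c|K^\complement|}$, which is the paper's \Cref{cor:mmse-upper}, reproved via the mode-wise decomposition of $p_V$.

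\textbf{What each buys.} Your route makes mode blindness a direct instance of the \emph{discrepancy $\lesssim$ residual uncertainty} mechanism behind \Cref{lem:sensitivity} and \Cref{thm:mode-sensitivity}. It needs neither the $\kl(w\|\lambda)$ factor nor interval constants, and it yields $C=\frac54(e^{B_0}+e^{-B_0}-2)\le\frac54e^{B_0}\le\frac52c_0^{-2}$, comfortably within $6c_0^{-2}$. The paper's cap-plus-linear fiber bound is the piece it reuses for the reverse-KL estimate (\Cref{lem:reverse-blindness}) and the $M$-mode vertex bound (\Cref{lem:M-vertex}).

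\textbf{Two small corrections.} First, $\kl\le\chi^2$ follows from Jensen, $\KL\le\log(1+\chi^2)\le\chi^2$, not from a chain rule along the tilt. Second, in the constant bookkeeping, use $(b-a)^2/(ab)=b/a+a/b-2$ with $b/a=e^{-\delta}$, so the ratio is at most $e^{B_0}\le 2c_0^{-2}$. Bounding $(b-a)^2$ and $ab$ separately from $a,b\in[c_0/2,\,2/c_0]$ only gives $O(c_0^{-4})$, which would not meet the stated $6c_0^{-2}$.
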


Together, \Cref{eq:mode-blindness,eq:mode-blindness-modulus} establish the exponential instability of population identification: a constant mode-weight error (and therefore constant total-variation distance) persists at a masked excess risk that decays exponentially with the minimum visible context size. Because the reweighting family preserves all within-mode conditionals, this failure isolates a global blind direction rather than an optimization failure. This construction also demonstrates why the AT recovery guarantee in \Cref{sec:mixing} must deteriorate under this well-separated data regime:

\begin{corollary}[Exponential lower bounds on block AT constants]
\label{cor:at-blowup}
Suppose \Cref{ass:mode-pinning} holds with combined rate $c=\min\{c_1,\kappa\}$. Let $\mu$ be a mask schedule with $v_{\mathrm{min}}(\mu)\ge s_{\mathrm{pin}}$. Then there exists $\lambda\in I_{\mathrm{bl}}$ such that
\begin{equation}
\bar C_{\AT}(q_\lambda,\mu)
\;\ge\;
\frac{c_0^2(1-c_0)^2}{12}\; e^{c\, v_{\mathrm{min}}(\mu)}
\quad\text{and}\quad
\bar C_{\AT}(p,\mu)
\;\ge\;
\frac{c_0^{4}(1-c_0)^2}{32}\; e^{c\, v_{\mathrm{min}}(\mu)}.
\label{eq:at-blowup-bounds}
\end{equation}
\end{corollary}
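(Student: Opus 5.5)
The plan is to plug suitable test pairs into the definition of the AT constant (\Cref{eq:at-def}). Each test pair has a joint KL of order one and a masked discrepancy that \Cref{thm:mode-blind} bounds by $C e^{-c\,v_{\mathrm{min}}(\mu)}$ with $C\le 6c_0^{-2}$. Dividing the joint KL by the discrepancy then bounds the AT constant from below.

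For the bound on $\bar C_{\AT}(q_\lambda,\mu)$, I would take reference measure $r=q_\lambda$ and test distribution $p'=p$. Then $\supp(p)=\supp(q_\lambda)$, so the pair is admissible, and the disjointness of the mode supports gives $\KL(p\|q_\lambda)=\kl(w\|\lambda)$; this is also \Cref{eq:decomp-chain} with $K=[N]$. I would choose $\lambda\in I_{\mathrm{bl}}$ as far from $w$ as possible. Since $w\in[c_0,1-c_0]$, one of the endpoints $c_0/2$ or $1-c_0/2$ is at distance at least $(1-c_0)/2$ from $w$. Pinsker's inequality $\kl(w\|\lambda)\ge 2(w-\lambda)^2$ then gives $\kl(w\|\lambda)\ge (1-c_0)^2/2$. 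Combining with \Cref{eq:mode-blindness},
\[
\bar C_{\AT}(q_\lambda,\mu)\ \ge\ \frac{(1-c_0)^2/2}{6c_0^{-2}e^{-c\,v_{\mathrm{min}}(\mu)}}=\frac{c_0^2(1-c_0)^2}{12}\,e^{c\,v_{\mathrm{min}}(\mu)},
\]
which is the first bound.

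For the bound on $\bar C_{\AT}(p,\mu)$, the roles reverse: reference $r=p$ and test distribution $p'=q_\lambda$. I need $\mathfrak D_\mu(q_\lambda\|p)$, which is not directly covered by \Cref{thm:mode-blind}. The key observation is that $p$ is itself a member of the reweighting family relative to $q_\lambda$. Writing $p=w\,q_\lambda^+ +(1-w)\,q_\lambda^-$, where $q_\lambda^\pm=p^\pm$ are the same within-mode laws, \Cref{lem:logit-shift} and \Cref{prop:info-decomp} apply with the roles of $(w,\lambda)$ swapped. This gives
\[
\mathfrak D_K(q_\lambda\|p)=\E_{q_{\lambda,K^\complement}}\big[\kl(\beta_{\lambda,K^\complement}\|\beta_{K^\complement})\big].
\]
I would then rerun the argument of \Cref{thm:mode-blind} with base measure $q_\lambda$, since the pinning estimate only needs three things:
\begin{itemize}
\item the within-mode typical sets $\cE^\tau_V$, which are unchanged because the within-mode laws agree;
\item the opposite-mode posterior on typical contexts, which under $q_\lambda$ is inflated by the logit tilt of \Cref{eq:posterior-logit-shift} by at most a factor $\frac{1-c_0/2}{c_0/2}\cdot\frac{1-c_0}{c_0}$-type constant, hence still $O(c_0^{-2})e^{-\kappa|V|}$;
\item mode weights of the base measure bounded below, which for $\lambda\in I_{\mathrm{bl}}$ holds with $c_0/2$ in place of $c_0$.
\end{itemize}
Tracking constants, this should give $\mathfrak D_\mu(q_\lambda\|p)\le C' e^{-c\,v_{\mathrm{min}}(\mu)}$ with $C'\lesssim 16c_0^{-4}$. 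The numerator is $\KL(q_\lambda\|p)=\kl(\lambda\|w)\ge(1-c_0)^2/2$ by Pinsker again. The ratio then gives the stated $c_0^4(1-c_0)^2/32$.

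The main obstacle is this second, reversed-direction bound. The constants in \Cref{thm:mode-blind} depend on $c_0$ through the base measure's mode weights and the posterior size on typical sets, so I must redo the per-context estimate. On typical contexts I would bound $\kl(a\|b)\le (a-b)^2/(b(1-b))$ (the $\chi^2$ bound) together with the logit-tilt control of $|a-b|$. On atypical contexts I would use the crude bound $\kl\le\log(1/\min\text{weight})$ times the atypical mass $e^{-c_1|V|}$. I would also check that a single $\lambda$, the far endpoint, serves both inequalities, which it does since both numerators use the same $|w-\lambda|\ge(1-c_0)/2$.
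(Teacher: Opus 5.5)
Your proposal is correct and follows the paper's proof. It uses the far endpoint of $I_{\mathrm{bl}}$, Pinsker for both joint divergences, and \Cref{thm:mode-blind} on the model side. On the data side it uses the reverse-direction bound of \Cref{lem:reverse-blindness}, which the paper proves exactly as you sketch: binary reduction averaged under $q_{\lambda,K^\complement}$, the smaller posterior inflated by at most $2e^{B_0}$ on the pooled confidence set, and atypical mass still at most $e^{-c_1|K^\complement|}$ under $q_{\lambda,K^\complement}$, giving $C_2=2e^{B_0}A_0+B_0\le16c_0^{-4}$.

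The one step to pin down is the atypical-context bound: there you should use $\kl(\beta_{\lambda,K^\complement}\|\beta_{K^\complement})\le|\Delta_\lambda|\le B_0$. A bound through $\log(1/\min\{\beta_{K^\complement},1-\beta_{K^\complement}\})$ is only of order $\kappa|K^\complement|$ on those contexts, so it would not give a constant depending on $c_0$ alone.
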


\begin{remark}[Interpretation of the AT lower bounds]
The two bounds in \Cref{cor:at-blowup} play distinct roles. Set
\(
\varepsilon_\mu:=C e^{-c\,v_{\mathrm{min}}(\mu)},
\)
with $C$ from \Cref{thm:mode-blind}. Since the reweighted distribution $q_\lambda$ lies in the $\varepsilon_\mu$-discrepancy ball, the first bound yields
\(
\bar C_{\AT}^{\varepsilon_\mu}(p,\mu)\,\varepsilon_\mu
\ge
\bar C_{\AT}(q_\lambda,\mu)\,\varepsilon_\mu
\ge
\frac{C c_0^2(1-c_0)^2}{12}.
\)
Thus, \Cref{eq:at-upper} cannot certify vanishing joint-distribution error at the mode-blindness scale. The second bound shows that this blow-up is not caused solely by the reweighted alternative.Along any sequence with $v_{\mathrm{min}}(\mu_N)=\Omega(N)$, the data-side AT constants grow at least as $e^{\Omega(N)}$. Hence AT-based guarantees requiring constants uniformly bounded in $N$ cannot cover this family.
\end{remark}

\subsection{Residual Mode Uncertainty Governs Mode-Weight Sensitivity}
\label{sec:mode-sensitivity}

Mode blindness demonstrates that large visible contexts suppress objective-level sensitivity to global mode weights. The information decomposition in \Cref{prop:info-decomp} isolates the quantity governing what remains: the residual uncertainty about the mode after conditioning on the visible context.

\begin{lemma}[Reweighting sensitivity bounds]
\label{lem:sensitivity}
Fix a compact interval $I=[a,b]\subset(0,1)$ and define
\[
\delta_I:=\logit(b)-\logit(a),
\qquad
m_I:=\min\nolimits_{u\in I} u(1-u),
\qquad
c_I:=4e^{-\delta_I},
\qquad
C_I:=\frac{e^{\delta_I}}{m_I}.
\]
Whenever $w,\lambda\in I$, every mask $K\subseteq[N]$ with visible context $K^{\complement}$ satisfies
\begin{equation}
c_I\mmse_p(Z\mid X_{K^{\complement}})\,\kl(w\|\lambda)
\;\le\;
\mathfrak D_K(p\| q_\lambda)
\;\le\;
C_I\mmse_p(Z\mid X_{K^{\complement}})\,\kl(w\|\lambda).
\label{eq:sensitivity}
\end{equation}
\end{lemma}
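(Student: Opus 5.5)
The plan is to reduce \Cref{eq:sensitivity} to a pointwise inequality in the visible context and then average. By \Cref{eq:decomp-binary} in \Cref{prop:info-decomp}, $\mathfrak D_K(p\|q_\lambda)=\E_{X_{K^\complement}\sim p_{K^\complement}}[\kl(\beta\|\beta_\lambda)]$, where $\beta=\beta_{K^\complement}(X_{K^\complement})$ and $\beta_\lambda=\beta_{\lambda,K^\complement}(X_{K^\complement})$. By \Cref{eq:mode-mmse}, $\mmse_p(Z\mid X_{K^\complement})=\E[\beta(1-\beta)]$. It therefore suffices to prove, for every $\beta\in[0,1]$,
\begin{equation*}
c_I\,\beta(1-\beta)\,\kl(w\|\lambda)\;\le\;\kl(\beta\|\beta_\lambda)\;\le\;C_I\,\beta(1-\beta)\,\kl(w\|\lambda).
\end{equation*}
The endpoint cases $\beta\in\{0,1\}$ are trivial, since \Cref{lem:logit-shift}(iii) gives $0$ on both sides.

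For $\beta\in(0,1)$, set $t:=\logit(\lambda)-\logit(w)$. By \Cref{lem:logit-shift}(ii), $\beta_\lambda=\beta e^t/(\beta e^t+1-\beta)$. Substituting directly gives
\begin{equation*}
\kl(\beta\|\beta_\lambda)=\log(1-\beta+\beta e^{t})-\beta t=:\psi_\beta(t),
\end{equation*}
which is the centered log-moment generating function of a Bernoulli$(\beta)$ variable. Since $\psi_\beta(0)=\psi_\beta'(0)=0$ and $\psi_\beta''(s)=\beta_s(1-\beta_s)$, where $\beta_s$ is $\beta$ logit-tilted by $s$, Taylor's formula with integral remainder gives
\begin{equation*}
\psi_\beta(t)=\int_0^t (t-s)\,\beta_s(1-\beta_s)\,\dd s .
\end{equation*}
The same identity applied with $\beta=w$ gives $\kl(w\|\lambda)=\psi_w(t)$, because tilting $w$ by $t$ yields exactly $\lambda$. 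Both sides of the target inequality are thus integrals of tilted Bernoulli variances against the same kernel $(t-s)$, with $\int_0^t(t-s)\,\dd s=t^2/2$.

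The key comparison step is the ratio identity
\begin{equation*}
\frac{\beta_s(1-\beta_s)}{\beta(1-\beta)}=\frac{e^s}{(1-\beta+\beta e^s)^2}\in\big[e^{-|s|},e^{|s|}\big].
\end{equation*}
Since $w,\lambda\in I$, we have $|s|\le|t|\le\delta_I$. Hence $e^{-\delta_I}\beta(1-\beta)\,t^2/2\le\kl(\beta\|\beta_\lambda)\le e^{\delta_I}\beta(1-\beta)\,t^2/2$.

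On the $w$ side:
\begin{itemize}
\item The tilted values $w_s$ lie between $w$ and $\lambda$, hence in $I$. This gives $w_s(1-w_s)\ge m_I$, so $\kl(w\|\lambda)\ge m_I t^2/2$.
\item Trivially $w_s(1-w_s)\le 1/4$, so $\kl(w\|\lambda)\le t^2/8$.
\end{itemize}
Combining these, the lower bound becomes $\kl(\beta\|\beta_\lambda)\ge e^{-\delta_I}\beta(1-\beta)\cdot 4\,\kl(w\|\lambda)$, which gives $c_I=4e^{-\delta_I}$. The upper bound becomes $\kl(\beta\|\beta_\lambda)\le (e^{\delta_I}/m_I)\,\beta(1-\beta)\,\kl(w\|\lambda)$, which gives $C_I$. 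Averaging over $X_{K^\complement}\sim p_{K^\complement}$ then yields \Cref{eq:sensitivity}.

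The main obstacle I expect is establishing the uniform two-sided ratio bound $e^{\pm|s|}$ for tilted variances. This is what makes the constants independent of $\beta$, which may be arbitrarily close to $0$ or $1$. It is a one-line check: $1-\beta+\beta e^s$ lies between $\min(1,e^s)$ and $\max(1,e^s)$. Beyond that, the only care needed is to track the sign of $t$ in the integral representation; writing everything in terms of $|t|$ resolves it.
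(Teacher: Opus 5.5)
Your proposal is correct and follows the paper's proof essentially step for step. The shared steps are the fiberwise reduction via \Cref{prop:info-decomp}, the Taylor integral representation of $\kl(\beta\|\beta_t)$ along the logit path with second derivative $\beta_t(1-\beta_t)$, the uniform variance-ratio bound $e^{\pm|t|}$, and the prior-side sandwich $m_I t^2/2\le\kl(w\|\lambda)\le t^2/8$; eliminating $t^2$ with that sandwich yields $c_I=4e^{-\delta_I}$ and $C_I=e^{\delta_I}/m_I$, and your centered Bernoulli log-moment-generating-function reading is just a repackaging of the paper's direct derivative computation.
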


\begin{remark}[Exact local sensitivity coefficient]
\label{rem:sensitivity-exact}
The sensitivity bounds in \Cref{eq:sensitivity} require no mixing or visibility assumptions; they follow entirely from the bimodal mixture structure. Furthermore, for every mask $K$ with $\mmse_p(Z\mid X_{K^{\complement}})>0$, a second-order expansion as $\lambda\to w$ yields
\begin{equation}
\lim_{\lambda\to w}\frac{\mathfrak D_K(p\| q_\lambda)}{\mmse_p(Z\mid X_{K^{\complement}})\,\kl(w\|\lambda)}
=
\frac{1}{w(1-w)},
\label{eq:exact-sensitivity}
\end{equation}
independently of the mask $K$. The quantity $\mathcal I(w):=(w(1-w))^{-1}$ is the Fisher information of the Bernoulli mode weight. Thus, at fixed $w$, visible contexts with smaller residual mode MMSE have weaker local sensitivity to changes in the global mode weight. See \Cref{rem:exactly-solvable} for the interval dependence of \Cref{eq:sensitivity} and the underlying reduction.
\end{remark}

We next characterize residual mode uncertainty across the high- and low-visibility regimes. Mode pinning governs the large-context regime:

\begin{corollary}[MMSE upper bound under mode pinning]
\label{cor:mmse-upper}
Under \Cref{ass:mode-pinning}, every visible set $V\subseteq[N]$ satisfying $|V|\ge s_{\mathrm{pin}}$ obeys $\mmse_p(Z\mid X_V) \le \frac{5}{4}e^{-c|V|}$.
\end{corollary}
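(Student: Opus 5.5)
We bound $\mmse_p(Z\mid X_V)=\E_{X_V\sim p_V}[\beta_V(1-\beta_V)]$ by splitting the expectation according to the active mode and then according to whether the context is typical. Writing $p_V=w\,p^+_V+(1-w)\,p^-_V$, we have
\[
\mmse_p(Z\mid X_V)=w\,\E_{p^+_V}\big[\beta_V(1-\beta_V)\big]+(1-w)\,\E_{p^-_V}\big[\beta_V(1-\beta_V)\big].
\]
We treat each mode-$\tau$ term separately, with the typical set $\cE^\tau_V$ from \Cref{ass:mode-pinning}.

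On $\cE^\tau_V$ the posterior of the opposite mode satisfies $p(\cX_{-\tau}\mid X_V=x_V)\le e^{-\kappa|V|}$. This posterior is $1-\beta_V$ when $\tau=+$ and $\beta_V$ when $\tau=-$. Since the other factor of $\beta_V(1-\beta_V)$ is at most $1$, we get $\beta_V(1-\beta_V)\le e^{-\kappa|V|}\le e^{-c|V|}$ there. Off $\cE^\tau_V$ we use only the trivial bound $\beta_V(1-\beta_V)\le 1/4$. The assumption gives $p^\tau_V\big((\cE^\tau_V)^\complement\big)\le e^{-c_1|V|}\le e^{-c|V|}$. Points of $\supp(p_V)$ outside $\supp(p^\tau_V)$ carry no $p^\tau_V$-mass, so they cause no issue. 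Each mode term is therefore at most
\[
e^{-c|V|}+\tfrac14 e^{-c|V|}=\tfrac54 e^{-c|V|}.
\]

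Averaging with weights $w$ and $1-w$, which sum to one, gives $\mmse_p(Z\mid X_V)\le \frac54 e^{-c|V|}$ for every $|V|\ge s_{\mathrm{pin}}$.

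No step is a real obstacle. The one point to check is that the typical-set condition for mode $\tau$ bounds exactly the factor of $\beta_V(1-\beta_V)$ corresponding to the opposite mode. This is what makes the crude bound of $1$ on the remaining factor sufficient and yields the constant $5/4$ with no dependence on $c_0$.
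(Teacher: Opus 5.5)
Your proof is correct and is essentially the paper's argument. The paper first pools the two typical sets into a single confidence set $G_V$ (\Cref{lem:pooled}, itself proved by the same mixture decomposition $p_V=w\,p^+_V+(1-w)\,p^-_V$). It then splits $\E_{p_V}[\beta_V(1-\beta_V)]$ over $G_V$ and its complement, obtaining the same bound $e^{-\kappa|V|}+\tfrac14 e^{-c_1|V|}$; your version simply inlines that pooling step mode by mode.
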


At low visibility, recovery requires the complementary behavior---small contexts must retain non-trivial mode uncertainty:

\begin{assumption}[Low-visibility residual mode uncertainty]
\label{ass:low-vis-uncertainty}
There exist constants $u_0,L>0$ and a visibility scale $s_{\mathrm{ov}}\ge s_{\mathrm{pin}}$ such that every $V\subseteq[N]$ with $|V|\le s_{\mathrm{ov}}$ satisfies 
\begin{equation}
\mmse_p(Z\mid X_V)
\ge
u_0\, e^{-L|V|}.
\label{eq:low-vis-uncertainty}
\end{equation}
\end{assumption}

\Cref{ass:low-vis-uncertainty} requires residual mode uncertainty not to decay too rapidly in the low-visibility regime. This condition is compatible with large-context mode pinning: \Cref{app:both-assumptions} constructs an explicit distribution satisfying both assumptions with constants independent of $N$. \Cref{app:low-vis-sufficient,app:low-visibility-just} give further interpretations and sufficient conditions. Averaging the fixed-mask sensitivity bounds in \Cref{eq:sensitivity} over the mask schedule yields our sensitivity theorem:

\begin{theorem}[Schedule-controlled mode-weight sensitivity]
\label{thm:mode-sensitivity}
Fix a compact interval $I\subset(0,1)$ containing $w$, and let $c_I,C_I$ be the constants from \Cref{lem:sensitivity}. For every mask schedule $\mu$ and every parameter $\lambda\in I$,
\begin{equation}
c_I\,\kl(w\|\lambda)\,
\E_{K\sim\mu}\big[\mmse_p(Z\mid X_{K^{\complement}})\big]
\;\le\;
\mathfrak D_\mu(p\| q_\lambda)
\;\le\;
C_I\,\kl(w\|\lambda)\,
\E_{K\sim\mu}\big[\mmse_p(Z\mid X_{K^{\complement}})\big].
\label{eq:schedule-mmse-equivalence}
\end{equation}
Under \Cref{ass:low-vis-uncertainty}, every threshold $0\le s\le s_{\mathrm{ov}}$ satisfies
\begin{equation}
\mathfrak D_\mu(p\| q_\lambda)
\;\ge\;
c_I\,u_0\,e^{-Ls}\,\pi_s(\mu)\,\kl(w\|\lambda).
\label{eq:schedule-lower}
\end{equation}
Under \Cref{ass:mode-pinning}, every threshold $s\ge s_{\mathrm{pin}}$ satisfies
\begin{equation}
\mathfrak D_\mu(p\| q_\lambda)
\;\le\;
\frac{5C_I}{4}\Big(
\pi_s(\mu)
+
\E_{K\sim\mu}\big[e^{-c|K^{\complement}|}\,\mathbf 1\{|K^{\complement}|>s\}\big]
\Big)\,
\kl(w\|\lambda).
\label{eq:schedule-upper}
\end{equation}
\end{theorem}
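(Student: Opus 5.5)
The plan is to average the fixed-mask bounds of \Cref{lem:sensitivity} over the schedule, and then split the resulting schedule-averaged MMSE at the threshold $s$.

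\textbf{Equivalence \Cref{eq:schedule-mmse-equivalence}.} Since $w,\lambda\in I$, \Cref{eq:sensitivity} holds for every mask $K\subseteq[N]$, with constants $c_I,C_I$ that do not depend on $K$. We take the expectation over $K\sim\mu$ of both inequalities. The factor $\kl(w\|\lambda)$ does not depend on $K$, so it moves outside the expectation. By definition, $\mathfrak D_\mu(p\|q_\lambda)=\E_{K\sim\mu}[\mathfrak D_K(p\|q_\lambda)]$, and linearity of expectation then gives \Cref{eq:schedule-mmse-equivalence}. All quantities are nonnegative and $\mu$ is a finite distribution, so no integrability issue arises.

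\textbf{Lower bound \Cref{eq:schedule-lower}.} Start from the left inequality of \Cref{eq:schedule-mmse-equivalence}. Drop all masks with $|K^\complement|>s$; this is allowed because MMSE is nonnegative. On the event $|K^\complement|\le s\le s_{\mathrm{ov}}$, \Cref{ass:low-vis-uncertainty} applied with $V=K^\complement$ gives
\[
\mmse_p(Z\mid X_{K^\complement})\ge u_0e^{-L|K^\complement|}\ge u_0e^{-Ls}.
\]
Hence
\[
\E_{K\sim\mu}\bigl[\mmse_p(Z\mid X_{K^\complement})\bigr]\ge u_0e^{-Ls}\,\pi_s(\mu),
\]
which is the claim. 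The empty visible set is included: when $s=0$ this uses $\mmse_p(Z)=w(1-w)\ge u_0$, which the assumption covers with $|V|=0$.

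\textbf{Upper bound \Cref{eq:schedule-upper}.} Start from the right inequality of \Cref{eq:schedule-mmse-equivalence} and split the expectation on $\{|K^\complement|\le s\}$ and $\{|K^\complement|>s\}$.
\begin{itemize}
\item On the first event, use the trivial bound $\mmse_p(Z\mid X_V)\le \E[Z(1-Z)]+\ldots$; more directly, $\beta(1-\beta)\le 1/4\le 5/4$. This contributes at most $\tfrac54\pi_s(\mu)$.
\item On the second event, $|K^\complement|>s\ge s_{\mathrm{pin}}$, so \Cref{cor:mmse-upper} gives $\mmse_p(Z\mid X_{K^\complement})\le\tfrac54 e^{-c|K^\complement|}$.
\end{itemize}
Summing the two contributions and multiplying by $C_I\kl(w\|\lambda)$ yields \Cref{eq:schedule-upper}.

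No step is a genuine obstacle, since the analytic content sits in \Cref{lem:sensitivity} and \Cref{cor:mmse-upper}. The only care needed is bookkeeping. We must check that the hypotheses of \Cref{lem:sensitivity} require only $w,\lambda\in I$, with no condition on $K$, so that the uniform constants can be pulled through the expectation. We must also check that the constant $\tfrac54$ dominates the crude bound $1/4$ on the low-visibility part, which lets a single factor $\tfrac{5C_I}{4}$ multiply both terms.
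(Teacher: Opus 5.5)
Your proposal is correct and follows the paper's proof essentially step for step: average \Cref{lem:sensitivity} over $K\sim\mu$, then split the schedule-averaged MMSE at $|K^\complement|=s$, using \Cref{ass:low-vis-uncertainty} for the lower bound and $\beta(1-\beta)\le\tfrac14$ together with \Cref{cor:mmse-upper} for the upper bound. Delete the stray aside $\mmse_p(Z\mid X_V)\le\E[Z(1-Z)]+\ldots$, since $Z(1-Z)\equiv0$ makes it meaningless as a bound; the bound $\beta(1-\beta)\le\tfrac14$ that you actually use is the right one.
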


Here, \Cref{eq:schedule-mmse-equivalence} establishes the intrinsic sensitivity coefficient: up to constants, the excess risk along the reweighting direction equals the mode-weight divergence scaled by the schedule-averaged residual mode uncertainty. The threshold bounds in \Cref{eq:schedule-lower,eq:schedule-upper} translate this quantity into the schedule mass $\pi_s(\mu)$: low-visibility probability preserves objective sensitivity, whereas high-visibility contributions are exponentially suppressed by mode pinning. This low-visibility mass varies widely across modern masking paradigms (see \Cref{app:instantiations} for specific comparisons).

\begin{corollary}[Mode-weight recovery from low-visibility mass]
\label{cor:recovery}
Adopt the setting of \Cref{thm:mode-sensitivity} and the constants $c_I$, $C_I$, and $m_I$ of \Cref{lem:sensitivity}. Fix a mask schedule $\mu$, a weight $\lambda\in I$, and an excess-risk tolerance $\varepsilon\ge0$. Under \Cref{ass:low-vis-uncertainty}, for every $0\le s\le s_{\mathrm{ov}}$ with $\pi_s(\mu)>0$, the condition $\mathfrak D_\mu(p\| q_\lambda)\le\varepsilon$ guarantees
\begin{equation}
|w-\lambda|
\;\le\;
\frac{e^{Ls/2}}{\sqrt{2c_I u_0}}
\sqrt{\frac{\varepsilon}{\pi_s(\mu)}}.
\label{eq:recovery}
\end{equation}
Conversely, under \Cref{ass:mode-pinning}, for every $s\ge s_{\mathrm{pin}}$, the bound $\mathfrak D_\mu(p\| q_\lambda)\le\varepsilon$ holds whenever
\begin{equation}
\frac{5C_I}{4m_I}
\Big(
\pi_s(\mu)
+
\E_{K\sim\mu}\big[
e^{-c|K^{\complement}|}
\mathbf 1\{|K^{\complement}|>s\}
\big]
\Big)
|w-\lambda|^2
\le \varepsilon.
\label{eq:recovery-converse}
\end{equation}
\end{corollary}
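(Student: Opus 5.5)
Both parts follow from \Cref{thm:mode-sensitivity} once $\kl(w\|\lambda)$ is compared with $|w-\lambda|^2$ by two elementary binary-divergence inequalities. For the recovery bound \Cref{eq:recovery}, we use the Pinsker-type lower bound $\kl(w\|\lambda)\ge 2|w-\lambda|^2$, which holds for all $w,\lambda\in(0,1)$: this is Pinsker's inequality applied to two Bernoulli laws, whose total-variation distance is $|w-\lambda|$. For the converse \Cref{eq:recovery-converse}, we use the $\chi^2$ upper bound $\kl(w\|\lambda)\le \chi^2(w\|\lambda)=\frac{(w-\lambda)^2}{\lambda(1-\lambda)}\le \frac{|w-\lambda|^2}{m_I}$. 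This is valid because $\lambda\in I$, so $\lambda(1-\lambda)\ge m_I$ by the definition of $m_I$.

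\emph{Recovery direction.} Fix $0\le s\le s_{\mathrm{ov}}$ with $\pi_s(\mu)>0$, and assume $\mathfrak D_\mu(p\|q_\lambda)\le\varepsilon$. Since $w,\lambda\in I$, \Cref{eq:schedule-lower} gives
\[
\varepsilon\ge c_I u_0 e^{-Ls}\pi_s(\mu)\,\kl(w\|\lambda)\ge 2c_I u_0 e^{-Ls}\pi_s(\mu)\,|w-\lambda|^2.
\]
Dividing by $2c_Iu_0e^{-Ls}\pi_s(\mu)>0$ and taking square roots gives $|w-\lambda|\le e^{Ls/2}(2c_Iu_0)^{-1/2}\sqrt{\varepsilon/\pi_s(\mu)}$, which is exactly \Cref{eq:recovery}.

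\emph{Converse direction.} Fix $s\ge s_{\mathrm{pin}}$. By \Cref{eq:schedule-upper},
\[
\mathfrak D_\mu(p\|q_\lambda)\le \tfrac{5C_I}{4}\big(\pi_s(\mu)+\E_{K\sim\mu}[e^{-c|K^\complement|}\mathbf 1\{|K^\complement|>s\}]\big)\,\kl(w\|\lambda).
\]
Substituting $\kl(w\|\lambda)\le |w-\lambda|^2/m_I$ bounds the right-hand side by the left-hand side of \Cref{eq:recovery-converse}. Hence $\mathfrak D_\mu(p\|q_\lambda)\le\varepsilon$ whenever \Cref{eq:recovery-converse} holds.

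No step is a real obstacle. The only points needing care are that the $\chi^2$ bound is applied with $\lambda\in I$, which is where $m_I$ enters, and that the hypothesis $\pi_s(\mu)>0$ is what permits the division in the recovery direction. The case $\lambda=w$ is trivial, since then both sides vanish.
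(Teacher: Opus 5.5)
Your proof is correct and is essentially the paper's argument. For the recovery direction you combine \Cref{eq:schedule-lower} with the Bernoulli Pinsker bound $\kl(w\|\lambda)\ge 2|w-\lambda|^2$; for the converse you combine \Cref{eq:schedule-upper} with $\kl(w\|\lambda)\le (w-\lambda)^2/(\lambda(1-\lambda))\le |w-\lambda|^2/m_I$, which the paper derives from $\log x\le x-1$ and which is exactly your $\chi^2$ bound.
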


\paragraph{Interpretation of mode-weight recovery.}
For fixed $s$, $I$, $u_0$, and $L$, \Cref{eq:recovery} bounds the mode-weight error by $|w-\lambda|=O\big(\sqrt{\varepsilon/\pi_s(\mu)}\big)$. Thus, mode-weight errors within $\{q_\lambda:\lambda\in I\}$ vanish whenever $\varepsilon/\pi_s(\mu)\to0$. Conversely, \Cref{eq:recovery-converse} gives a sufficient condition for a mode-weight perturbation to remain within the $\varepsilon$-discrepancy ball. When both bounds apply at the same $s$ and
\[
\E_{K\sim\mu}\big[
e^{-c|K^{\complement}|}\,
\mathbf 1\{|K^{\complement}|>s\}
\big]
=
O\big(\pi_s(\mu)\big),
\]
they have matching square-root dependence on $\varepsilon/\pi_s(\mu)$.

These bounds control mode weights while holding the within-mode distributions fixed. An upper bound on the full identifiability modulus requires control of every admissible distribution $q\in\cQ(p)$ satisfying $\mathfrak D_\mu(p\|q)\le\varepsilon$.

\subsection{Full-Mask Boundary: Necessary and Sufficient Uniform Control}
\label{sec:full-mask}

A qualitatively stronger guarantee emerges at the full-mask endpoint. At the full mask $K=[N]$, the visible context is empty, so $\mathfrak D_{[N]}(p\|q)=\KL(p\|q)$. Positive full-mask mass therefore gives universal joint-distribution control. The converse establishes that this endpoint is necessary for distribution-free coercivity.

\begin{proposition}[Full-mask characterization of uniform KL control]
\label{prop:full-mask}
A schedule $\mu$ admits a finite constant $C_\mu < \infty$ such that
\begin{equation}
\KL(p\| q)
\;\le\;
C_\mu\,\mathfrak D_\mu(p\| q)
\qquad
\text{for all } p\in\Delta(\cX) \text{ and all } q\in\cQ(p)
\label{eq:full-mask-coercivity}
\end{equation}
if and only if $\pi_0(\mu)>0$. When $\pi_0(\mu)>0$, the constant $C_\mu=1/\pi_0(\mu)$ is valid, and consequently
\begin{equation}
\modl_{\TV}^\mu(p,\varepsilon)
\;\le\;
\sqrt{{\varepsilon}/(2\,\pi_0(\mu))}.
\label{eq:full-mask-modulus}
\end{equation}
\end{proposition}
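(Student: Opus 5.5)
The plan has three parts: the ``if'' direction, the modulus bound, and the ``only if'' direction by counterexample.

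\emph{Sufficiency.} Every term $\mathfrak D_K(p\|q)$ is an expectation of KL divergences, so it is nonnegative. Dropping all masks except $K=[N]$ therefore gives $\mathfrak D_\mu(p\|q)\ge \pi_0(\mu)\,\mathfrak D_{[N]}(p\|q)$. When $K=[N]$ the visible context is empty, the expectation over $X_{K^\complement}$ is trivial, and $q_{[N]}(\cdot\mid\emptyset)=q$. Hence $\mathfrak D_{[N]}(p\|q)=\KL(p\|q)$, and dividing by $\pi_0(\mu)>0$ gives \Cref{eq:full-mask-coercivity} with $C_\mu=1/\pi_0(\mu)$. For \Cref{eq:full-mask-modulus}, take any admissible $q$ with $\mathfrak D_\mu(p\|q)\le\varepsilon$. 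Pinsker gives $\DTV(p,q)\le\sqrt{\KL(p\|q)/2}\le\sqrt{\varepsilon/(2\pi_0(\mu))}$, and taking the supremum over such $q$ yields the bound.

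\emph{Necessity.} Suppose $\pi_0(\mu)=0$, so every $K\in\supp(\mu)$ leaves at least one visible coordinate. I will exhibit a pair with $\mathfrak D_\mu(p\|q)=0$ but $\KL(p\|q)>0$; no finite $C_\mu$ can survive this.

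Fix two distinct symbols $a,b\in\cA$ and the constant sequences $\mathbf a=(a,\dots,a)$ and $\mathbf b=(b,\dots,b)$. Set $p=\tfrac12\delta_{\mathbf a}+\tfrac12\delta_{\mathbf b}$ and $q=\tfrac14\delta_{\mathbf a}+\tfrac34\delta_{\mathbf b}$, so that $\supp(p)=\supp(q)$ and $q\in\cQ(p)$. For any $K\neq[N]$, the visible block $x_{K^\complement}$ under $p$ is either all $a$ or all $b$. In either case it determines the full sequence, under $p$ and $q$ alike. Both conditionals $p_K(\cdot\mid x_{K^\complement})$ and $q_K(\cdot\mid x_{K^\complement})$ are therefore the same point mass, so $\mathfrak D_K(p\|q)=0$. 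Averaging over $K\sim\mu$ gives $\mathfrak D_\mu(p\|q)=0$, while $\KL(p\|q)=\kl(\tfrac12\|\tfrac14)>0$. This construction is the extreme case of \Cref{thm:mode-blind}, with perfect mode pinning and $\lambda\ne w$.

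The only points needing care are conventions. First, the context $x_{K^\complement}$ should range only over $\supp(p_{K^\complement})$, which holds here. Second, the requirement in \Cref{eq:full-mask-coercivity} is uniform over all $p$, so a single counterexample $p$ suffices. Third, the edge case $N\ge1$ guarantees that nonempty visible sets exist. I do not expect any step to be a real obstacle; the main subtlety is verifying that admissibility holds in the counterexample.
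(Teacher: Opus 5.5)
Your proposal is correct and takes essentially the same route as the paper. Sufficiency comes from dropping every non-full mask, using $\mathfrak D_{[N]}(p\|q)=\KL(p\|q)$, and then applying Pinsker; necessity comes from a two-point mixture on configurations that differ in every coordinate, so any nonempty visible context pins the configuration and only the full mask detects the reweighting. The paper also notes that this witness gives $\mathfrak D_\mu(p\|q_\lambda)=\pi_0(\mu)\KL(p\|q_\lambda)$ for general $\mu$, so $1/\pi_0(\mu)$ is the optimal constant, which goes beyond what the statement requires.
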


Unlike the directional guarantee of \Cref{cor:recovery}, \Cref{prop:full-mask} controls every admissible model. This necessity is distribution-free: while specific distributions with rapid mixing or weak dependencies may admit control even without full masks, no mask schedule can guarantee uniform control across all distributions unless it assigns positive mass to the full mask. See \Cref{rem:full-mask-sharp} for further discussion.

\section{Experiments}
\label{sec:experiments}

We use the symmetric conditioned-product mixture of \Cref{prop:product-instantiation} to study how mask visibility affects both statistical sensitivity and optimization of global mode weights. Exact evaluation and direct optimization use the same coherent family $q_\lambda$, with every masked conditional induced by a single joint distribution. Full experimental protocols and additional supporting results are in \Cref{app:experiments}.

\begin{figure}[t]
    \centering
    \begin{subfigure}[t]{0.32\linewidth}
        \centering
        \includegraphics[width=\linewidth]{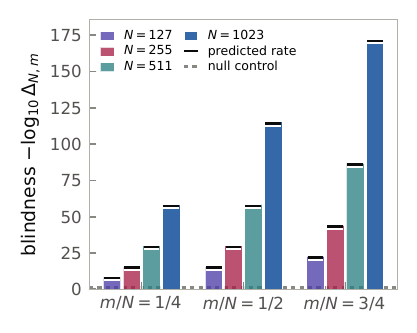}
        \vspace{-2em}
        \subcaption{Exponential mode blindness.}
        \label{fig:exp-blind}
    \end{subfigure}\hfill
    \begin{subfigure}[t]{0.32\linewidth}
        \centering
        \includegraphics[width=\linewidth]{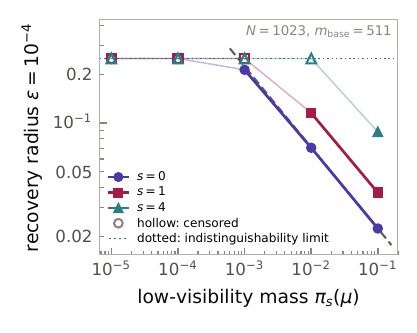}
        \vspace{-2em}
        \subcaption{Recovery under low visibility.}
        \label{fig:exp-recovery}
    \end{subfigure}\hfill
    \begin{subfigure}[t]{0.32\linewidth}
        \centering
        \includegraphics[width=\linewidth]{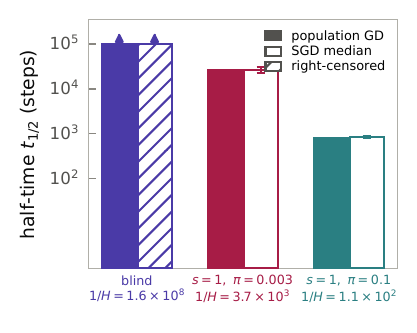}
        \vspace{-2em}
        \subcaption{Half-time vs. inverse curvature.}
        \label{fig:exp-mode-halftime}
    \end{subfigure}\\[5pt]
    \begin{subfigure}[t]{\linewidth}
        \centering
        \includegraphics[width=\linewidth]{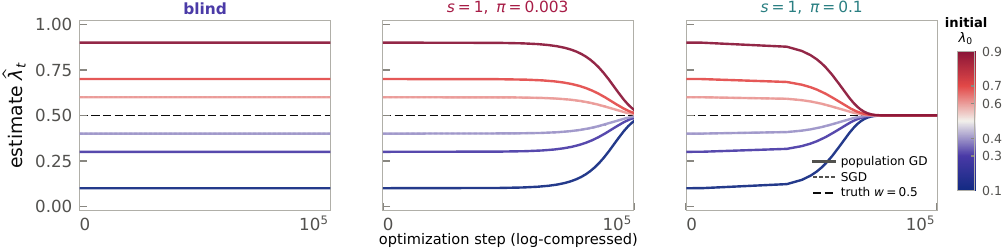}
        \vspace{-1em}
        \subcaption{Mode-weight optimization trajectories across mask schedules.}
        \label{fig:exp-mode-trajectories}   
    \end{subfigure}
    \vspace{-5pt}
    \caption{Population geometry and optimization within the mode-reweighted family $q_\lambda$. (a) Mode blindness at fixed joint TV error $1/4$; dotted curves show the unimodal control. (b) Recovery radius versus low-visibility mass at $\varepsilon=10^{-4}$; hollow markers indicate censoring at the reweighting boundary. (c) Population and median stochastic half-times compared with the inverse-curvature prediction; the blind case is right-censored. (d) Solid curves show population gradient descent toward $w=1/2$. Boosted stochastic traces and bands end once the sustained half-error criterion is met.}
    \label{fig:experiments}
    \vspace{-5pt}
\end{figure}

\paragraph{Mask visibility controls the mode-weight signal.}
At a fixed joint TV error $\DTV(p,q_\lambda)=1/4$, the masked discrepancy $\mathfrak D_{N,m}(p\Vert q_\lambda)$ ($m:=|K^\complement|$ is the number of visible coordinates) falls by $171$ orders of magnitude as visibility increases (\Cref{fig:exp-blind}). Thus, revealing more context can make a substantial error in global mode proportions almost undetectable by the objective. The fitted decay rate remains consistent with the theoretical reference, whereas it vanishes with system size in a unimodal Curie--Weiss control. Conversely, assigning mass $\pi_s(\mu)$ to masks leaving only $s\in\{0,1,4\}$ coordinates visible shrinks the mode-weight recovery radius with log--log slope $-0.49$ at $\varepsilon=10^{-4}$ (\Cref{fig:exp-recovery}), matching the $\pi_s(\mu)^{-1/2}$ dependence of \Cref{cor:recovery}. The low-visibility intervention strengthens the signal for global mode proportions while leaving the within-mode distributions unchanged.

\paragraph{Residual uncertainty predicts optimization speed.}
We test whether weak population sensitivity translates into slow recovery during optimization. We parameterize $\lambda=(1+e^{-a})^{-1}$ with $a\in\mathbb{R}$ and optimize mode-label cross-entropy, which is exactly the $\lambda$-dependent part of the joint masked-block negative log-likelihood. Writing $R_\mu(a)$ for the population risk, its curvature at the truth is
\(
H_\mu
:=
\left.\partial_a^2 R_\mu(a)\right|_{a=\operatorname{logit}(w)}
=
\E_{K\sim\mu}\!\left[
\mmse_p(Z\mid X_{K^\complement})
\right].
\)
At learning rate $\eta=0.1$, local gradient descent predicts
$t_{1/2}\simeq \log 2/(\eta H_\mu)$ for halving the mode-weight error $|\lambda-w|$.
For $N=63$, $m_{\mathrm{base}}=31$, and $\lambda_0=0.4$, the blind schedule has $H_\mu=6.23\times10^{-9}$ and a predicted local half-time of $1.11\times10^9$ steps. Neither population gradient descent nor any of the five stochastic runs reaches the half-error threshold within the $10^5$-step budget. Assigning mass $\pi_1(\mu)=0.1$ to masks leaving $s=1$ coordinate visible raises the curvature to $9.00\times10^{-3}$ and reduces the predicted half-time to $7.70\times10^2$ steps. The intervention produces substantially faster recovery in both population and stochastic trajectories (\Cref{fig:exp-mode-trajectories}). Among schedules with observed threshold crossings, measured population and median stochastic half-times follow the predicted inverse-curvature ordering (\Cref{fig:exp-mode-halftime}).

\section{Related Work}
\label{app:related-work}

\paragraph{Masked objectives as conditional estimation.}
Estimating a joint distribution through conditional likelihoods is the basis of pseudolikelihood and conditional composite likelihood \citep{besag1974spatial,varin2011overview}. Prediction from partial observations also underlies masked language modeling \citep{devlin2019bert,liu2019roberta}, masked autoencoding \citep{he2022masked}, object-masked world models \citep{nam2026cjepa}, and absorbing-state discrete diffusion \citep{austin2021structured,sahoo2024simple,shi2024simplified,ou2025absorbing}. For masked diffusion, ELBO formulations can be expressed as weighted masked-prediction losses, including discrete mixtures over mask counts \citep{shi2024simplified,zheng2025masked}. We study how the mask distribution \(\mu\) affects joint-distribution recovery under joint masked-block log loss. Our mode-blindness bound also extends to per-token losses, whereas the recovery guarantees rely on the joint conditional objective (\Cref{app:factorized}).

\paragraph{Joint-distribution recovery.}
In rapid-mixing regimes, approximate-tensorization inequalities bound joint KL divergence by conditional discrepancies \citep{caputo2021block,koehler2023statistical,li2024promises}. Exact identification from masked objectives has also been established in certain latent-variable settings \citep{liu2022masked}. We study the stability of recovery at positive excess risk under mode pinning. In this setting, a constant error in global mode weights can incur excess risk that is exponentially small in the minimum visible-context size. This also forces approximate-tensorization constants to grow exponentially in that size (\Cref{cor:at-blowup}). Our recovery bounds quantify how low-visibility masks preserve mode-weight sensitivity through the residual mode uncertainty averaged over the schedule (\Cref{thm:mode-sensitivity}).

\paragraph{Metastability and conditional consistency.}
Our hypotheses are motivated by phase coexistence and slow mixing in Curie--Weiss models \citep{ellis1978statistics,levin2010glauber,ding2009mixing,gheissari2022low,kim2026sharp,samanta2024mixing}, and by mixture descriptions of text \citep{hofmann1999probabilistic,blei2003latent}. A separate line shows that conditionals learned by masked models need not be compatible with a single joint distribution \citep{wang2019bert,young2022inconsistencies,goyal2022exposing,hennigen2023deriving}. Recent work also studies iterative masked-token sampling, proving exponentially slow escape from semantic basins under a local margin condition at low temperatures \citep{sana2026mixing}. Mode blindness can occur even when all model conditionals come from a single joint distribution.

\paragraph{Population analyses and representation identifiability.}
Theoretical analyses of contrastive learning relate its objectives to latent-class structure \citep{saunshi2019theoretical}, spectral properties \citep{haochen2021provable}, alignment and uniformity \citep{wang2020understanding}, dimensional collapse \citep{jing2022understanding}, distributional geometry \citep{cai2026geometric}, and cross-modal geometry and feature recovery \citep{cai2024clap,liang2022mind,cai2025value}. For next-token prediction, related work characterizes representations of distributionally equivalent predictors \citep{marconato2025all} and conditions for recovering human-interpretable latent structure \citep{liu2026i}. Representation-level near-identifiability further allows bounded differences, up to transformations, between representations at exact population-risk minimizers \citep{nelson2026statistical}. Our \(\varepsilon\)-identifiability modulus measures the largest joint-distribution error among model distributions with excess risk at most \(\varepsilon\). It therefore concerns the stability of distributional recovery from the population objective, without specifying a representation or an equivalence class of transformations.

\section{Conclusion}
\label{sec:conclusion}

Mask schedules should be judged by the distributional errors they expose, as well as by the prediction tasks they create. When visible context nearly determines the active mode, conditional prediction can be highly accurate despite substantial errors in global mode proportions. Near-optimal masked-prediction risk must therefore be interpreted relative to both the schedule and the underlying data geometry. For broader representation learning, this further motivates asking how strongly the objective constrains the target of interest, and what level of excess risk would support its recovery. When global frequencies matter, schedule design should retain supervision that constrains them while supporting effective learning within modes. Low-visibility masks can strengthen directional sensitivity under the conditions studied here. For coherent joint-block logarithmic prediction, the full-mask boundary establishes what a distribution-free guarantee requires. Understanding how these population-level constraints interact with finite-sample generalization, optimization, and parametric inductive biases remains an important next step. See \Cref{app:discussion} for broader implications, limitations, and open directions.

\subsection*{AI use statement}

Generative AI tools were used for language editing, literature search, and assistance with checking and refining mathematical arguments and experimental interpretations. The authors reviewed and independently verified all AI-assisted work and take responsibility for the complete content.

\subsection*{Ethics statement}

This work studies population-level properties of masked-prediction objectives and does not involve human participants or the collection of new data. The natural-text experiments use existing publicly available corpora in accordance with their documented licenses and report only aggregate measurements of residual mode uncertainty. We do not release derived text data or attempt to identify individuals. We are not aware of additional material ethical risks arising from this work.

\subsection*{Reproducibility statement}

Assumptions and formal statements are provided in the main text, with complete proofs in \Cref{app:proofs}. The appendices provide auxiliary results, the extension to multiple modes, mask-schedule calculations, and further interpretations. \Cref{app:experiments} reports the full experimental protocols, hyperparameters, numerical audits, random seeds, uncertainty-estimation procedures, and additional results. All sources of randomness are documented for the stochastic experiments.

\subsubsection*{Acknowledgments}
This work was supported in part by the Responsible AI Research Centre (Javen Qinfeng Shi).

\bibliography{iclr2027_conference}

@inproceedings{koehler2023Statistical,
title={Statistical Efficiency of Score Matching: The View from Isoperimetry},
author={Frederic Koehler and Alexander Heckett and Andrej Risteski},
booktitle={The Eleventh International Conference on Learning Representations},
year={2023},
url={https://openreview.net/forum?id=TD7AnQjNzR6}
}

@inproceedings{li2024promises,
title={Promises and Pitfalls of Generative Masked Language Modeling: Theoretical Framework and Practical Guidelines},
author={Yuchen Li and Alexandre Kirchmeyer and Aashay Mehta and Yilong Qin and Boris Dadachev and Kishore Papineni and Sanjiv Kumar and Andrej Risteski},
booktitle={Forty-first International Conference on Machine Learning},
year={2024},
url={https://openreview.net/forum?id=BTkaKA74mS}
}

@inproceedings{liu2022masked,
author = {Liu, Bingbin and Hsu, Daniel J and Ravikumar, Pradeep and Risteski, Andrej},
booktitle = {Advances in Neural Information Processing Systems},
editor = {S. Koyejo and S. Mohamed and A. Agarwal and D. Belgrave and K. Cho and A. Oh},
pages = {21241--21254},
publisher = {Curran Associates, Inc.},
title = {Masked Prediction: A Parameter Identifiability View},
url = {https://proceedings.neurips.cc/paper_files/paper/2022/file/85dd09d356ca561169b2c03e43cf305e-Paper-Conference.pdf},
volume = {35},
year = {2022}
}

@article{caputo2021block,
  title={Block factorization of the relative entropy via spatial mixing},
  author={Caputo, Pietro and Parisi, Daniel},
  journal={Communications in Mathematical Physics},
  volume={388},
  number={2},
  pages={793--818},
  year={2021},
  publisher={Springer},
  url={https://doi.org/10.1007/s00220-021-04237-1}
}

@inproceedings{wang2019bert,
  title={{BERT} has a mouth, and it must speak: {BERT} as a {Markov} random field language model},
  author={Wang, Alex and Cho, Kyunghyun},
  booktitle={Proceedings of the workshop on methods for optimizing and evaluating neural language generation},
  pages={30--36},
  year={2019},
  url={https://aclanthology.org/W19-2304.pdf}
}

@inproceedings{
goyal2022exposing,
title={Exposing the Implicit Energy Networks behind Masked Language Models via {Metropolis}--{Hastings}},
author={Kartik Goyal and Chris Dyer and Taylor Berg-Kirkpatrick},
booktitle={International Conference on Learning Representations},
year={2022},
url={https://openreview.net/forum?id=6PvWo1kEvlT}
}

@article{young2022inconsistencies,
  title={Inconsistencies in masked language models},
  author={Young, Tom and Chen, Yunan and You, Yang},
  journal={arXiv preprint arXiv:2301.00068},
  year={2022},
  url={https://arxiv.org/abs/2301.00068}
}

@inproceedings{hennigen2023deriving,
  title={Deriving language models from masked language models},
  author={Hennigen, Lucas Torroba and Kim, Yoon},
  booktitle={Proceedings of the 61st Annual Meeting of the Association for Computational Linguistics (Volume 2: Short Papers)},
  pages={1149--1159},
  year={2023},
  url={https://aclanthology.org/2023.acl-short.99/}
}

@inproceedings{austin2021structured,
title={Structured Denoising Diffusion Models in Discrete State-Spaces},
author={Jacob Austin and Daniel D. Johnson and Jonathan Ho and Daniel Tarlow and Rianne van den Berg},
booktitle={Advances in Neural Information Processing Systems},
editor={A. Beygelzimer and Y. Dauphin and P. Liang and J. Wortman Vaughan},
year={2021},
url={https://openreview.net/forum?id=h7-XixPCAL}
}

@inproceedings{ou2025absorbing,
title={Your Absorbing Discrete Diffusion Secretly Models the Conditional Distributions of Clean Data},
author={Jingyang Ou and Shen Nie and Kaiwen Xue and Fengqi Zhu and Jiacheng Sun and Zhenguo Li and Chongxuan Li},
booktitle={The Thirteenth International Conference on Learning Representations},
year={2025},
url={https://openreview.net/forum?id=sMyXP8Tanm}
}

@inproceedings{devlin2019bert,
  title={{BERT}: Pre-training of deep bidirectional transformers for language understanding},
  author={Devlin, Jacob and Chang, Ming-Wei and Lee, Kenton and Toutanova, Kristina},
  booktitle={Proceedings of the 2019 conference of the North American chapter of the association for computational linguistics: human language technologies, volume 1 (long and short papers)},
  pages={4171--4186},
  year={2019},
  url={https://aclanthology.org/N19-1423/}
}

@inproceedings{nam2026cjepa,
title={Causal-{JEPA}: Learning World Models through Object-Level Latent Masking},
author={Heejeong Nam and Quentin Le Lidec and Lucas Maes and Yann LeCun and Randall Balestriero},
booktitle={Forty-third International Conference on Machine Learning},
year={2026},
url={https://openreview.net/forum?id=VMAHQDOtjp}
}

@article{besag1974spatial,
  title={Spatial interaction and the Statistical analysis of lattice systems},
  author={Besag, Julian},
  journal={Journal of the Royal Statistical Society: Series B (Methodological)},
  volume={36},
  number={2},
  pages={192--225},
  year={1974},
  publisher={Wiley Online Library},
  url={https://doi.org/10.1111/j.2517-6161.1974.tb00999.x}
}

@inproceedings{sahoo2024simple,
title={Simple and Effective Masked Diffusion Language Models},
author={Subham Sekhar Sahoo and Marianne Arriola and Aaron Gokaslan and Edgar Mariano Marroquin and Alexander M Rush and Yair Schiff and Justin T Chiu and Volodymyr Kuleshov},
booktitle={The Thirty-eighth Annual Conference on Neural Information Processing Systems},
year={2024},
url={https://openreview.net/forum?id=L4uaAR4ArM}
}

@inproceedings{lou2024discrete,
title={Discrete Diffusion Modeling by Estimating the Ratios of the Data Distribution},
author={Aaron Lou and Chenlin Meng and Stefano Ermon},
booktitle={Forty-first International Conference on Machine Learning},
year={2024},
url={https://openreview.net/forum?id=CNicRIVIPA}
}

@inproceedings{gheissari2022low,
  title={Low-temperature {Ising} dynamics with random initializations},
  author={Gheissari, Reza and Sinclair, Alistair},
  booktitle={Proceedings of the 54th Annual ACM SIGACT Symposium on Theory of Computing},
  pages={1445--1458},
  year={2022},
  url={https://doi.org/10.1145/3519935.3519964}
}

@article{varin2011overview,
  title={An overview of composite likelihood methods},
  author={Varin, Cristiano and Reid, Nancy and Firth, David},
  journal={Statistica Sinica},
  pages={5--42},
  year={2011},
  publisher={JSTOR},
  url={https://www.jstor.org/stable/24309261}
}

@article{liu2019roberta,
  title={{RoBERTa}: A robustly optimized {BERT} pretraining approach},
  author={Liu, Yinhan and Ott, Myle and Goyal, Naman and Du, Jingfei and Joshi, Mandar and Chen, Danqi and Levy, Omer and Lewis, Mike and Zettlemoyer, Luke and Stoyanov, Veselin},
  journal={arXiv preprint arXiv:1907.11692},
  year={2019},
  url={https://arxiv.org/abs/1907.11692}
}

@inproceedings{he2022masked,
  title={Masked autoencoders are scalable vision learners},
  author={He, Kaiming and Chen, Xinlei and Xie, Saining and Li, Yanghao and Doll{\'a}r, Piotr and Girshick, Ross},
  booktitle={Proceedings of the IEEE/CVF Conference on Computer Vision and Pattern Recognition},
  pages={16000--16009},
  year={2022},
  url={https://doi.org/10.1109/CVPR52688.2022.01553}
}

@article{levin2010glauber,
  title={Glauber dynamics for the mean-field {Ising} model: cut-off, critical power law, and metastability},
  author={Levin, David A and Luczak, Malwina J and Peres, Yuval},
  journal={Probability Theory and Related Fields},
  volume={146},
  number={1},
  pages={223},
  year={2010},
  publisher={Springer},
  url={https://doi.org/10.1007/s00440-008-0189-z}
}

@article{ding2009mixing,
  title={The mixing time evolution of {Glauber} dynamics for the mean-field {Ising} model},
  author={Ding, Jian and Lubetzky, Eyal and Peres, Yuval},
  journal={Communications in Mathematical Physics},
  volume={289},
  number={2},
  pages={725--764},
  year={2009},
  publisher={Springer},
  url={https://doi.org/10.1007/s00220-009-0781-9}
}

@article{samanta2024mixing,
  title={Mixing phases and metastability for the {Glauber} dynamics on the $p$-spin {Curie}-{Weiss} model},
  author={Samanta, Ramkrishna Jyoti and Mukherjee, Somabha and Zhang, Jiang},
  journal={arXiv preprint arXiv:2412.16952},
  year={2024},
  url={https://arxiv.org/abs/2412.16952}
}

@article{kim2026sharp,
  title={Sharp mixing time asymptotics of {Glauber} dynamics for the {Curie}-{Weiss}-{Potts} model at low temperatures},
  author={Kim, Seonwoo and Lee, Jungkyoung},
  journal={arXiv preprint arXiv:2602.19545},
  year={2026},
  url={https://arxiv.org/abs/2602.19545}
}

@article{blei2003latent,
  title={Latent {Dirichlet} allocation},
  author={Blei, David M and Ng, Andrew Y and Jordan, Michael I},
  journal={Journal of Machine Learning Research},
  volume={3},
  number={Jan},
  pages={993--1022},
  year={2003},
  url={https://www.jmlr.org/papers/volume3/blei03a/blei03a.pdf}
}

@inproceedings{hofmann1999probabilistic,
  title={Probabilistic latent semantic indexing},
  author={Hofmann, Thomas},
  booktitle={Proceedings of the 22nd annual international ACM SIGIR conference on Research and development in information retrieval},
  pages={50--57},
  year={1999},
  url={https://sigir.org/wp-content/uploads/2017/06/p211.pdf}
}

@article{ellis1978statistics,
  title={The statistics of {Curie}-{Weiss} models},
  author={Ellis, Richard S and Newman, Charles M},
  journal={Journal of Statistical Physics},
  volume={19},
  number={2},
  pages={149--161},
  year={1978},
  publisher={Springer},
  url={https://doi.org/10.1007/BF01012508}
}

@article{sana2026mixing,
  title={Mixing Times of {Glauber} Dynamics on Masked Language Models},
  author={Sana, Suvadip and Wolf, Sami and Mehta, Neer and Shah, Alina and Shaikh, Aitzaz and Goodman, Janna and Levine, Lionel},
  journal={arXiv preprint arXiv:2605.16378},
  year={2026},
  url={https://arxiv.org/abs/2605.16378}
}

@inproceedings{assran2023self,
  title={Self-supervised learning from images with a joint-embedding predictive architecture},
  author={Assran, Mahmoud and Duval, Quentin and Misra, Ishan and Bojanowski, Piotr and Vincent, Pascal and Rabbat, Michael and LeCun, Yann and Ballas, Nicolas},
  booktitle={Proceedings of the IEEE/CVF Conference on Computer Vision and Pattern Recognition},
  pages={15619--15629},
  year={2023},
  url={https://openaccess.thecvf.com/content/CVPR2023/papers/Assran_Self-Supervised_Learning_From_Images_With_a_Joint-Embedding_Predictive_Architecture_CVPR_2023_paper.pdf}
}

@inproceedings{baldwin2010language,
  title={Language identification: The long and the short of the matter},
  author={Baldwin, Timothy and Lui, Marco},
  booktitle={Human Language Technologies: The 2010 Annual Conference of the North American Chapter of the Association for Computational Linguistics},
  pages={229--237},
  year={2010},
  url={https://aclanthology.org/N10-1027/}
}

@inproceedings{phan2008learning,
  title={Learning to classify short and sparse text \& web with hidden topics from large-scale data collections},
  author={Phan, Xuan-Hieu and Nguyen, Le-Minh and Horiguchi, Susumu},
  booktitle={Proceedings of the 17th International Conference on World Wide Web},
  pages={91--100},
  year={2008},
  url={https://doi.org/10.1145/1367497.1367510}
}

@inproceedings{
zheng2025masked,
title={Masked Diffusion Models are Secretly Time-Agnostic Masked Models and Exploit Inaccurate Categorical Sampling},
author={Kaiwen Zheng and Yongxin Chen and Hanzi Mao and Ming-Yu Liu and Jun Zhu and Qinsheng Zhang},
booktitle={The Thirteenth International Conference on Learning Representations},
year={2025},
url={https://openreview.net/forum?id=CTC7CmirNr}
}

@inproceedings{
shi2024simplified,
title={Simplified and Generalized Masked Diffusion for Discrete Data},
author={Jiaxin Shi and Kehang Han and Zhe Wang and Arnaud Doucet and Michalis Titsias},
booktitle={The Thirty-eighth Annual Conference on Neural Information Processing Systems},
year={2024},
url={https://openreview.net/forum?id=xcqSOfHt4g}
}

@InProceedings{saunshi2019theoretical,
  title = 	 {A Theoretical Analysis of Contrastive Unsupervised Representation Learning},
  author =       {Saunshi, Nikunj and Plevrakis, Orestis and Arora, Sanjeev and Khodak, Mikhail and Khandeparkar, Hrishikesh},
  booktitle = 	 {Proceedings of the 36th International Conference on Machine Learning},
  pages = 	 {5628--5637},
  year = 	 {2019},
  editor = 	 {Chaudhuri, Kamalika and Salakhutdinov, Ruslan},
  volume = 	 {97},
  series = 	 {Proceedings of Machine Learning Research},
  month = 	 {09--15 Jun},
  publisher =    {PMLR},
  url = 	 {https://proceedings.mlr.press/v97/saunshi19a.html}
}

@article{haochen2021provable,
  title={Provable guarantees for self-supervised deep learning with spectral contrastive loss},
  author={HaoChen, Jeff Z and Wei, Colin and Gaidon, Adrien and Ma, Tengyu},
  journal={Advances in Neural Information Processing Systems},
  volume={34},
  pages={5000--5011},
  year={2021},
  url={https://proceedings.neurips.cc/paper/2021/file/27debb435021eb68b3965290b5e24c49-Paper.pdf}
}

@inproceedings{wang2020understanding,
  title={Understanding contrastive representation learning through alignment and uniformity on the hypersphere},
  author={Wang, Tongzhou and Isola, Phillip},
  booktitle={International Conference on Machine Learning},
  pages={9929--9939},
  year={2020},
  organization={PMLR},
  url={https://proceedings.mlr.press/v119/wang20k/wang20k.pdf}
}

@inproceedings{cai2024clap,
  title={{CLAP}: Isolating content from style through contrastive learning with augmented prompts},
  author={Cai, Yichao and Liu, Yuhang and Zhang, Zhen and Shi, Javen Qinfeng},
  booktitle={European Conference on Computer Vision},
  pages={130--147},
  year={2024},
  organization={Springer},
  url={https://link.springer.com/chapter/10.1007/978-3-031-72664-4_8}
}

@inproceedings{cai2025value,
title={On the Value of Cross-Modal Misalignment in Multimodal Representation Learning},
author={Yichao Cai and Yuhang Liu and Erdun Gao and Tianjiao Jiang and Zhen Zhang and Anton van den Hengel and Javen Qinfeng Shi},
booktitle={The Thirty-ninth Annual Conference on Neural Information Processing Systems},
year={2025},
url={https://openreview.net/forum?id=3KtPujOw5z}
}

@inproceedings{cai2026geometric,
title={The Geometric Mechanics of Contrastive Representation Learning: Alignment Potentials, Entropic Dispersion, and Cross-Modal Divergence},
author={Yichao Cai and Zhen Zhang and Yuhang Liu and Javen Qinfeng Shi},
booktitle={Forty-third International Conference on Machine Learning},
year={2026},
url={https://openreview.net/forum?id=X9VtjdBlQw}
}

@inproceedings{
jing2022understanding,
title={Understanding Dimensional Collapse in Contrastive Self-supervised Learning},
author={Li Jing and Pascal Vincent and Yann LeCun and Yuandong Tian},
booktitle={International Conference on Learning Representations},
year={2022},
url={https://openreview.net/forum?id=YevsQ05DEN7}
}

@inproceedings{
liang2022mind,
title={Mind the Gap: Understanding the Modality Gap in Multi-modal Contrastive Representation Learning},
author={Weixin Liang and Yuhui Zhang and Yongchan Kwon and Serena Yeung and James Zou},
booktitle={Advances in Neural Information Processing Systems},
editor={Alice H. Oh and Alekh Agarwal and Danielle Belgrave and Kyunghyun Cho},
year={2022},
url={https://openreview.net/forum?id=S7Evzt9uit3}
}

@inproceedings{tiedemann-2012-parallel,
    title = "Parallel Data, Tools and Interfaces in {OPUS}",
    author = {Tiedemann, J{\"o}rg},
    editor = "Calzolari, Nicoletta  and
      Choukri, Khalid  and
      Declerck, Thierry  and
      Do{\u{g}}an, Mehmet U{\u{g}}ur  and
      Maegaard, Bente  and
      Mariani, Joseph  and
      Moreno, Asuncion  and
      Odijk, Jan  and
      Piperidis, Stelios",
    booktitle = "Proceedings of the Eighth International Conference on Language Resources and Evaluation ({LREC}'12)",
    month = may,
    year = "2012",
    address = "Istanbul, Turkey",
    publisher = "European Language Resources Association (ELRA)",
    url = "http://www.lrec-conf.org/proceedings/lrec2012/pdf/463_Paper.pdf",
    pages = "2214--2218",
}

@inproceedings{zhang-etal-2020-improving,
    title = "Improving Massively Multilingual Neural Machine Translation and Zero-Shot Translation",
    author = "Zhang, Biao  and
      Williams, Philip  and
      Titov, Ivan  and
      Sennrich, Rico",
    editor = "Jurafsky, Dan  and
      Chai, Joyce  and
      Schluter, Natalie  and
      Tetreault, Joel",
    booktitle = "Proceedings of the 58th Annual Meeting of the Association for Computational Linguistics",
    month = jul,
    year = "2020",
    address = "Online",
    publisher = "Association for Computational Linguistics",
    url = "https://aclanthology.org/2020.acl-main.148",
    doi = "10.18653/v1/2020.acl-main.148",
    pages = "1628--1639",
}

@misc{codeparrot_githubcodeclean,
  author       = {{CodeParrot}},
  title        = {{GitHub Code Clean}},
  year         = {2022}, 
  howpublished = {Hugging Face Hub, \url{https://huggingface.co/datasets/codeparrot/github-code-clean}},
  note         = {Derived from \texttt{codeparrot/github-code}; accessed 2026-07-20}
}

@article{raffel2020exploring,
  title={Exploring the limits of transfer learning with a unified text-to-text transformer},
  author={Raffel, Colin and Shazeer, Noam and Roberts, Adam and Lee, Katherine and Narang, Sharan and Matena, Michael and Zhou, Yanqi and Li, Wei and Liu, Peter J},
  journal={Journal of Machine Learning Research},
  volume={21},
  number={140},
  pages={1--67},
  year={2020},
  url={https://www.jmlr.org/papers/volume21/20-074/20-074.pdf}
}

@article{radford2019language,
  title={Language models are unsupervised multitask learners},
  author={Radford, Alec and Wu, Jeffrey and Child, Rewon and Luan, David and Amodei, Dario and Sutskever, Ilya and others},
  journal={OpenAI blog},
  volume={1},
  number={8},
  pages={9},
  year={2019},
  url={https://cdn.openai.com/better-language-models/language_models_are_unsupervised_multitask_learners.pdf}
}

@article{
bardes2024revisiting,
title={Revisiting Feature Prediction for Learning Visual Representations from Video},
author={Adrien Bardes and Quentin Garrido and Jean Ponce and Xinlei Chen and Michael Rabbat and Yann LeCun and Mido Assran and Nicolas Ballas},
journal={Transactions on Machine Learning Research},
issn={2835-8856},
year={2024},
url={https://openreview.net/forum?id=QaCCuDfBk2},
note={Featured Certification}
}

@inproceedings{
nelson2026statistical,
title={Statistical and structural identifiability in representation learning},
author={Walter Nelson and Marco Fumero and Theofanis Karaletsos and Francesco Locatello},
booktitle={The Fourteenth International Conference on Learning Representations},
year={2026},
url={https://openreview.net/forum?id=Wa3cfE3Iay}
}

@inproceedings{
marconato2025all,
title={All or None: Identifiable Linear Properties of Next-Token Predictors in Language Modeling},
author={Emanuele Marconato and Sebastien Lachapelle and Sebastian Weichwald and Luigi Gresele},
booktitle={The 28th International Conference on Artificial Intelligence and Statistics},
year={2025},
url={https://openreview.net/forum?id=XCmIlemQP5}
}

@inproceedings{
liu2026i,
title={I Predict Therefore {I} Am: Is Next Token Prediction Enough to Learn Human-Interpretable Concepts from Data?},
author={Yuhang Liu and Dong Gong and Yichao Cai and Erdun Gao and Zhen Zhang and Biwei Huang and Mingming Gong and Anton van den Hengel and Javen Qinfeng Shi},
booktitle={The Fourteenth International Conference on Learning Representations},
year={2026},
url={https://openreview.net/forum?id=vVYD74U5KE}
}

@inproceedings{
arriola2025block,
title={Block Diffusion: Interpolating Between Autoregressive and Diffusion Language Models},
author={Marianne Arriola and Subham Sekhar Sahoo and Aaron Gokaslan and Zhihan Yang and Zhixuan Qi and Jiaqi Han and Justin T Chiu and Volodymyr Kuleshov},
booktitle={The Thirteenth International Conference on Learning Representations},
year={2025},
url={https://openreview.net/forum?id=tyEyYT267x}
}
\bibliographystyle{iclr2027_conference}

\clearpage
\appendix

\crefalias{section}{appendix}
\crefalias{subsection}{appendix}

\startcontents[appendix]

\noindent\textbf{\large Contents of Appendix}
\vspace{0.5em}
\hrule
\begingroup
\normalsize
\renewcommand{\addvspace}[1]{}
\setlength{\baselineskip}{0.6em} 
\printcontents[appendix]{l}{1}{\setcounter{tocdepth}{3}}
\endgroup
\vspace{0.5em}    
\hrule

\clearpage

\section{Notation and Conventions}
\label{app:notation}

This section introduces the notation and conventions used throughout this work, with \Cref{tab:notation} providing a quick reference guide.

\begin{table}[tbp]
\centering
\caption{Canonical notation used throughout the paper.}
\label{tab:notation}
\begin{tabular}{ll}
\toprule
\textbf{Symbol} & \textbf{Meaning} \\
\midrule
\multicolumn{2}{l}{\emph{Spaces and probability distributions}}\\[3pt]
$[N]$ & Coordinate index set $\{1,\dots,N\}$.\\
$\mathcal{A}$ & Finite alphabet.\\
$\mathcal{X}=\mathcal{A}^N,\ \mathcal{F}=2^{\mathcal{X}}$ & Configuration space and its full $\sigma$-field.\\
$\Delta(S)$ & Probability simplex over a finite set $S$.\\
$p$ & Data distribution on $\mathcal{X}$; unless stated otherwise, $X\sim p$.\\
$q,r$ & Generic model distribution and generic probability distribution on $\mathcal{X}$.\\
$\operatorname{supp}(r)$ & Support of a distribution $r$.\\
$\cQ(p)$ & Support-covering model class $\{q\in\Delta(\mathcal{X}):\operatorname{supp}(p)\subseteq\operatorname{supp}(q)\}$.\\
\midrule
\multicolumn{2}{l}{\emph{Coordinates, marginals, and conditionals}}\\[3pt]
$X,x$ & Random configuration and a realization.\\
$S\subseteq[N]$ & Generic coordinate set.\\
$K,\,K^{\complement}$ & Masked coordinate set and visible set $K^{\complement}:=[N]\setminus K$.\\
$V$ & Generic visible or conditioning coordinate set.\\
$X_S,x_S$ & Subvector indexed by $S$ and its realization.\\
$r_S$ & Marginal distribution of $X_S$ under $r$.\\
$r_K(\cdot\mid x_V)$ & Conditional distribution of $X_K$ given $X_V=x_V$.\\
\midrule
\multicolumn{2}{l}{\emph{Mask schedules and block updates}}\\[3pt]
$\mu,\ \operatorname{supp}(\mu)$ & Mask schedule on $2^{[N]}$ and its support.\\
$v_{\mathrm{min}}(\mu)$ & Minimum visible-set size $\min_{K\in\operatorname{supp}(\mu)}|K^{\complement}|$.\\
$\pi_s(\mu)$ & Low-visibility mass $\mu(\{K:|K^{\complement}|\le s\})$.\\
\midrule
\multicolumn{2}{l}{\emph{Masked objective and identifiability}}\\[3pt]
$\mathfrak D_K(p\|q)$ & Fixed-mask log discrepancy (\Cref{def:masked-log-discrepancy}).\\
$\mathfrak D_\mu(p\|q)$ & Schedule-averaged masked log discrepancy $\mathbb{E}_{K\sim\mu}\mathfrak D_K(p\|q)$.\\
$\modl^{\mu}_{d}(p,\varepsilon)$ & $\varepsilon$-identifiability modulus under distance $d$ (\Cref{def:modulus}).\\
$\modl^{\mu}_{\mathrm{TV}}(p,\varepsilon)$ & Total-variation identifiability modulus.\\
\midrule
\multicolumn{2}{l}{\emph{Mode structure and constructions}}\\[3pt]
$\mathcal{X}_+,\mathcal{X}_-$ & Mode regions forming the partition $\mathcal{X}=\mathcal{X}_+\sqcup\mathcal{X}_-$.\\
$\tau,-\tau$ & Mode label in $\{+,-\}$ and the opposite label.\\
$Z$ & Binary mode indicator $Z:=\mathbf{1}\{X\in\mathcal{X}_+\}$.\\
$p^\tau$ & Mode-conditioned distribution $p(\cdot\mid\mathcal{X}_\tau)$.\\
$p^\tau_V$ & Marginal of $p^\tau$ on coordinates $V$.\\
$\mathcal{E}^\tau_V$ & Typical visible contexts from mode region $\mathcal{X}_\tau$.\\
$\beta_V(x_V)$ & Posterior mode probability $p(Z=1\mid X_V=x_V)$.\\
$\mmse_p(Z\mid X_V)$ & Residual mode MMSE $\mathbb{E}_p[(Z-\mathbb{E}_p[Z\mid X_V])^2]=\mathbb{E}[\beta_V(1-\beta_V)]$.\\
$q_\lambda$ & Reweighted bimodal distribution $\lambda\,p(\cdot\mid\mathcal{X}_+)+(1-\lambda)\,p(\cdot\mid\mathcal{X}_-)$.\\
$c_0$ & Uniform lower bound on the probability mass of each mode region.\\
$c_1$ & Exponential rate controlling atypical visible contexts.\\
$\kappa$ & Exponential rate controlling the opposite-mode posterior on $\mathcal{E}^\tau_V$.\\
$s_\mathrm{pin}$ & Visibility threshold in the mode-pinning assumption (\Cref{ass:mode-pinning}).\\
$s_\mathrm{ov}$ & Visibility scale in the low-visibility uncertainty assumption (\Cref{ass:low-vis-uncertainty}).\\
$c$ & Combined rate $c:=\min\{c_1,\kappa\}$.\\
\midrule
\multicolumn{2}{l}{\emph{Divergences and constants}}\\[3pt]
$D_{\mathrm{KL}}(\cdot\|\cdot)$ & KL divergence.\\
$\kl(\cdot\|\cdot)$ & Binary KL divergence.\\
$D_{\mathrm{TV}}(\cdot,\cdot)$ & Total-variation distance.\\
$\logit(\cdot)$ & Log-odds.\\
$\diam_d(\Delta(\mathcal{X}))$ & Diameter of the probability simplex under $d$.\\
$\bar C_{\AT}(r,\mu)$ & Block approximate-tensorization constant of a distribution $r$ (\Cref{eq:at-def}). \\
$\bar C_{\AT}^{\varepsilon}(p,\mu)$ & Budget-uniform AT constant over $\{q\in\cQ(p):\mathfrak D_{\mu}(p\|q)\le\varepsilon\}$ (\Cref{eq:at-upper}). \\
\bottomrule
\end{tabular}
\end{table}

\paragraph{General conventions.}
For an integer $N\ge1$, we write $[N]:=\{1,\ldots,N\}$. Unless stated otherwise, asymptotic statements are taken as $N\to\infty$. For real-valued sequences $f=f_N$ and $g=g_N$, we write $f=O(g)$ if $|f_N|\le C|g_N|$ for some constant $C>0$ and all sufficiently large $N$. We write $f=\Omega(g)$ if $g=O(f)$, and $f=\Theta(g)$ (or equivalently $f\asymp g$) if both relations hold. For sequences with $g_N\ne0$ for all sufficiently large $N$, we write $f\sim g$ if $f_N/g_N\to1$ and $f=o(g)$ if $f_N/g_N\to0$. We use $\simeq$ for approximate equality. A sequence is called \emph{sublinear} if it is $o(N)$. Calligraphic letters such as $\cA$, $\cX$, and $\cQ(p)$ denote spaces, classes, or distinguished subsets. Uppercase letters generally denote random variables, and lowercase letters denote their realizations. We also use uppercase letters for coordinate sets such as $K,V,S\subseteq[N]$. For a subset $S\subseteq[N]$, we define the projected vectors $X_S:=(X_i)_{i\in S}$ and $x_S:=(x_i)_{i\in S}$.

\paragraph{Probability distributions.}
For a finite space $\cX$, $\Delta(\cX)$ denotes the simplex of probability distributions on $\cX$. Given a distribution $r\in\Delta(\cX)$ on the configuration space $\cX=\cA^N$ and a subset $S\subseteq[N]$, $r_S$ denotes the marginal distribution of $X_S$ under $r$. For complementary sets $V\subseteq[N]$ and $K=[N]\setminus V$, the conditional distribution of $X_K$ given $X_V=x_V$ is written as $r_K(\cdot\mid x_V)$ whenever $r_V(x_V)>0$.

We write $r(A)$ and $r(A\mid X_V=x_V)$ for probabilities under a specified distribution $r$. We also use $\Pp$ when the underlying distribution or random experiment is specified by context or a subscript. Expectations follow the same convention. Subscripts such as $\E_r$, $\E_{X\sim p}$, and $\E_{K\sim\mu}$ indicate the source of randomness when needed.

\paragraph{Information-theoretic quantities.}
All logarithms are natural. For probability distributions $p,q\in\Delta(\cX)$, the total-variation distance is
\begin{equation}
\DTV(p,q)
:=
\sup\nolimits_{A\subseteq\cX}|p(A)-q(A)|
=
\tfrac12\sum\nolimits_{x\in\cX}|p(x)-q(x)|.
\label{eq:tv-defn}
\end{equation}
The KL divergence is
\begin{equation}
\KL(p\|q)
:=
\sum\nolimits_{x\in\cX}
p(x)\log\frac{p(x)}{q(x)},
\label{eq:kl-defn}
\end{equation}
with the conventions $0\log\frac{0}{b}:=0$ for $b\ge0$ and $a\log\frac{a}{0}:=+\infty$ for $a>0$.

For $a\in(0,1)$, define the log-odds
\begin{equation}
\logit(a):=\log\frac{a}{1-a},
\label{eq:log-odds}
\end{equation}
with $\logit(0):=-\infty$ and $\logit(1):=+\infty$.

For $a,b\in[0,1]$, define the binary KL divergence
\begin{equation}
\kl(a\|b)
:=
a\log\frac{a}{b}
+
(1-a)\log\frac{1-a}{1-b},
\label{eq:binary-kl-defn}
\end{equation}
using the same extended-value conventions. In particular, $\kl(a\|b)\ge0$, with equality if and only if $a=b$.

\paragraph{Estimation and error metrics.}
For jointly distributed random variables $Y$ and $X$ under a probability distribution $r$, with $Y\in\R^d$ and $\E_r\|Y\|_2^2<\infty$, define the minimum mean-square error
\begin{equation}
\mmse_r(Y\mid X)
:=
\E_r\!\left[
\left\|Y-\E_r[Y\mid X]\right\|_2^2
\right].
\label{eq:mmse-defn}
\end{equation}
For scalar $Y$, this reduces to $\E_r[(Y-\E_r[Y\mid X])^2]$.

\section{Auxiliary Results and Proofs}
\label{app:extra-statements}

\subsection{Auxiliary Results}
\label{app:aux-lemmas}

This section gives auxiliary results that support the main proofs or clarify theoretical interpretations of the main results.

\subsubsection{Basic Properties of the Identifiability Modulus}
\label{app:prop-modulus}

This section records basic well-posedness and monotonicity properties of the $\varepsilon$-identifiability modulus (\Cref{def:modulus}).

\begin{proposition}[Basic properties of the modulus]
\label{prop:modulus-wellposed}
For every $p\in\Delta(\cX)$, mask schedule $\mu$, continuous distance $d$, and discrepancy budget $\varepsilon\ge0$,
\[
0
\le
\modl_d^{\mu}(p,\varepsilon)
\le
\diam_d(\Delta(\cX))
<
\infty,\qquad \text{where }\;
\diam_d(\Delta(\cX))
:=
\sup_{q,q'\in\Delta(\cX)} d(q,q').
\]
Moreover, $\modl_d^{\mu}(p,\varepsilon)$ is nondecreasing in $\varepsilon$.
\end{proposition}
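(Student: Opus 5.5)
The plan is to argue directly from \Cref{def:modulus}, treating the feasible set
\[
F_\varepsilon:=\{q\in\cQ(p):\mathfrak D_\mu(p\|q)\le\varepsilon\}
\]
and establishing three facts in order: it is nonempty, its $d$-values are bounded by the diameter, and it grows with $\varepsilon$.

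\emph{Nonemptiness and nonnegativity.} First observe that $p\in\cQ(p)$ trivially, since $\supp(p)\subseteq\supp(p)$. For every mask $K$ and every context $x_{K^\complement}\in\supp(p_{K^\complement})$, the conditionals coincide, so each conditional KL term in \Cref{eq:fixed-mask-discrepancy} vanishes. Hence $\mathfrak D_\mu(p\|p)=0\le\varepsilon$, so $p\in F_\varepsilon$ for every $\varepsilon\ge0$. The supremum is therefore taken over a nonempty set and is at least $d(p,p)=0$. Alternatively, nonnegativity follows because $d$ is a metric, so every candidate value is nonnegative.

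\emph{Upper bound.} Each feasible $q$ lies in $\Delta(\cX)$, so $d(p,q)\le\sup_{q,q'\in\Delta(\cX)}d(q,q')=\diam_d(\Delta(\cX))$, and taking the supremum gives the middle inequality. For finiteness, note that $\Delta(\cX)$ is a compact subset of $\R^{|\cX|}$, since $\cX$ is finite and the simplex is closed and bounded. Consequently $\Delta(\cX)\times\Delta(\cX)$ is compact. The continuous real-valued function $d$ attains a finite maximum on it by the extreme value theorem, so the diameter is finite.

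\emph{Monotonicity.} If $0\le\varepsilon\le\varepsilon'$, then $F_\varepsilon\subseteq F_{\varepsilon'}$, because the constraint $\mathfrak D_\mu(p\|q)\le\varepsilon$ implies $\mathfrak D_\mu(p\|q)\le\varepsilon'$. A supremum over a larger set is at least as large, which gives the claim.

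There is no real obstacle here. The only points needing care are to verify $p\in\cQ(p)$ so that the supremum is over a nonempty set, and to invoke compactness of the finite-dimensional simplex for finiteness of the diameter. Note that $\mathfrak D_\mu$ may be $+\infty$ for some $q\in\cQ(p)$ only if support covering fails, which is excluded by admissibility. In any case, the argument never needs finiteness of $\mathfrak D_\mu$.
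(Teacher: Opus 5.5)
Your proposal is correct and follows essentially the same route as the paper's proof. Both show nonemptiness via $p\in\cQ(p)$ with $\mathfrak D_\mu(p\|p)=0$, bound every feasible $d(p,q)$ by the diameter (finite by compactness of the simplex and continuity of $d$), and get monotonicity from the nestedness of the feasible sets.
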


\begin{proof}
Write $\cQ_{\varepsilon}^{\mu}(p):=\{q\in\cQ(p):\mathfrak D_{\mu}(p\| q)\le\varepsilon\}$ for the admissible set in \Cref{eq:modulus}. We first show it is nonempty. The data distribution itself belongs to $\cQ(p)$. Moreover, for every mask $K$, the fixed-mask discrepancy 
\[
\mathfrak D_K(p\| p)=\E_{X_{K^{\complement}}\sim p_{K^{\complement}}}[\KL(p_K(\cdot\mid X_{K^{\complement}})\| p_K(\cdot\mid X_{K^{\complement}}))]=0,
\]
since each conditional KL divergence between identical distributions vanishes. Averaging over $K\sim\mu$ gives
\[
\mathfrak D_{\mu}(p\| p)=\E_{K\sim\mu}\mathfrak D_K(p\| p)=0\le\varepsilon,
\]
so $p\in\cQ_{\varepsilon}^{\mu}(p)$. Hence the supremum in \Cref{eq:modulus} is over a nonempty set, and since $d(p,p)=0$,
\[
\modl_d^{\mu}(p,\varepsilon)\ge 0.
\]

Next, for every $q\in\cQ_{\varepsilon}^{\mu}(p)$,
\[
d(p,q)\le\diam_d(\Delta(\cX)).
\]
Because $\cA$ is a finite alphabet, $\cX=\cA^N$ is finite and the simplex $\Delta(\cX)$ is compact. Since $d$ is continuous on $\Delta(\cX)\times\Delta(\cX)$, the diameter $\diam_d(\Delta(\cX))$ is finite. Taking the supremum over $\cQ_{\varepsilon}^{\mu}(p)$ therefore gives
\[
0
\le
\modl_d^{\mu}(p,\varepsilon)
\le
\diam_d(\Delta(\cX))
<
\infty.
\]

Finally, let $0\le\varepsilon\le\varepsilon'$. Then $\cQ_{\varepsilon}^{\mu}(p)\subseteq\cQ_{\varepsilon'}^{\mu}(p)$, because every model distribution with discrepancy at most $\varepsilon$ also has discrepancy at most $\varepsilon'$. Taking the supremum of $d(p,q)$ over these nested sets yields
\[
\modl_d^{\mu}(p,\varepsilon)
\le
\modl_d^{\mu}(p,\varepsilon').
\]
Thus the modulus is nondecreasing in $\varepsilon$.
\end{proof}

\subsubsection{Approximate Tensorization and the Modulus}
\label{app:at-transfer}

This section relates the entropy-functional form of block AT \citep{caputo2021block} to its formulation in terms of conditional KL divergences \citep{li2024promises}, and derives the modulus bound of \Cref{eq:at-upper}.

Throughout, we fix a mask schedule $\mu$. For a distribution $r\in\Delta(\cX)$ and a function $\phi\colon\cX\to[0,\infty)$, the entropy functional is
\begin{equation}
\Ent_r(\phi)
:=
\E_r[\phi\log\phi]-\E_r[\phi]\log\E_r[\phi],
\label{eq:ent-def}
\end{equation}
with the convention $0\log0:=0$. If $\E_r[\phi]=0$, nonnegativity implies $\phi=0$ $r$-almost surely and hence $\Ent_r(\phi)=0$. Jensen's inequality applied to the convex function $t\mapsto t\log t$ gives $\Ent_r(\phi)\ge0$. Because every expectation below integrates $\phi$ against $r$ or its conditionals, the values of $\phi$ outside $\supp(r)$ have no effect on the integrals. We therefore assume without loss of generality that $\phi$ vanishes outside this support.

The subtracted term in \Cref{eq:ent-def} extends the relative entropy to unnormalized arguments: When $\E_r[\phi]>0$, normalizing $\phi$ defines the distribution $p_\phi:=(\phi/\E_r[\phi])\,r$. Expanding its relative entropy, $\KL(p_\phi\| r)=\E_r[(\phi/\E_r[\phi])\log(\phi/\E_r[\phi])]$, gives
\begin{equation}
\Ent_r(\phi)
=
\E_r[\phi]\,\KL\!\big(p_\phi\;\big\|\;r\big).
\label{eq:ent-kl}
\end{equation}
Thus, $\Ent_r$ is the homogeneous extension of relative entropy from normalized densities to nonnegative functions.
In particular, for a distribution $p'\in\Delta(\cX)$ with $\supp(p')\subseteq\supp(r)$, its density with respect to $r$,
\begin{equation}
\phi_{p'}
:=
\frac{\mathrm dp'}{\mathrm dr},
\qquad
\phi_{p'}(x)
=
\begin{cases}
p'(x)/r(x), & x\in\supp(r),\\[2pt]
0, & \text{otherwise},
\end{cases}
\label{eq:density-def}
\end{equation}
satisfies $\E_r[\phi_{p'}]\!=\!p'(\supp(r))\!=\!1$, and thus recovers the relative entropy $\Ent_r(\phi_{p'})=\KL(p'\,\|\,r)$. 

For a mask $K$ and a context $x_{K^\complement}$ with $r_{K^\complement}(x_{K^\complement})>0$, we write $\Ent_{r_K(\cdot\mid x_{K^\complement})}(\phi)$ for the entropy of the fiber restriction $x_K\mapsto\phi(x_{K^\complement},x_K)$ under the conditional distribution. Finally, for a distribution $p'$ with $\supp(p')\subseteq\supp(r)$ we have $r\in\cQ(p')$ by \Cref{eq:model-class}, so $\mathfrak D_\mu(p'\,\|\,r)$ is well defined by \Cref{def:masked-log-discrepancy}.

\begin{lemma}[Equivalent forms of block AT and a modulus bound]
\label{lem:at-transfer}
Fix a mask schedule $\mu$.
\begin{enumerate}
\item[(i)] For every distribution $r\in\Delta(\cX)$ and every constant $\bar C\in[0,\infty)$, the entropy-functional block-factorization inequality
\begin{equation}
\Ent_r(\phi)
\;\le\;
\bar C\,
\E_{K\sim\mu}\,
\E_{X_{K^\complement}\sim r_{K^\complement}}
\Big[\Ent_{r_K(\cdot\mid X_{K^\complement})}(\phi)\Big]
\quad\text{for every } \phi\colon\cX\to[0,\infty)
\label{eq:at-functional}
\end{equation}
holds if and only if the density-ratio inequality of \Cref{eq:at-def} holds at the same constant:
\begin{equation*}
\KL(p'\,\|\, r)
\;\le\;
\bar C\,
\mathfrak D_{\mu}(p'\,\|\, r)
\qquad\text{for every } p'\in\Delta(\cX)
\text{ with } \supp(p')\subseteq\supp(r).
\end{equation*}
Thus, the optimal constants of the two inequalities coincide with $\bar C_{\AT}(r,\mu)$ in \Cref{eq:at-def}.

\item[(ii)] For every $p\in\Delta(\cX)$ and every $\varepsilon\ge0$, writing $\cQ_{\varepsilon}^{\mu}(p):=\{q\in\cQ(p): \mathfrak D_{\mu}(p\| q)\le\varepsilon\}$ for the $\varepsilon$-discrepancy ball, the modulus bound of \Cref{eq:at-upper} holds:
\begin{equation*}
\modl_{\TV}^{\mu}(p,\varepsilon)
\;\le\;
\sqrt{\tfrac12\,\bar C_{\AT}^{\varepsilon}(p,\mu)\,\varepsilon},
\qquad\text{with }\;
\bar C_{\AT}^{\varepsilon}(p,\mu)
=
\sup\nolimits_{q\in\cQ_{\varepsilon}^{\mu}(p)}
\bar C_{\AT}(q,\mu),
\end{equation*}
where the right-hand side is understood as $+\infty$ when $\bar C_{\AT}^{\varepsilon}(p,\mu)=\infty$.
\end{enumerate}
\end{lemma}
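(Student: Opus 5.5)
For part (i), the plan is to identify each side of \Cref{eq:at-functional} with the corresponding side of \Cref{eq:at-def} under the substitution $\phi=\phi_{p'}$ from \Cref{eq:density-def}, and then to pass between general nonnegative $\phi$ and normalized densities by homogeneity. The left sides agree by \Cref{eq:ent-kl}: $\Ent_r(\phi_{p'})=\KL(p'\|r)$.

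The main computation is the fiberwise identity
\[
\E_{X_{K^\complement}\sim r_{K^\complement}}\big[\Ent_{r_K(\cdot\mid X_{K^\complement})}(\phi_{p'})\big]=\mathfrak D_K(p'\|r).
\]
To prove it, fix a context $x_{K^\complement}$ with $r_{K^\complement}(x_{K^\complement})>0$. The fiber mean is
\[
\E_{r_K(\cdot\mid x_{K^\complement})}[\phi_{p'}]=\sum_{x_K}\frac{r(x)}{r_{K^\complement}(x_{K^\complement})}\cdot\frac{p'(x)}{r(x)}=\frac{p'_{K^\complement}(x_{K^\complement})}{r_{K^\complement}(x_{K^\complement})}.
\]
Normalizing $\phi_{p'}$ on the fiber then produces exactly the density of $p'_K(\cdot\mid x_{K^\complement})$ with respect to $r_K(\cdot\mid x_{K^\complement})$. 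Applying \Cref{eq:ent-kl} on the fiber gives
\[
\Ent_{r_K(\cdot\mid x_{K^\complement})}(\phi_{p'})=\frac{p'_{K^\complement}(x_{K^\complement})}{r_{K^\complement}(x_{K^\complement})}\,\KL\big(p'_K(\cdot\mid x_{K^\complement})\,\|\,r_K(\cdot\mid x_{K^\complement})\big).
\]
Integrating against $r_{K^\complement}$ converts the outer measure to $p'_{K^\complement}$, which yields $\mathfrak D_K(p'\|r)$. Contexts with $p'_{K^\complement}(x_{K^\complement})=0$ have zero fiber mean, so $\phi_{p'}$ vanishes there and contributes nothing. Since $\supp(p')\subseteq\supp(r)$, no context carries $p'$-mass outside $\supp(r_{K^\complement})$. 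Averaging over $K\sim\mu$ finishes the identity.

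With the identity in hand, \Cref{eq:at-functional} implies \Cref{eq:at-def} immediately, since every admissible $p'$ gives an admissible $\phi_{p'}$. For the converse, take a general $\phi\ge0$ vanishing off $\supp(r)$. If $\E_r\phi=0$, both sides vanish. Otherwise, set $p'=p_\phi$, so that $\phi=\E_r[\phi]\,\phi_{p'}$ and $\supp(p')\subseteq\supp(r)$. Both entropy functionals are positively $1$-homogeneous: $\Ent(a\phi)=a\Ent(\phi)$ because the $\log a$ terms cancel. The inequality for $\phi_{p'}$ therefore scales to $\phi$. Because the two inequalities are equivalent at every constant, their optimal constants coincide.

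For part (ii), take any $q\in\cQ^\mu_\varepsilon(p)$; we may assume $\bar C^\varepsilon_{\AT}(p,\mu)<\infty$. Since $\supp(p)\subseteq\supp(q)$, apply \Cref{eq:at-def} with reference $r=q$ and $p'=p$ to obtain
\[
\KL(p\|q)\le\bar C_{\AT}(q,\mu)\,\mathfrak D_\mu(p\|q)\le\bar C^\varepsilon_{\AT}(p,\mu)\,\varepsilon.
\]
Pinsker's inequality, $\DTV\le\sqrt{\KL/2}$, then bounds $\DTV(p,q)$, and taking the supremum over $q$ gives the stated modulus bound.

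The only delicate step is the support bookkeeping in the fiberwise identity: contexts where $r_{K^\complement}>0$ but $p'_{K^\complement}=0$ must contribute zero, and the convention that $\phi$ vanishes off $\supp(r)$ must not alter either side of the inequality.
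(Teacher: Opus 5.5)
Your proposal is correct and follows essentially the same route as the paper's proof. It has the same fiberwise identity, in which the fiber mean $p'_{K^\complement}/r_{K^\complement}$ converts the $r_{K^\complement}$-average of conditional entropies into $\mathfrak D_K(p'\|r)$, and the same degree-one homogeneity reduction to normalized densities. Part (ii) likewise matches: you apply AT at reference $q$, then Pinsker, then take the supremum over the $\varepsilon$-ball.
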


\begin{intuitionbox}
\textbf{Proof intuition.} For a normalized density $\phi=p'/r$, the global entropy $\Ent_r(\phi)$ equals $\KL(p'\|r)$. The key point is that conditional entropy supplies the context weights required by the masked discrepancy: on a fiber with visible context $x_V$, the conditional mean of $\phi$ is $p'_V(x_V)/r_V(x_V)$. The fiber entropy is therefore the conditional KL divergence multiplied by this ratio. Averaging under $r_V$ converts the weights to $p'_V$, giving the fixed-mask discrepancy. Homogeneity extends the identity to unnormalized functions. Applying AT with reference distribution $q$, followed by Pinsker's inequality, gives the modulus bound.
\end{intuitionbox}

\begin{proof}
\emph{Part (i).}
\emph{Step 1: normalization.}
Both sides of \Cref{eq:at-functional} are homogeneous of degree one in $\phi$: for $\lambda>0$, expanding $(\lambda t)\log(\lambda t)=\lambda\,t\log t+\lambda t\log\lambda$ shows that the $\log\lambda$ contributions cancel between the two terms of \Cref{eq:ent-def}, so $\Ent_r(\lambda\phi)=\lambda\,\Ent_r(\phi)$, and the same computation applies to every conditional entropy on the right-hand side. Moreover, if $\E_r[\phi]=0$, then $\phi$ vanishes on $\supp(r)$, both sides are $0$ by convention, and the inequality holds trivially. 

Together, these observations reduce \Cref{eq:at-functional} to functions with $\E_r[\phi]=1$. Such functions are the densities $\phi_{p'}$ of \Cref{eq:density-def}: given such a $\phi$, the distribution $p':=\phi\, r$ satisfies $\supp(p')\subseteq\supp(r)$ and $\phi=\phi_{p'}$, and the correspondence $\phi_{p'}\leftrightarrow p'$ is a bijection. 

\emph{Step 2: the global and fiber identities.}
\Cref{eq:ent-kl} applies with $\E_r[\phi_{p'}]=1$ and $p_{\phi_{p'}}=p'$, giving $\Ent_r(\phi_{p'})=\KL(p'\,\|\,r)$, finite because $\cX$ is finite and $\supp(p')\subseteq\supp(r)$.

Fix a mask $K$ and a context $x_{K^\complement}\in\supp(r_{K^\complement})$, and set the ratio $g:=p'_{K^\complement}(x_{K^\complement})/r_{K^\complement}(x_{K^\complement})$. If $p'_{K^\complement}(x_{K^\complement})=0$, then $\phi_{p'}$ vanishes on this fiber, and the conditional entropy is zero. Otherwise, for every completion $x_K$ with $r_K(x_K\mid x_{K^\complement})>0$, the density factorizes as
\[
\phi_{p'}(x_{K^\complement},x_K)
=
g\cdot
\frac{p'_K(x_K\mid x_{K^\complement})}
{r_K(x_K\mid x_{K^\complement})},
\]
where both sides vanish when $p'_K(x_K\mid x_{K^\complement})=0$. The fiber restriction of $\phi_{p'}$ thus has conditional mean $g$, and its normalization $\phi_{p'}/g$ is the density of $p'_K(\cdot\mid x_{K^\complement})$ with respect to $r_K(\cdot\mid x_{K^\complement})$. Applying \Cref{eq:ent-kl} under the conditional distribution gives
\[
\Ent_{r_K(\cdot\mid x_{K^\complement})}(\phi_{p'})
=
g\,\KL\big(p'_K(\cdot\mid x_{K^\complement})\,\big\|\,
r_K(\cdot\mid x_{K^\complement})\big),
\]
which is finite by the fiberwise domination inherited from $\supp(p')\subseteq\supp(r)$.

\emph{Step 3: averaging.}
Averaging the fiber identity over $X_{K^\complement}\sim r_{K^\complement}$, the ratio $g$ converts the averaging measure into $p'_{K^\complement}$:
\[
\E_{X_{K^\complement}\sim r_{K^\complement}}
\Big[\Ent_{r_K(\cdot\mid X_{K^\complement})}(\phi_{p'})\Big]
=
\E_{X_{K^\complement}\sim p'_{K^\complement}}
\Big[\KL\big(p'_K(\cdot\mid X_{K^\complement})\,\big\|\,
r_K(\cdot\mid X_{K^\complement})\big)\Big]
=
\mathfrak D_K(p'\,\|\, r),
\]
by \Cref{def:masked-log-discrepancy}, with contexts outside $\supp(p'_{K^\complement})$ contributing zero to both sides. Averaging over $K\sim\mu$ yields $\mathfrak D_{\mu}(p'\,\|\, r)$.

By Steps~2--3, under the bijection of Step~1 every instance of \Cref{eq:at-functional} has the same left-hand side and the same right-hand side as the corresponding instance of \Cref{eq:at-def}. The two families of inequalities are therefore valid for the same constants $\bar C$, and their optimal constants, the infima of the respective sets of valid constants with $\inf\emptyset:=\infty$, coincide.

\emph{Part (ii).}
If $\bar C_{\AT}^{\varepsilon}(p,\mu)=\infty$, the bound holds vacuously with its right-hand side read as $+\infty$; assume therefore that it is finite. Let $q\in\cQ_{\varepsilon}^{\mu}(p)$. Because $\supp(p)\subseteq\supp(q)$, \Cref{eq:at-def} applies with $r=q$ and $p'=p$, yielding
\[
\KL(p\,\|\, q)
\;\le\;
\bar C_{\AT}(q,\mu)\,
\mathfrak D_{\mu}(p\,\|\, q)
\;\le\;
\bar C_{\AT}(q,\mu)\,\varepsilon
\;\le\;
\bar C_{\AT}^{\varepsilon}(p,\mu)\,\varepsilon.
\]
Pinsker's inequality then gives $\DTV(p,q)\le\sqrt{\tfrac12\, \bar C_{\AT}^{\varepsilon}(p,\mu)\,\varepsilon}$. This bound is independent of $q$, so taking the supremum over $q\in\cQ_{\varepsilon}^{\mu}(p)$, which is the supremum defining $\modl_{\TV}^{\mu}(p,\varepsilon)$ in \Cref{def:modulus}, completes the proof.
\end{proof}

\begin{remark}[Relation to the literature]
\label{rem:at-literature}
\citet{caputo2021block} establish entropy-functional block factorization under strong spatial-mixing conditions. Their normalization includes the coordinate-coverage factor
\(
\gamma(\mu):=\min_{i\in[N]}\sum_{K\ni i}\mu(K).
\)
With block weights $\mu(K)$, their constant $C$ yields $\bar C_{\AT}(r,\mu)\le C/\gamma(\mu)$ whenever $\gamma(\mu)>0$. \citet{li2024promises} use the conditional-KL formulation of block AT in their finite-sample recovery analysis, with the constant evaluated at the learned distribution. Their framework allows configuration-dependent masking and conditions on $(X_{K^\complement},K)$. For masks drawn independently of $X$, conditioning additionally on $K$ does not change the conditional distribution of the masked coordinates, so their conditional discrepancy reduces to $\mathfrak D_\mu(p'\|r)$. \Cref{lem:at-transfer}(ii) applies the same AT-to-TV argument uniformly over all distributions within the discrepancy budget. In the continuous setting, \citet{koehler2023statistical} use the log-Sobolev constant of the fitted distribution to convert excess score-matching risk into a bound on joint KL divergence.
\end{remark}

\subsubsection{Shared-Component Mixtures}
\label{app:shared-mix}

This section provides two elementary identities for mixtures that share the same disjoint components and differ only in their mixing weights.

\begin{lemma}[Shared-component mixtures]
\label{lem:shared-mix}
Let $\cZ$ be a finite set, and let $\nu_0,\nu_1\in\Delta(\cZ)$ have disjoint supports. For $a\in[0,1]$, define $m_a:=a\nu_1+(1-a)\nu_0$. Then, for every $a,b\in[0,1]$,
\begin{equation}
\DTV(m_a,m_b)=|a-b|
\label{eq:shared-mixture-tv}
\end{equation}
and
\begin{equation}
\KL(m_a\| m_b)=\kl(a\| b),
\label{eq:shared-mixture-kl}
\end{equation}
where both sides of \Cref{eq:shared-mixture-kl} are interpreted using the usual extended-value conventions.
\end{lemma}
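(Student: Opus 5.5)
The plan is a direct computation that exploits the disjointness of $\supp(\nu_0)$ and $\supp(\nu_1)$. Write $S_1:=\supp(\nu_1)$ and $S_0:=\supp(\nu_0)$, which are disjoint. Pointwise we then have three cases:
\begin{itemize}
\item $m_a(z)=a\,\nu_1(z)$ for $z\in S_1$;
\item $m_a(z)=(1-a)\,\nu_0(z)$ for $z\in S_0$;
\item $m_a(z)=0$ for $z\notin S_0\cup S_1$.
\end{itemize}
The same holds for $m_b$. Both identities follow by summing over $S_1$ and $S_0$ separately.

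For the total-variation identity, use $\DTV(m_a,m_b)=\tfrac12\sum_z|m_a(z)-m_b(z)|$ from \Cref{eq:tv-defn}. On $S_1$ the summand is $|a-b|\,\nu_1(z)$, and on $S_0$ it is $|(1-a)-(1-b)|\,\nu_0(z)=|a-b|\,\nu_0(z)$. Since each $\nu_i$ sums to one, the total is $\tfrac12\cdot 2|a-b|=|a-b|$.

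For the KL identity, first take $a,b\in(0,1)$. On $S_1$ the likelihood ratio is $m_a(z)/m_b(z)=a/b$, and on $S_0$ it is $(1-a)/(1-b)$. Hence
\[
\KL(m_a\|m_b)=\sum_{z\in S_1}a\nu_1(z)\log\tfrac{a}{b}+\sum_{z\in S_0}(1-a)\nu_0(z)\log\tfrac{1-a}{1-b}=\kl(a\|b),
\]
which matches \Cref{eq:binary-kl-defn}.

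The only delicate step is the boundary cases $a$ or $b\in\{0,1\}$, which must be matched to the conventions $0\log(0/b)=0$ and $a\log(a/0)=+\infty$.
\begin{itemize}
\item If $a=0$, the $S_1$ terms vanish on both sides, and similarly the $S_0$ terms vanish if $a=1$.
\item If $b=0$ and $a>0$, then $m_a$ charges $S_1$ while $m_b$ vanishes there, so both sides equal $+\infty$; the case $b=1$, $a<1$ is symmetric on $S_0$.
\end{itemize}
So the termwise correspondence between the sums over $S_1$ and $S_0$ and the two terms of $\kl$ holds in every case, including infinite values. Nonemptiness of $S_0$ and $S_1$ is automatic because $\nu_0$ and $\nu_1$ are probability distributions, so no further casework is needed.
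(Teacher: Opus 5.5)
Your proof is correct and follows essentially the same route as the paper: both split the sums over the disjoint supports $S_0$ and $S_1$, obtain $\|m_a-m_b\|_1=2|a-b|$ for the total-variation identity, and reduce the KL sum to $\kl(a\|b)$ for interior $a,b$. Your explicit casework for $a$ or $b\in\{0,1\}$ spells out what the paper dismisses in one line as following from the extended-value conventions.
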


\begin{intuitionbox}
\textbf{Proof intuition.} Disjoint supports make the component label observable from the sample. Since both mixtures use the same distribution within each component, this label contains all the information distinguishing them. Their TV distance and KL divergence therefore equal those between the Bernoulli distributions with parameters $a$ and $b$.
\end{intuitionbox}

\begin{proof}
Let $S_j:=\supp(\nu_j)$ for $j\in\{0,1\}$. Since $S_0\cap S_1=\varnothing$,
\[
m_a-m_b
=
(a-b)\nu_1-(a-b)\nu_0.
\]
The two signed terms have disjoint supports, so
\[
\|m_a-m_b\|_1
=
|a-b|\bigl(\|\nu_1\|_1+\|\nu_0\|_1\bigr)
=
2|a-b|,
\]
which proves \Cref{eq:shared-mixture-tv} by substituting it into the definition of the total-variation distance (\Cref{eq:tv-defn}).

For the relative entropy, first suppose $a,b\in(0,1)$. The complement of $S_0\cup S_1$ contributes zero. Splitting the sum over the two supports gives
\begin{align}
\KL(m_a\| m_b)
&=
\sum\nolimits_{z\in S_1}
a\nu_1(z)
\log\frac{a\nu_1(z)}{b\nu_1(z)}
\nonumber
+
\sum\nolimits_{z\in S_0}
(1-a)\nu_0(z)
\log
\frac{(1-a)\nu_0(z)}{(1-b)\nu_0(z)}
\nonumber\\
&=
a\log\frac{a}{b}
+
(1-a)\log\frac{1-a}{1-b}
\nonumber\\
&=
\kl(a\| b).
\end{align}
At the boundary points $a,b\in[0,1]$, the same identity follows directly from the extended-value conventions.
\end{proof}

\subsubsection{Pooled Posterior Confidence}
\label{app:pooled-confidence}

This lemma converts the mode-wise guarantees in \Cref{ass:mode-pinning} into a single high-probability statement under the pooled visible marginal. It shows that a large visible context is rarely ambiguous about the global mode.

\begin{lemma}[Pooled posterior confidence]
\label{lem:pooled}
Suppose \Cref{ass:mode-pinning} holds, and let $c_1$ and $\kappa$ be as there. Recall $\beta_V(x_V)=p(Z=1\mid X_V=x_V)$ from \Cref{eq:mode-posterior}. For every $V\subseteq[N]$ with $|V|\ge s_{\mathrm{pin}}$, define
\begin{equation}
G_V
:=
\left\{
x_V\in\supp(p_V):
\min\{\beta_V(x_V),1-\beta_V(x_V)\}
\le e^{-\kappa|V|}
\right\}.
\label{eq:pooled-confidence-set}
\end{equation}
Then
\begin{equation*}
p_V(G_V^\complement)
\le
e^{-c_1|V|},
\end{equation*}
where the complement is taken relative to $\supp(p_V)$, i.e., $G_V^\complement := \supp(p_V)\setminus G_V$.
\end{lemma}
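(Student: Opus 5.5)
The plan is to decompose the pooled visible marginal by mode and to show that each mode-wise typical set from \Cref{ass:mode-pinning} lies inside $G_V$.

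Fix $V$ with $|V|\ge s_{\mathrm{pin}}$. Because the mode regions are disjoint, the law of total probability gives
\[
p_V = w\,p^+_V + (1-w)\,p^-_V .
\]

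Next we check the inclusion $\cE^\tau_V\subseteq G_V$ for each $\tau\in\{+,-\}$. Take $x_V\in\cE^\tau_V$. Then $x_V\in\supp(p^\tau_V)$, and since $p(\cX_\tau)>0$ this gives $p_V(x_V)\ge p(\cX_\tau)\,p^\tau_V(x_V)>0$, so $x_V\in\supp(p_V)$. By definition of $\cE^\tau_V$,
\[
p(\cX_{-\tau}\mid X_V=x_V)\le e^{-\kappa|V|}.
\]
For $\tau=+$ the left side equals $1-\beta_V(x_V)$, and for $\tau=-$ it equals $\beta_V(x_V)$. In either case $\min\{\beta_V,1-\beta_V\}\le e^{-\kappa|V|}$, so $x_V\in G_V$.

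From the inclusion, $G_V^\complement$ is contained in the complement of $\cE^\tau_V$ for each $\tau$. Hence
\[
p^\tau_V(G_V^\complement)\le 1-p^\tau_V(\cE^\tau_V)\le e^{-c_1|V|}.
\]
Mixing these two bounds with weights $w$ and $1-w$ yields
\[
p_V(G_V^\complement)\le w\,e^{-c_1|V|}+(1-w)\,e^{-c_1|V|}=e^{-c_1|V|}.
\]

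There is no real obstacle here. The only points needing care are support bookkeeping and matching the direction of the posterior for each mode. Points of $\supp(p_V)$ lying outside $\supp(p^\tau_V)$ carry zero $p^\tau_V$-mass, so they do not affect the bound for that mode. The mode lower bound $c_0$ is not needed for this step.
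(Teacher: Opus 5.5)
Your proposal is correct and follows essentially the same route as the paper's proof: decompose $p_V=w_+p_V^++w_-p_V^-$, show $\cE^\tau_V\subseteq G_V$ for each mode, bound $p_V^\tau(G_V^\complement)\le e^{-c_1|V|}$, and average. Your explicit check that $\cE^\tau_V\subseteq\supp(p_V)$ fills in a support detail the paper leaves implicit.
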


\begin{intuitionbox}
\textbf{Proof intuition.}
Under either mode, posterior ambiguity can occur only outside that mode's typical context set. Each mode assigns probability at most $e^{-c_1|V|}$ to this exceptional event. The pooled probability is a weighted average of these two probabilities, so it satisfies the same bound.
\end{intuitionbox}

\begin{proof}
Let $w_\tau:=p(\cX_\tau)$ for $\tau\in\{+,-\}$. Since $\cX=\cX_+\sqcup\cX_-$, the visible marginal decomposes as
\begin{equation}
p_V = w_+p_V^+ + w_-p_V^-.
\label{eq:visible-mixture-decomposition}
\end{equation}
For $x_V\in\cE_V^+$, \Cref{ass:mode-pinning} gives
\[
1-\beta_V(x_V)
=
p(\cX_-\mid X_V=x_V)
\le
e^{-\kappa|V|},
\]
and hence $x_V\in G_V$. Thus,
\[
\cE_V^+\subseteq G_V.
\]
Similarly, for $x_V\in\cE_V^-$,
\[
\beta_V(x_V)
=
p(\cX_+\mid X_V=x_V)
\le
e^{-\kappa|V|},
\]
so
\[
\cE_V^-\subseteq G_V.
\]
Therefore, for each $\tau\in\{+,-\}$,
\[
p_V^\tau(G_V^\complement)
\le
p_V^\tau\!\left((\cE_V^\tau)^\complement\right)
\le
e^{-c_1|V|},
\]
where the final inequality is exactly the high-probability condition in \Cref{ass:mode-pinning}. Averaging these two bounds using \Cref{eq:visible-mixture-decomposition} yields
\[
p_V(G_V^\complement)
\;=\;
w_+p_V^+(G_V^\complement)
+
w_-p_V^-(G_V^\complement)
\;\le\;
(w_++w_-)e^{-c_1|V|}
\;=\;
e^{-c_1|V|}.
\]
\end{proof}

\subsubsection{Cross-Mode Transition Bound}
\label{app:cross-mode-transition}

This section shows that mode pinning suppresses one-step transitions between the two mode regions when a block update retains a large visible context.

\begin{lemma}[One-step cross-mode transition bound]
\label{lem:cross-mode-transition}
Suppose \Cref{ass:mode-pinning} holds, and recall the combined rate $c=\min\{c_1,\kappa\}$. Fix a mode $\tau\in\{+,-\}$ and let $X\sim p^\tau$. Unless indicated otherwise, $\Pp$ includes randomness from the initial configuration, the heat-bath resampling, and the mask draw when the mask is random.
\begin{enumerate}
\item[\textup{(i)}] Fix a mask $K\subseteq[N]$ with $|K^{\complement}|\ge s_{\mathrm{pin}}$. Let $X'$ be a single fixed-mask heat-bath update of $X$: set $X'_{K^{\complement}}=X_{K^{\complement}}$ and draw $X'_K\sim p_K(\cdot\mid X_{K^{\complement}})$. Then
    \begin{equation}
    \Pp\!\left(X'\in\cX_{-\tau}\right)
    \le
    2e^{-c|K^{\complement}|}.
    \label{eq:component-crossing}
    \end{equation}
\item[\textup{(ii)}] Let $\widetilde X$ be a single step of the block dynamics induced by $(p,\mu)$: draw $K\sim\mu$ independently of $X$ and set $\widetilde X_{K^{\complement}}=X_{K^{\complement}}$, $\widetilde X_K\sim p_K(\cdot\mid X_{K^{\complement}})$. Then
    \begin{equation}
    \Pp\!\big(\widetilde{X}\in\cX_{-\tau}\big)
    \le
    \pi_s(\mu)+2e^{-cs},\qquad\text{for every}\;
    s\in\{s_{\mathrm{pin}},\ldots,N\}.
    \label{eq:mixed-crossing}
    \end{equation}
\item[\textup{(iii)}] In particular, if $v_{\mathrm{min}}(\mu)\ge s_{\mathrm{pin}}$, then
\begin{equation}
\Pp\!\big(\widetilde{X}\in\cX_{-\tau}\big)
\le
2e^{-c\,v_{\mathrm{min}}(\mu)}.
\label{eq:min-visible-crossing}
\end{equation}
\end{enumerate}
\end{lemma}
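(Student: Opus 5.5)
The plan is to prove part (i) by conditioning on the visible context and splitting on typicality, then obtain (ii) and (iii) by averaging over the mask.

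\emph{Part (i).} Write $V:=K^{\complement}$, so $|V|\ge s_{\mathrm{pin}}$. Since $X'_V=X_V$ and $X'_K\sim p_K(\cdot\mid X_V)$, conditioning on $X_V=x_V$ gives
\[
\Pp(X'\in\cX_{-\tau}\mid X_V=x_V)=p(\cX_{-\tau}\mid X_V=x_V).
\]
Under $X\sim p^\tau$, the context $X_V$ is distributed as $p^\tau_V$. Hence
\[
\Pp(X'\in\cX_{-\tau})=\E_{X_V\sim p^\tau_V}\big[p(\cX_{-\tau}\mid X_V)\big].
\]
I would split this expectation on the typical set $\cE^\tau_V$ of \Cref{ass:mode-pinning}. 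On $\cE^\tau_V$ the integrand is at most $e^{-\kappa|V|}$ by definition. Off $\cE^\tau_V$ I bound the integrand by $1$, and this event has $p^\tau_V$-probability at most $e^{-c_1|V|}$. Summing the two pieces gives $e^{-\kappa|V|}+e^{-c_1|V|}\le 2e^{-c|V|}$ with $c=\min\{c_1,\kappa\}$.

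One subtlety needs checking. The set $\cE^\tau_V$ is defined inside $\supp(p^\tau_V)$, while the conditional is taken under the pooled $p$. Since $p^\tau_V\ll p_V$, every context in $\supp(p^\tau_V)$ has $p_V(x_V)>0$, so the conditional is well defined and the split is legitimate. This is also why the fresh draw uses $p_K(\cdot\mid x_V)$, the pooled conditional, rather than $p^\tau$.

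\emph{Part (ii).} Draw $K\sim\mu$ independently of $X$ and condition on $K$. Fix $s\in\{s_{\mathrm{pin}},\dots,N\}$ and split masks into two groups:
\begin{itemize}
\item Masks with $|K^{\complement}|\le s$: bound the crossing probability by $1$. This group contributes at most $\pi_s(\mu)$.
\item Masks with $|K^{\complement}|>s\ge s_{\mathrm{pin}}$: part (i) applies and gives a crossing probability of at most $2e^{-c|K^{\complement}|}\le 2e^{-cs}$. Weighting by a $\mu$-probability of at most $1$ keeps this contribution at most $2e^{-cs}$.
\end{itemize}
Adding the two groups yields \Cref{eq:mixed-crossing}.

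\emph{Part (iii).} If $v_{\mathrm{min}}(\mu)\ge s_{\mathrm{pin}}$, I would not use (ii) at $s=v_{\mathrm{min}}$, since that choice would leave the $\pi_s$ term nonzero. Instead, apply (i) to every mask in $\supp(\mu)$ directly: each has $|K^{\complement}|\ge v_{\mathrm{min}}(\mu)\ge s_{\mathrm{pin}}$, so each crossing probability is at most $2e^{-c\,v_{\mathrm{min}}(\mu)}$. Averaging over $K\sim\mu$ gives \Cref{eq:min-visible-crossing}.

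The main obstacle is bookkeeping rather than analysis. The points to get right are using the pooled conditional in (i) and avoiding the boundary case $|K^{\complement}|=s$ in (ii).
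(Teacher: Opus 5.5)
Your proposal is correct and follows the paper's proof essentially step for step. It reduces the crossing probability to the opposite-mode posterior averaged under $p^\tau_{K^{\complement}}$, splits on the typical set $\cE^\tau_{K^{\complement}}$, and splits the schedule at threshold $s$ for part (ii). For part (iii) it applies part (i) directly to every mask in $\supp(\mu)$, exactly as in Step~4 of the paper.
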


\begin{intuitionbox}
\textbf{Proof intuition.} Given the retained context on $V=K^\complement$, a heat-bath update crosses modes with probability equal to the opposite-mode posterior. Under $p^\tau$, this posterior is at most $e^{-\kappa|V|}$ on typical contexts, while atypical contexts have probability at most $e^{-c_1|V|}$. Adding these contributions gives the fixed-mask bound. For a random mask, masks retaining at most $s$ coordinates contribute at most $\pi_s(\mu)$, and the remaining masks contribute at most $2e^{-cs}$.
\end{intuitionbox}

\begin{proof}
\emph{Step 1: reduction to the opposite-mode posterior.}
Fix $\tau\in\{+,-\}$ and a mask $K\subseteq[N]$ with $|K^\complement|\ge s_{\mathrm{pin}}$. Let $X\sim p^\tau$, and let $X'$ be the fixed-mask update of part~(i): $X'_{K^\complement}=X_{K^\complement}$ and $X'_K\sim p_K(\cdot\mid X_{K^\complement})$. Since the update keeps the visible coordinates fixed and redraws the masked block from $p_K(\cdot\mid X_{K^\complement})$, the updated configuration has conditional distribution
\[
X'\mid X_{K^\complement}
\sim
p(\cdot\mid X_{K^\complement}),
\]
and therefore
\begin{equation}
\Pp\!\left(
X'\in\cX_{-\tau}
\,\middle|\,
X_{K^\complement}
\right)
=
p(\cX_{-\tau}\mid X_{K^\complement}).
\label{eq:crossing-given-visible}
\end{equation}
Since $X\sim p^\tau$, its visible marginal satisfies $X_{K^\complement}\sim p^\tau_{K^\complement}$. Taking the expectation in \Cref{eq:crossing-given-visible} over $X_{K^\complement}\sim p^\tau_{K^\complement}$ gives
\[
\Pp\!\left(X'\in\cX_{-\tau}\right)
=
\E_{X_{K^\complement}\sim p^\tau_{K^\complement}}
\left[
p(\cX_{-\tau}\mid X_{K^\complement})
\right].
\]

\emph{Step 2: typical--atypical split.}
Because $|K^\complement| \ge s_{\mathrm{pin}}$, \Cref{ass:mode-pinning} applies with $V=K^\complement$. We split the expectation according to whether the visible context belongs to the typical set $\cE^\tau_{K^\complement}$:
\begin{align}
\Pp&\left(X'\in\cX_{-\tau}\right)
= \nonumber\\
&\E_{X_{K^\complement}\sim p^\tau_{K^\complement}}
\left[
p(\cX_{-\tau}\mid X_{K^\complement})
\mathbf 1\{X_{K^\complement}\in\cE^\tau_{K^\complement}\}
\right]
+
\E_{X_{K^\complement}\sim p^\tau_{K^\complement}}
\left[
p(\cX_{-\tau}\mid X_{K^\complement})
\mathbf 1\{X_{K^\complement}\notin\cE^\tau_{K^\complement}\}
\right].
\label{eq:crossing-split}
\end{align}
For the first term in \Cref{eq:crossing-split}, the definition of $\cE^\tau_{K^\complement}$ in \Cref{ass:mode-pinning} guarantees $p(\cX_{-\tau}\mid X_{K^\complement}=x_{K^\complement})\le e^{-\kappa|K^\complement|}$ for every $x_{K^\complement}\in\cE^\tau_{K^\complement}$, hence
\begin{equation}
\E_{X_{K^\complement}\sim p^\tau_{K^\complement}}
\left[
p(\cX_{-\tau}\mid X_{K^\complement})
\mathbf 1\{X_{K^\complement}\in\cE^\tau_{K^\complement}\}
\right]
\le
e^{-\kappa|K^\complement|}.
\label{eq:typical-crossing}
\end{equation}
For the second term, we use the trivial bound $p(\cX_{-\tau}\mid X_{K^\complement})\le 1$ alongside the typical-set mass bound $p^\tau_{K^\complement}(\cE^\tau_{K^\complement})\ge 1-e^{-c_1|K^\complement|}$ from \Cref{ass:mode-pinning}:
\begin{equation}
\E_{X_{K^\complement}\sim p^\tau_{K^\complement}}
\left[
p(\cX_{-\tau}\mid X_{K^\complement})
\mathbf 1\{X_{K^\complement}\notin\cE^\tau_{K^\complement}\}
\right]
\;\le\;
p^\tau_{K^\complement}\!\left((\cE^\tau_{K^\complement})^\complement\right)
\;\le\;
e^{-c_1|K^\complement|}.
\label{eq:atypical-crossing}
\end{equation}
Combining \Cref{eq:crossing-split,eq:typical-crossing,eq:atypical-crossing} yields
\[
\Pp\!\left(X'\in\cX_{-\tau}\right)
\le
e^{-\kappa|K^\complement|}
+
e^{-c_1|K^\complement|}.
\]
Since $c=\min\{c_1,\kappa\}$, both exponents are bounded by $-c|K^\complement|$. Therefore,
\[
\Pp\!\left(X'\in\cX_{-\tau}\right)
\le
2e^{-c|K^\complement|},
\]
which proves \Cref{eq:component-crossing}.

\emph{Step 3: averaging over the schedule.}
We now consider the mixed block dynamics. Draw $K\sim\mu$ independently of $X$ and let $\widetilde X$ be the induced update of part~(ii). Conditioning on the drawn mask, the law of total probability gives
\[
\Pp\!\left(\widetilde X\in\cX_{-\tau}\right)
=
\E_{K\sim\mu}
\left[
\Pp\!\left(
\widetilde X\in\cX_{-\tau}
\,\middle|\,
K
\right)
\right].
\]
Given $K$, the configuration $\widetilde X$ is the fixed-mask update of part~(i) for that mask, so the conditional crossing probability is the quantity bounded above whenever $|K^{\complement}|\ge s_{\mathrm{pin}}$. Fix $s\ge s_{\mathrm{pin}}$ and split the mask space into $\{|K^{\complement}|\le s\}$ and $\{|K^{\complement}|>s\}$. On the first event we use the trivial upper bound one; on the second, $|K^{\complement}|>s\ge s_{\mathrm{pin}}$, so \Cref{eq:component-crossing} applies. Hence,
\begin{equation}
\Pp\!\left(\widetilde X\in\cX_{-\tau}\right)
\le
\mu\!\left(|K^\complement|\le s\right)
+
\E_{K\sim\mu}
\left[
2e^{-c|K^{\complement}|}
\mathbf 1\{|K^{\complement}|>s\}
\right].
\label{eq:mixed-crossing-split}
\end{equation}
The first term is $\pi_s(\mu)$ by \Cref{eq:low-visibility-mass}. For the second term, on $\{|K^{\complement}|>s\}$ we have $e^{-c|K^{\complement}|}\le e^{-cs}$ since $c>0$, so
\[
\E_{K\sim\mu}
\left[
2e^{-c|K^{\complement}|}
\mathbf 1\{|K^{\complement}|>s\}
\right]
\;\le\;
2e^{-cs}
\Pp\nolimits_{K\sim\mu}\!\left(
|K^{\complement}|>s
\right)
\;\le\;
2e^{-cs}.
\]
Substituting both into \Cref{eq:mixed-crossing-split} proves \Cref{eq:mixed-crossing}.

\emph{Step 4: minimum visible size.}
Finally, suppose $v_\mathrm{min}(\mu)\ge s_{\mathrm{pin}}$. By the definition of $v_\mathrm{min}(\mu)$, $|K^{\complement}|\ge v_\mathrm{min}(\mu)\ge s_{\mathrm{pin}}$ for every $K\in\supp(\mu)$, so \Cref{eq:component-crossing} applies to every mask in the support of $\mu$. Averaging over $K\sim\mu$,
\[
\Pp\!\left(\widetilde X\in\cX_{-\tau}\right)
\;\le\;
\E_{K\sim\mu}
\left[
2e^{-c|K^{\complement}|}
\right]
\;\le\;
2e^{-c\,v_\mathrm{min}(\mu)},
\]
where the last inequality uses $|K^{\complement}|\ge v_\mathrm{min}(\mu)$. This proves \Cref{eq:min-visible-crossing}.
\end{proof}

\subsubsection{Reverse Mode Blindness}
\label{app:reverse-blindness}

This section shows that the mode-blindness bound persists when the masked discrepancy is taken in the reverse KL direction.

\begin{lemma}[Mode blindness for reverse KL]
\label{lem:reverse-blindness}
Suppose \Cref{ass:mode-pinning} holds with combined rate $c=\min\{c_1,\kappa\}$, and let $B_0=\log\frac{(2-c_0)(1-c_0)}{c_0^2}$ and $A_0=e^{B_0}-1+B_0$ be the constants of the proof of \Cref{thm:mode-blind}. Set $C_2:=2e^{B_0}A_0+B_0\le16\,c_0^{-4}$. Then every mask schedule $\mu$ with $v_{\mathrm{min}}(\mu)\ge s_{\mathrm{pin}}$ satisfies
\begin{equation}
\sup\nolimits_{\lambda\in I_{\mathrm{bl}}}
\mathfrak D_\mu(q_\lambda\,\|\,p)
\;\le\;
C_2\,\E_{K\sim\mu}\big[e^{-c|K^\complement|}\big]
\;\le\;
C_2\, e^{-c\,v_{\mathrm{min}}(\mu)}.
\end{equation}
\end{lemma}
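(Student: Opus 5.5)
The plan is to reduce the reverse discrepancy to a binary KL between mode posteriors, exactly as in \Cref{prop:info-decomp}, with the roles of $p$ and $q_\lambda$ swapped. Then I split contexts into the pooled confident set $G_V$ of \Cref{lem:pooled} and its complement.

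\textbf{Step 1: reduction to a binary KL.} Fix a mask $K$ and write $V=K^\complement$. Since $p=q_w$ and $q_\lambda$ share the within-mode laws $p^\pm$ on disjoint supports, the argument of \Cref{lem:logit-shift}(iii) applies with $p$ and $q_\lambda$ interchanged. It gives $\KL(q_{\lambda,K}(\cdot\mid x_V)\,\|\,p_K(\cdot\mid x_V))=\kl(\beta_{\lambda,V}(x_V)\,\|\,\beta_V(x_V))$. Note that $\supp(q_\lambda)=\supp(p)$, so every term is finite. Hence
\[
\mathfrak D_K(q_\lambda\|p)=\E_{X_V\sim q_{\lambda,V}}\big[\kl(\beta_{\lambda,V}\,\|\,\beta_V)\big].
\]
By \Cref{lem:logit-shift}(ii), $\logit\beta_{\lambda,V}=\logit\beta_V+\Delta$ with $\Delta:=\logit\lambda-\logit w$. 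For $\lambda\in I_{\mathrm{bl}}$ and $w\in[c_0,1-c_0]$ we get $|\Delta|\le\logit(1-c_0/2)-\logit(c_0)=B_0$.

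\textbf{Step 2: pointwise binary-KL bounds.} I will prove two elementary bounds for a logit tilt of size at most $B_0$. The first is a crude bound valid for every context: $\kl(\beta_\lambda\|\beta)\le\sup_{\pm}|\log(\beta_\lambda/\beta)|,|\log((1-\beta_\lambda)/(1-\beta))|\le B_0$. This holds because tilting by $\Delta$ changes each of the ratios $\beta_\lambda/\beta$ and $(1-\beta_\lambda)/(1-\beta)$ by a factor in $[e^{-B_0},e^{B_0}]$.

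The second is a refined bound near the boundary: $\kl(\beta_\lambda\|\beta)\le 2e^{B_0}A_0\min\{\beta,1-\beta\}$. By symmetry, take $\beta\le1/2$. Then $\beta_\lambda\le e^{B_0}\beta$, so the first term satisfies $\beta_\lambda\log(\beta_\lambda/\beta)\le e^{B_0}B_0\beta$. For the second term, use $\log(1+t)\le t$ with $t=(\beta-\beta_\lambda)/(1-\beta)$, which gives a bound of at most $|\beta-\beta_\lambda|/(1-\beta)\le 2(e^{B_0}-1)\beta$. Together these give at most $2e^{B_0}(e^{B_0}-1+B_0)\beta$. I expect this computation to be the main fiddly step, because the constants have to land on $A_0$. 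The intended constant in \Cref{thm:mode-blind} presumably came from the same computation, so I would mirror it.

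\textbf{Step 3: splitting and change of measure.} Take $|V|\ge s_{\mathrm{pin}}$, which holds for every $K\in\supp(\mu)$ because $v_{\mathrm{min}}(\mu)\ge s_{\mathrm{pin}}$.

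On $G_V$ we have $\min\{\beta,1-\beta\}\le e^{-\kappa|V|}$, so the refined bound contributes at most $2e^{B_0}A_0e^{-\kappa|V|}$. This holds regardless of the averaging measure, since it is a pointwise bound times a probability at most one.

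On $G_V^\complement$, use the crude bound $B_0$. The key observation is that $q_{\lambda,V}=\lambda p^+_V+(1-\lambda)p^-_V$. In the proof of \Cref{lem:pooled}, $p^\tau_V(G_V^\complement)\le e^{-c_1|V|}$ holds for each mode separately, so any mixture of the two, in particular $q_{\lambda,V}$, also satisfies $q_{\lambda,V}(G_V^\complement)\le e^{-c_1|V|}$. No density-ratio factor is needed here.

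Combining the two pieces gives $\mathfrak D_K(q_\lambda\|p)\le(2e^{B_0}A_0+B_0)e^{-c|V|}=C_2e^{-c|K^\complement|}$ uniformly in $\lambda\in I_{\mathrm{bl}}$. Averaging over $K\sim\mu$ and using $|K^\complement|\ge v_{\mathrm{min}}(\mu)$ yields both inequalities. The numerical bound $C_2\le16c_0^{-4}$ follows from $e^{B_0}\le 2/c_0^2$ together with $B_0\le e^{B_0}$.
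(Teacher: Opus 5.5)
Your proposal is correct and follows the paper's proof: the same binary reduction $\mathfrak D_K(q_\lambda\|p)=\E_{X_V\sim q_{\lambda,V}}[\kl(\beta_{\lambda,V}\|\beta_V)]$, the same split over the pooled confidence set $G_V$ with the crude bound $B_0$ off it, and the same observation that $q_{\lambda,V}=\lambda p^+_V+(1-\lambda)p^-_V$ inherits the per-mode bound $e^{-c_1|V|}$. The only difference is cosmetic: you bound $\kl(\beta_{\lambda,V}\|\beta_V)$ directly in terms of $\min\{\beta_V,1-\beta_V\}$, whereas the paper reuses the forward bound from \Cref{thm:mode-blind} at base point $\beta_{\lambda,V}$ and then transfers $\min\{\beta_{\lambda,V},1-\beta_{\lambda,V}\}\le 2e^{B_0}\min\{\beta_V,1-\beta_V\}$ via posterior odds; both routes give the same constant $C_2$.
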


\begin{proof}
Fix $\lambda\in I_{\mathrm{bl}}$ and $K\in\supp(\mu)$. Since $v_{\min}(\mu)\ge s_{\mathrm{pin}}$, every visible set below satisfies $|K^\complement|\ge s_{\mathrm{pin}}$.

\begin{intuitionbox}
\textbf{Proof intuition.}
Reversing KL changes both the order of the posterior comparison and the distribution used to average contexts. A bounded logit shift preserves exponential posterior confidence, so the reversed conditional KL remains exponentially small on the confidence set. Its complement is rare under each component separately, and therefore remains rare after changing the mixture weights. These two facts preserve the exponential decay rate. 
\end{intuitionbox}

\emph{Step 1: binary reduction.}
Since $\lambda\in(0,1)$, we have $\supp(q_\lambda)=\supp(p)$, meaning that all marginals and conditionals of the two distributions share the same support. For any $x_{K^\complement}\in\supp(p_{K^\complement})$ such that $\beta_{K^\complement}\in(0,1)$, both masked-block conditionals act as mixtures of the same disjointly supported components $\nu_\pm$. As established in the proof of \Cref{lem:logit-shift}(iii), these mixtures have weights $(\beta_{K^\complement},1-\beta_{K^\complement})$ and $(\beta_{\lambda,K^\complement},1-\beta_{\lambda,K^\complement})$. Applying \Cref{lem:shared-mix} in either direction yields
\[
\KL\big(q_{\lambda,K}(\cdot\mid x_{K^\complement})\,\big\|\,p_K(\cdot\mid x_{K^\complement})\big)
=\kl\big(\beta_{\lambda,K^\complement}\,\big\|\,\beta_{K^\complement}\big).
\]
Furthermore, the boundary fibers where $\beta_{K^\complement}\in\{0,1\}$ contribute zero to the divergence since $\beta_{\lambda,K^\complement}=\beta_{K^\complement}$ in those cases. It then follows that 
\[
\mathfrak D_K(q_\lambda\|p)=\E_{X_{K^\complement}\sim q_{\lambda,K^\complement}}[\kl(\beta_{\lambda,K^\complement}(X_{K^\complement})\|\beta_{K^\complement}(X_{K^\complement}))].
\]

\emph{Step 2: fiberwise bound.} On the interior fibers, \Cref{lem:logit-shift}(ii) gives that $\logit(\beta_{K^\complement})=\logit(\beta_{\lambda,K^\complement})-\Delta_\lambda$. Recall $\Delta_\lambda:=\logit(\lambda)-\logit(w)$. Since $w\in[c_0,1-c_0]$ and $\lambda\in I_{\mathrm{bl}}=[c_0/2,1-c_0/2]$, we have $|\Delta_\lambda|\le B_0$. 

By applying the fiberwise bound from Step~3 of that same proof with a base point of $\beta_{\lambda,K^\complement}$ and a shift of $-\Delta_\lambda$, we obtain
\[
\kl(\beta_{\lambda,K^\complement}\|\beta_{K^\complement})
\le\min\big\{B_0,\;
A_0\min\{\beta_{\lambda,K^\complement},1-\beta_{\lambda,K^\complement}\}\big\}.
\]
A logit shift of size at most $B_0$ increases the smaller posterior probability by at most a factor $2e^{B_0}$. To see this, write $o:=\beta_{K^\complement}/(1-\beta_{K^\complement})$ and $o_\lambda:=\beta_{\lambda,K^\complement}/(1-\beta_{\lambda,K^\complement})$ for the two posterior odds, and note the elementary sandwich, valid for every $\beta\in(0,1)$ with odds $o$,
\[
\min\{\beta,\,1-\beta\}
\;\le\;
\min\{o,\,1/o\}
\;\le\;
2\min\{\beta,\,1-\beta\},
\]
which follows, for $\beta\le\tfrac12$, from $\beta\le\beta/(1-\beta)\le2\beta$, and by the symmetry $(\beta,o)\mapsto(1-\beta,1/o)$ otherwise. Since $\logit(\beta_{\lambda,K^\complement}) =\logit(\beta_{K^\complement})+\Delta_\lambda$, the odds transform as $o_\lambda=e^{\Delta_\lambda}o$, whence 
\[
\min\{o_\lambda,1/o_\lambda\}\le e^{|\Delta_\lambda|}\min\{o,1/o\}
\le e^{B_0}\min\{o,1/o\}.
\]
Chaining the three bounds,
\[
\min\{\beta_{\lambda,K^\complement},\,1-\beta_{\lambda,K^\complement}\}
\;\le\;
\min\{o_\lambda,\,1/o_\lambda\}
\;\le\;
e^{B_0}\min\{o,\,1/o\}
\;\le\;
2e^{B_0}\min\{\beta_{K^\complement},\,1-\beta_{K^\complement}\}.
\]

\emph{Step 3: averaging.} Let $G_{K^\complement}$ denote the pooled confidence set defined in \Cref{lem:pooled}. Within $G_{K^\complement}$, the fiberwise bound evaluates to at most $2e^{B_0}A_0\,e^{-\kappa|K^\complement|}$, while outside $G_{K^\complement}$ we rely on the uniform bound $B_0$. The proof of \Cref{lem:pooled} gives
\[
p^\tau_{K^\complement}(G_{K^\complement}^\complement)
\le e^{-c_1|K^\complement|}
\qquad\text{for each }\tau\in\{+,-\}.
\]
Since
$q_{\lambda,K^\complement}
=\lambda p^+_{K^\complement}
+(1-\lambda)p^-_{K^\complement}$,
\[
\begin{aligned}
q_{\lambda,K^\complement}(G_{K^\complement}^\complement)
&=
\lambda p^+_{K^\complement}(G_{K^\complement}^\complement)
+(1-\lambda)
p^-_{K^\complement}(G_{K^\complement}^\complement)
\\
&\le e^{-c_1|K^\complement|}.
\end{aligned}
\] 
Combining these observations gives
\[
\mathfrak D_K(q_\lambda\|p)
\le 2e^{B_0}A_0\, e^{-\kappa|K^\complement|}+B_0\, e^{-c_1|K^\complement|}
\le C_2\, e^{-c|K^\complement|}.
\]
Taking the average over $K\sim\mu$ establishes the main claim. Finally, to derive the numerical bound, we let $x:=\frac{(2-c_0)(1-c_0)}{c_0^2}\le2c_0^{-2}$. By applying the inequality $\log x\le x-1$, we conclude that
\[
C_2=2x(x-1+\log x)+\log x\le(x-1)(4x+1)\le4x^2\le16\,c_0^{-4}.
\]
Since this bound is uniform in $\lambda\in I_{\mathrm{bl}}$, averaging over $K\sim\mu$ and taking the supremum over $\lambda$ gives the claimed schedule bound.
\end{proof}

\subsubsection{Factorized Per-Token Losses}
\label{app:factorized}

\Cref{def:masked-log-discrepancy} scores the \emph{joint} masked-block conditional, whereas BERT-style pretraining sums \emph{per-token} losses over the masked block \citep{devlin2019bert,liu2019roberta}, the masked analogue of pseudolikelihood \citep{besag1974spatial}. For $q\in\mathcal{Q}(p)$, a mask $K$, and $i\in K$, write $q_i(\cdot\mid x_{K^\complement})$ for the coordinate-$i$ conditional marginal, and define
\begin{equation}\label{eq:fac-discrepancy}
\mathfrak D^{\mathrm{fac}}_K(p\|q)
:= \E_{X_{K^\complement}\sim p_{K^\complement}}
\sum_{i\in K}\KL\big({p_i(\cdot\mid X_{K^\complement})}\|{q_i(\cdot\mid X_{K^\complement})}\big),
\quad
\mathfrak D^{\mathrm{fac}}_\mu(p\|q) := \E_{K\sim\mu}\,\mathfrak D^{\mathrm{fac}}_K(p\|q).
\end{equation}

\begin{corollary}[Mode blindness under per-token log loss]
\label{lem:fac-blindness} 
Suppose \Cref{ass:mode-pinning} holds with combined rate $c$, and let $C\le 6c_0^{-2}$ be the constant of \Cref{thm:mode-blind}. Then every mask schedule $\mu$ with $v_{\min}(\mu)\ge s_{\mathrm{pin}}$ satisfies
\begin{equation}
\label{eq:fac-blindness}
\sup\nolimits_{\lambda\in I_{\mathrm{bl}}} \mathfrak D^{\mathrm{fac}}_\mu(p\|q_\lambda)
\;\le\; C\,N\,\E_{K\sim\mu}\!\big[e^{-c|K^\complement|}\big]
\;\le\; C\,e^{-c\,v_{\min}(\mu)+\log N}.
\end{equation}
\end{corollary}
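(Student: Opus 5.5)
The plan is to reduce each per-token term to a fixed-mask joint discrepancy, then reuse the fiberwise estimate behind \Cref{thm:mode-blind}. Fix $\lambda\in I_{\mathrm{bl}}$ and a mask $K\in\supp(\mu)$. For each $i\in K$, the coordinate-$i$ conditional marginal $p_i(\cdot\mid x_{K^\complement})$ is the pushforward of the block conditional $p_K(\cdot\mid x_{K^\complement})$ under the projection $x_K\mapsto x_i$, and likewise for $q_{\lambda,i}$. By the data-processing inequality,
\[
\KL\big(p_i(\cdot\mid x_{K^\complement})\,\big\|\,q_{\lambda,i}(\cdot\mid x_{K^\complement})\big)
\le
\KL\big(p_K(\cdot\mid x_{K^\complement})\,\big\|\,q_{\lambda,K}(\cdot\mid x_{K^\complement})\big)
\]
for every $x_{K^\complement}\in\supp(p_{K^\complement})$. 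Summing over the at most $|K|\le N$ coordinates and averaging over $X_{K^\complement}\sim p_{K^\complement}$ gives
\[
\mathfrak D^{\mathrm{fac}}_K(p\|q_\lambda)\le N\,\mathfrak D_K(p\|q_\lambda).
\]

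The second step is to bound $\mathfrak D_K(p\|q_\lambda)$ for a single mask. Averaging the inequality above over $K\sim\mu$ would only give $\mathfrak D^{\mathrm{fac}}_\mu\le N\,\mathfrak D_\mu$. I want the stronger form with $\E_{K\sim\mu}[e^{-c|K^\complement|}]$ appearing inside the expectation, so I need the per-mask bound
\[
\mathfrak D_K(p\|q_\lambda)\le C\,e^{-c|K^\complement|}
\quad\text{whenever }|K^\complement|\ge s_{\mathrm{pin}},
\]
uniformly in $\lambda\in I_{\mathrm{bl}}$.

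This per-mask bound is exactly what the proof of \Cref{thm:mode-blind} establishes before averaging, so I would quote it from there. If it is not available as a standalone statement, I would rederive it in three moves:
\begin{itemize}
\item By \Cref{prop:info-decomp}, $\mathfrak D_K(p\|q_\lambda)=\E[\kl(\beta_{K^\complement}\|\beta_{\lambda,K^\complement})]$.
\item On the pooled confidence set $G_{K^\complement}$ of \Cref{lem:pooled}, use the fiberwise estimate $\kl\le A_0\min\{\beta,1-\beta\}\le A_0e^{-\kappa|K^\complement|}$. This holds because the logit shift $\Delta_\lambda$ is bounded by $B_0$ (\Cref{lem:logit-shift}(ii)).
\item Off $G_{K^\complement}$, use the crude bound $\kl\le B_0$, valid for bounded logit shifts, on a set of $p_{K^\complement}$-mass at most $e^{-c_1|K^\complement|}$.
\end{itemize}
Together these give $\mathfrak D_K(p\|q_\lambda)\le (A_0+B_0)e^{-c|K^\complement|}$, and $A_0+B_0$ is the constant $C\le 6c_0^{-2}$ of \Cref{thm:mode-blind}.

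Combining the two steps, for every $K\in\supp(\mu)$,
\[
\mathfrak D^{\mathrm{fac}}_K(p\|q_\lambda)\le C\,N\,e^{-c|K^\complement|}.
\]
Averaging over $K\sim\mu$ gives the first inequality of \Cref{eq:fac-blindness}, and the bound is uniform in $\lambda$, so it survives the supremum. The second inequality follows from $|K^\complement|\ge v_{\min}(\mu)$ on $\supp(\mu)$ together with $N=e^{\log N}$.

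The only delicate point is the per-mask form of the bound in the second step; the data-processing step itself is routine. In particular, the factor $N$ cannot be avoided by this argument, since there is no tensorization across the masked coordinates.
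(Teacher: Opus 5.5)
Your proposal is correct and follows the paper's proof essentially verbatim. The paper uses the same three steps: data processing under the coordinate projection bounds each per-token KL by the block KL $\kl(\beta_{K^\complement}\|\beta_{\lambda,K^\complement})$; summing over $|K|\le N$ coordinates costs a factor $N$; and the per-mask estimate \Cref{eq:per-mask-mode-blind-bound} from the proof of \Cref{thm:mode-blind}, averaged over $K\sim\mu$, finishes the argument.

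One small remark: the detour in your second step is unnecessary. Your claim that averaging would ``only give'' $\mathfrak D^{\mathrm{fac}}_\mu\le N\,\mathfrak D_\mu$ undersells that route. The first inequality of \Cref{eq:mode-blindness} is already stated in the form $C\,\E_{K\sim\mu}[e^{-c|K^\complement|}]$, so averaging $\mathfrak D^{\mathrm{fac}}_K\le N\,\mathfrak D_K$ over $\mu$ and quoting the theorem gives the same bound directly.
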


\begin{intuitionbox}
\textbf{Proof intuition.} Projecting a masked block onto one coordinate cannot increase KL divergence. Each per-token discrepancy is therefore bounded by the joint masked-block discrepancy, and summing over the masked coordinates introduces at most a factor $N$. When the visible count grows linearly with $N$, this factor preserves exponential decay.
\end{intuitionbox}

\begin{proof}
Fix $\lambda\in I_{\mathrm{bl}}$, $K\in\supp(\mu)$, and $x_{K^\complement}\in\supp(p_{K^\complement})$. The coordinate projection $x_K\mapsto x_i$ is a stochastic map, so the data-processing inequality and \Cref{lem:logit-shift}(iii) give, for every $i\in K$, 
\begin{align*}
\KL\big({p_i(\cdot\mid x_{K^\complement})}\|{q_{\lambda,i}(\cdot\mid x_{K^\complement})}\big)
&\le
\KL\big({p_K(\cdot\mid x_{K^\complement})}\|{q_{\lambda,K}(\cdot\mid x_{K^\complement})}\big)\\
&= \kl\big({\beta_{K^\complement}(x_{K^\complement})}\|{\beta_{\lambda,K^\complement}(x_{K^\complement})}\big).
\end{align*}
Summing over $i\in K$ and using $|K|\le N$, then averaging over contexts as in \Cref{eq:per-mask-mode-blind-bound} of the proof of \Cref{thm:mode-blind} and over $K\sim\mu$, proves \Cref{eq:fac-blindness}.
\end{proof}

\paragraph{Scope of the transfer.}
If
\(
v_{\mathrm{min}}(\mu)
\ge
\max\{s_{\mathrm{pin}},\,2c^{-1}\log N\},
\)
then \Cref{eq:fac-blindness} gives
\[
\sup\nolimits_{\lambda\in I_{\mathrm{bl}}}
\mathfrak D^{\mathrm{fac}}_\mu(p\|q_\lambda)
\le
C e^{-c\,v_{\mathrm{min}}(\mu)/2}.
\]
Since $\DTV(p,q_\lambda)=|w-\lambda|$, the same reweighted alternatives give a macroscopic lower bound on the modulus defined through $\mathfrak D^{\mathrm{fac}}_\mu$, as in \Cref{eq:mode-blindness-modulus}. In particular, when $v_{\mathrm{min}}(\mu)=\Omega(N)$, this occurs at an excess-risk tolerance exponentially small in $N$. The mode-weight recovery bound does not follow from the same comparison. Although the fiber components $\nu_\pm$ have disjoint supports, their coordinate marginals $\nu_{\pm,i}$ may overlap. Coordinate projection can therefore discard information about the mode, and data processing supplies only an upper bound on the per-token KL divergence. At the full mask, for example,
\[
\DTV(p_i,q_{\lambda,i})
=
|w-\lambda|\,\DTV(p_i^+,p_i^-).
\]
Universal joint-distribution control from the full-mask term also fails under factorization. That term scores only the unconditional coordinate marginals. In particular, the product distribution
\(
q^{\mathrm{prod}}:=\bigotimes_{i=1}^{N}p_i
\)
belongs to $\cQ(p)$ and satisfies
\[
\mathfrak D^{\mathrm{fac}}_{[N]}(p\|q^{\mathrm{prod}})=0.
\]
Indeed, $p(x)>0$ implies $p_i(x_i)>0$ for every coordinate, so $q^{\mathrm{prod}}$ covers the support of $p$. For the construction in \Cref{prop:product-instantiation}, with $\theta\in(0,1)$ and $w\in(0,1)$ fixed as $N\to\infty$,
\[
\DTV(p,q^{\mathrm{prod}})=1-o(1).
\]
To see this, $p$ concentrates near empirical magnetizations $\pm\theta$, whereas $q^{\mathrm{prod}}$ concentrates near $(2w-1)\theta$, which lies strictly between them. When $w=1/2$, $q^{\mathrm{prod}}$ is the uniform distribution. The full-mask coercivity of \Cref{prop:full-mask} therefore relies on scoring the joint masked-block distribution.

\subsubsection{A Sufficient Condition for \texorpdfstring{\Cref{ass:low-vis-uncertainty}}{Asm. 4.2}}
\label{app:low-vis-sufficient}

\begin{lemma}[Bounded visible evidence implies residual uncertainty]
\label{lem:low-vis-sufficient}
Fix $L_0>0$ and $V\subseteq[N]$, and suppose the mode-conditional visible marginals satisfy
\begin{equation}
\left|\log\frac{p_V^+(x_V)}{p_V^-(x_V)}\right|\le L_0|V|
\qquad\text{for all }x_V\in\supp(p_V).
\label{eq:bounded-llr}
\end{equation}
Let $w:=p(\cX_+)\in(0,1)$. Then
\begin{equation}
\mmse_p(Z\mid X_V)
\ge
w(1-w)\,e^{-L_0|V|}.
\label{eq:mmse-lower-sufficient}
\end{equation}
In particular, \Cref{ass:low-vis-uncertainty} holds on any scale $s_{\mathrm{ov}}\ge s_{\mathrm{pin}}$ for which \Cref{eq:bounded-llr} holds for every $|V|\le s_{\mathrm{ov}}$ with the same constant $L_0$, taking $u_0=w(1-w)$ and $L=L_0$.
\end{lemma}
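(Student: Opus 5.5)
The plan is to work fiberwise with the posterior $\beta_V(x_V)$ and lower-bound $\beta_V(1-\beta_V)$ pointwise. Fix $x_V\in\supp(p_V)$. By Bayes' rule and the mixture decomposition $p_V=w\,p_V^++(1-w)\,p_V^-$ (as in the proof of \Cref{lem:pooled}), the posterior odds are
\[
\frac{\beta_V(x_V)}{1-\beta_V(x_V)}=\frac{w}{1-w}\cdot\frac{p_V^+(x_V)}{p_V^-(x_V)}.
\]
Under \Cref{eq:bounded-llr}, both $p_V^\pm(x_V)$ are positive on $\supp(p_V)$, since the log-ratio is finite. So $\beta_V(x_V)\in(0,1)$ and the odds are well defined.

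Next write $\ell:=\log\frac{p_V^+(x_V)}{p_V^-(x_V)}$, so $|\ell|\le L_0|V|$. The identity
\[
\beta(1-\beta)=\frac{w(1-w)\,e^{\ell}}{(we^{\ell}+1-w)^2}
\]
follows by writing $\beta=we^\ell/(we^\ell+1-w)$. Equivalently, this is \Cref{lem:logit-shift}(ii) with a logit tilt of size $\ell$ applied to prior $w$. It then suffices to show
\[
\frac{e^{\ell}}{(we^{\ell}+1-w)^2}\ge e^{-|\ell|}.
\]
For $\ell\ge0$, the denominator satisfies $we^\ell+1-w\le e^\ell$, so the ratio is at least $e^{\ell}/e^{2\ell}=e^{-\ell}$. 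For $\ell<0$, we have $we^\ell+1-w\le1$, so the ratio is at least $e^{\ell}=e^{-|\ell|}$. Hence pointwise $\beta_V(1-\beta_V)\ge w(1-w)e^{-|\ell|}\ge w(1-w)e^{-L_0|V|}$.

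Averaging over $X_V\sim p_V$ and using the MMSE formula in \Cref{eq:mode-mmse} gives \Cref{eq:mmse-lower-sufficient}. The "in particular" clause is then immediate: the bound holds for each $V$ with $|V|\le s_{\mathrm{ov}}$, which is exactly \Cref{eq:low-vis-uncertainty} with $u_0=w(1-w)$ and $L=L_0$. The constants do not depend on $V$ because $L_0$ is uniform.

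The only step that needs care is the elementary two-case inequality on the denominator. Everything else is bookkeeping, including confirming that $p_V^\pm(x_V)>0$ on the support so the log-ratio makes sense.
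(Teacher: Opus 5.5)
Your proposal is correct and follows the paper's proof essentially step for step: Bayes' rule gives $\beta_V=we^{\ell}/(we^{\ell}+1-w)$ and hence the identity for $\beta_V(1-\beta_V)$, the two-case bound $we^{\ell}+1-w\le e^{\max\{\ell,0\}}$ yields $\beta_V(1-\beta_V)\ge w(1-w)e^{-|\ell|}$ pointwise, and averaging over $p_V$ finishes. Your aside citing \Cref{lem:logit-shift}(ii) is only an analogy rather than a literal instance of that lemma, but nothing in the argument depends on it.
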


\begin{intuitionbox}
\textbf{Proof intuition.} Bayes' rule reweights the two modes by their visible likelihoods. After dividing by one likelihood, the normalizing constant is a weighted average of $1$ and the likelihood ratio, so it lies between them. This bounds how much reweighting can reduce the mode variance for each context, before averaging over contexts.
\end{intuitionbox}

\begin{proof}
Fix $x_V\in\supp(p_V)$ and write $\ell:=\log\big(p_V^+(x_V)/p_V^-(x_V)\big)$, so $|\ell|\le L_0|V|$ by \Cref{eq:bounded-llr}. By Bayes' rule with prior $p(\cX_+)=w$,
\[
\beta_V(x_V)
=
\frac{w\,p_V^+(x_V)}{w\,p_V^+(x_V)+(1-w)\,p_V^-(x_V)}
=
\frac{w\,e^{\ell}}{w\,e^{\ell}+(1-w)},
\]
dividing through by $p_V^-(x_V)>0$, as implied by \Cref{eq:bounded-llr}. Hence
\[
\beta_V(x_V)\big(1-\beta_V(x_V)\big)
=
\frac{w(1-w)\,e^{\ell}}{\big(w\,e^{\ell}+(1-w)\big)^2}.
\]
We upper-bound the denominator by splitting on the sign of $\ell$. If $\ell \ge 0$, then $w e^{\ell} + (1-w) \le w e^{\ell} + (1-w)e^{\ell} = e^{\ell}$; if $\ell < 0$, then $w e^{\ell} + (1-w) \le w + (1-w) = 1$. Writing $\ell^{+} := \max\{\ell, 0\}$, both cases are subsumed by
\[
  w e^{\ell} + (1-w) \;\le\; e^{\ell^{+}} .
\]
Therefore
\begin{align*}
\beta_V(x_V)\bigl(1-\beta_V(x_V)\bigr)
&\ge
w(1-w)e^{\ell-2\ell^+}\\
&=
w(1-w)e^{-|\ell|}\\
&\ge
w(1-w)e^{-L_0|V|}.
\end{align*}
where the last step uses $|\ell| \le L_0|V|$ from \Cref{eq:bounded-llr}. Taking $\mathbb{E}_{X_V \sim p_V}$ of both sides and recalling $\mmse_p(Z \mid X_V) = \mathbb{E}[\beta_V(1-\beta_V)]$ proves \Cref{eq:mmse-lower-sufficient}.
\end{proof}

\subsection{Proofs of Main Text Statements}
\label{app:proofs}

This section provides full proofs for all formal statements deferred from the main text.

\subsubsection{\texorpdfstring{Proof of \Cref{lem:logit-shift}}{Proof of Lemma 3.1}}
\label{app:proof-logit-shift}

\begin{proof}
Fix $x_V\in\supp(p_V)$ and abbreviate $\beta_V=\beta_V(x_V)$ and $\beta_{\lambda,V}=\beta_{\lambda,V}(x_V)$.

\emph{Step 1: density ratio and the reweighted posterior.}
The reweighted distribution satisfies
\begin{equation}
q_\lambda(x)
=
\begin{cases}
\displaystyle \frac{\lambda}{w}\,p(x),
& x\in\cX_+,\\[0.8em]
\displaystyle \frac{1-\lambda}{1-w}\,p(x),
& x\in\cX_-.
\end{cases}
\label{eq:reweighted-density}
\end{equation}
Summing \Cref{eq:reweighted-density} over completions in the two mode regions gives
\begin{align}
q_\lambda(\cX_+,X_V=x_V)
&=
\frac{\lambda}{w}\,
p(\cX_+,X_V=x_V),
\label{eq:reweighted-positive-joint}\\
q_\lambda(\cX_-,X_V=x_V)
&=
\frac{1-\lambda}{1-w}\,
p(\cX_-,X_V=x_V).
\label{eq:reweighted-negative-joint}
\end{align}
By Bayes' rule, dividing the numerator and denominator by $p_V(x_V)>0$ yields \Cref{eq:posterior-reweighting}:
\[
\beta_{\lambda,V}
=
\frac{q_\lambda(\cX_+, X_V=x_V)}{q_\lambda(\cX_+, X_V=x_V)+q_\lambda(\cX_-, X_V=x_V)}
=
\frac{\frac{\lambda}{w}\,\beta_V}
{\frac{\lambda}{w}\,\beta_V
+
\frac{1-\lambda}{1-w}\,\big(1-\beta_V\big)}.
\]

\emph{Step 2: the logit shift.}
If $\beta_V\in(0,1)$, both terms in the denominator of \Cref{eq:posterior-reweighting} are positive. Taking posterior odds gives
\[
\frac{\beta_{\lambda,V}}{1-\beta_{\lambda,V}}
=
\frac{\lambda}{1-\lambda}\,
\frac{1-w}{w}\,
\frac{\beta_V}{1-\beta_V}.
\]
Taking logarithms proves \Cref{eq:posterior-logit-shift}:
\[
\logit\big(\beta_{\lambda,V}(x_V)\big)
=
\logit\big(\beta_V(x_V)\big)
+
\logit(\lambda)
-
\logit(w).
\]

\emph{Step 3: fiber mixtures and the binary reduction.}
Continue with the case $\beta_V\in(0,1)$. Both conditional component distributions
\[
\nu_\tau
:=
p\!\left(
X_{V^\complement}\in\cdot
\,\middle|\,
X_V=x_V,\ X\in\cX_\tau
\right),
\qquad
\tau\in\{+,-\},
\]
are well-defined. Reweighting changes only the prior masses of the two mode regions, so the same conditional component distributions appear under $q_\lambda$. Hence
\begin{align}
p_{V^\complement}(\cdot\mid x_V)
&=
\beta_V\,\nu_+
+
(1-\beta_V)\,\nu_-,
\label{eq:p-fiber-mixture}\\
q_{\lambda,V^\complement}(\cdot\mid x_V)
&=
\beta_{\lambda,V}\,\nu_+
+
(1-\beta_{\lambda,V})\,\nu_-.
\label{eq:q-fiber-mixture}
\end{align}
The supports of $\nu_+$ and $\nu_-$ are disjoint. Indeed, for fixed $x_V$, any completion $x_{V^\complement}$ determines a unique full configuration $(x_V,x_{V^\complement})$, which belongs to at most one of the disjoint regions $\cX_+$ and $\cX_-$. Applying the KL identity of \Cref{lem:shared-mix} to the mixtures \Cref{eq:p-fiber-mixture,eq:q-fiber-mixture} proves \Cref{eq:conditional-kl-reweighting} when $\beta_V\in(0,1)$:
\[
\KL\!\left(
p_{V^\complement}(\cdot\mid x_V)
\,\|\,
q_{\lambda,V^\complement}(\cdot\mid x_V)
\right)
=
\kl(\beta_V\,\|\,\beta_{\lambda,V}).
\]

\emph{Step 4: boundary posteriors.}
Finally, if $\beta_V=1$, then $p(\cX_-,X_V=x_V)=0$. By \Cref{eq:reweighted-negative-joint}, the same is true under $q_\lambda$, so $\beta_{\lambda,V}=1$. Moreover, both conditional block distributions equal the within-$\cX_+$ conditional distribution. The case $\beta_V=0$ is symmetric. Thus the conditional divergence in \Cref{eq:conditional-kl-reweighting} vanishes, completing the proof.
\end{proof}

\subsubsection{\texorpdfstring{Proof of \Cref{prop:info-decomp}}{Proof of Prop. 3.1}}
\label{app:proof-info-decomp}

\begin{intuitionbox}
\textbf{Proof intuition.} Because the two distributions share their within-mode components, their joint KL divergence depends only on the mixing weights. The KL chain rule splits this divergence into a visible-marginal term and a masked-conditional term. The masked objective therefore measures the part of the mode-weight discrepancy that the visible coordinates do not distinguish.  
\end{intuitionbox}

\begin{proof}
Fix a mask $K\subseteq[N]$. Because $\lambda\in(0,1)$, the model distribution satisfies $\supp(q_\lambda)=\supp(p)$, ensuring that the conditional $q_{\lambda,K}(\cdot\mid x_{K^\complement})$ is well defined for all $x_{K^\complement}\in\supp(p_{K^\complement})$.

By \Cref{lem:logit-shift}(iii), the pointwise divergence between the masked conditionals satisfies
\[
\KL\!\left(
p_K(\cdot\mid x_{K^\complement})
\,\|\,
q_{\lambda,K}(\cdot\mid x_{K^\complement})
\right)
=
\kl\big(\beta_{K^\complement}(x_{K^\complement})\,\|\,\beta_{\lambda,K^\complement}(x_{K^\complement})\big).
\]
Taking the expectation over $X_{K^\complement}\sim p_{K^\complement}$ directly yields \Cref{eq:decomp-binary} by the definition of the masked log discrepancy $\mathfrak D_K(p\| q_\lambda)$ (\Cref{def:masked-log-discrepancy}).

To establish \Cref{eq:decomp-chain}, decompose the joint divergence along $(X_{K^\complement},X_K)$ by the KL chain rule:
\begin{equation}
\KL(p\| q_\lambda)
=
\KL(p_{K^\complement}\| q_{\lambda,K^\complement})
+
\E_{X_{K^\complement}\sim p_{K^\complement}}\!\left[
\KL\!\left(
p_K(\cdot\mid X_{K^\complement})
\,\middle\|\,
q_{\lambda,K}(\cdot\mid X_{K^\complement})
\right)
\right],
\label{eq:kl-chain-rule-disc}
\end{equation}
where the conditional term is $\mathfrak D_K(p\|q_\lambda)$ by \Cref{def:masked-log-discrepancy}.

The mode distributions $p^+$ and $p^-$ have disjoint supports, so $p$ and $q_\lambda$ are shared-component mixtures in the sense of \Cref{lem:shared-mix}, with weights $w$ and $\lambda$:
\begin{equation}
\KL(p\| q_\lambda)
=
\kl(w\|\lambda).
\label{eq:joint-kl-binary}
\end{equation}
Substituting \Cref{eq:joint-kl-binary} into \Cref{eq:kl-chain-rule-disc} and rearranging gives
\[
\mathfrak D_K(p\|q_\lambda)
=
\kl(w\|\lambda)
-
\KL(p_{K^\complement}\|q_{\lambda,K^\complement}),
\]
which proves \Cref{eq:decomp-chain}.
\end{proof}

\clearpage
\subsubsection{\texorpdfstring{Proof of \Cref{thm:mode-blind}}{Proof of Thm. 4.1}}
\label{app:proof-mode-blind}

\begin{intuitionbox}
\textbf{Proof intuition.} Reweighting shifts the posterior log-odds by a bounded amount. The resulting conditional KL divergence is controlled by the smaller posterior mode probability. Mode pinning makes this probability exponentially small on typical visible contexts, while atypical contexts are exponentially rare. Averaging gives the masked-discrepancy bound. The mixing weights can still differ by a fixed amount, which gives the total-variation lower bound.
\end{intuitionbox}

\begin{proof}
\emph{Step 1: uniform bound on the logit shift.}
Recall $I_\mathrm{bl}=[c_0/2, 1-c_0/2]$ from the theorem statement. Under \Cref{ass:mode-pinning}, the true weight satisfies $w\in[c_0,1-c_0]$. Since $\logit$ is increasing and $\logit(1-x)=-\logit(x)$, the posterior log-odds shift $\Delta_\lambda:=\logit(\lambda)-\logit(w)$ satisfies
\[
|\Delta_\lambda|
\le
\logit(1-\tfrac{c_0}{2})-\logit(c_0)
=
\log\frac{(2-c_0)(1-c_0)}{c_0^2}
=:B_0
\]
for every $\lambda\in I_{\mathrm{bl}}$. This bound depends only on $c_0$. Set $A_0:=e^{B_0}-1+B_0$.

\emph{Step 2: reduction to the binary divergence.}
Fix $\lambda\in I_\mathrm{bl}$ and a mask $K\in\supp(\mu)$. Since $v_{\mathrm{min}}(\mu)\ge s_{\mathrm{pin}}$, we have $|K^\complement|\ge s_{\mathrm{pin}}$. For every $x_{K^\complement} \in \supp(p_{K^\complement})$, abbreviating $\beta_{K^\complement}=\beta_{K^\complement}(x_{K^\complement})$ and $\beta_{\lambda,{K^\complement}}=\beta_{\lambda,{K^\complement}}(x_{K^\complement})$, \Cref{lem:logit-shift}(iii) provides
\begin{equation}
\KL\!\left( p_K(\cdot\mid x_{K^\complement}) \| q_{\lambda,K}(\cdot\mid x_{K^\complement}) \right) = \kl(\beta_{K^\complement}\|\beta_{\lambda,{K^\complement}}).
\label{eq:mode-blind-binary-reduction}
\end{equation}

\emph{Step 3: fiberwise divergence bounds.}
We bound this binary divergence in two regimes. For $\beta_{K^\complement}\in(0,1)$, \Cref{lem:logit-shift}(ii) gives $\logit(\beta_{\lambda,{K^\complement}}) = \logit(\beta_{K^\complement})+\Delta_\lambda$, which algebraically rearranges to
\begin{equation*}
\kl(\beta_{K^\complement}\|\beta_{\lambda,{K^\complement}}) = \log\!\left( 1-\beta_{K^\complement}+\beta_{K^\complement} e^{\Delta_\lambda} \right) - \beta_{K^\complement}\Delta_\lambda.
\end{equation*}
This expression admits two immediate upper bounds. First, since $1-\beta_{K^\complement} +\beta_{K^\complement}e^{\Delta_\lambda} \le e^{\max\{0,\Delta_\lambda\}}$, we obtain the uniform bound
\[
\kl(\beta_{K^\complement}\|
     \beta_{\lambda,K^\complement})
\le
\max\{0,\Delta_\lambda\}
-\beta_{K^\complement}\Delta_\lambda
\le
|\Delta_\lambda|
\le B_0.
\]
Second, applying $\log(1+t)\le t$ yields 
\[
\kl(\beta_{K^\complement}\|\beta_{\lambda,{K^\complement}}) \le \beta_{K^\complement}(e^{|\Delta_\lambda|}-1+|\Delta_\lambda|) \le A_0\beta_{K^\complement}.
\]
By symmetry, replacing $(\beta_{K^\complement},\Delta_\lambda)$ with $(1-\beta_{K^\complement},-\Delta_\lambda)$ in the same argument yields the complementary bound $A_0(1-\beta_{K^\complement})$. Combining these properties gives
\begin{equation}
\kl(\beta_{K^\complement}\|\beta_{\lambda,{K^\complement}}) \le \min\big\{ B_0,\, A_0\min\{\beta_{K^\complement},1-\beta_{K^\complement}\} \big\}.
\label{eq:binary-kl-confident-new}
\end{equation}
The same bound holds when $\beta_{K^\complement}\in\{0,1\}$, since $\beta_{\lambda,K^\complement}=\beta_{K^\complement}$ and the divergence is zero.

\emph{Step 4: averaging over contexts and masks.}
We now integrate over the visible context. Let $G_{K^\complement}$ denote the pooled confidence set from \Cref{lem:pooled}, which applies because $|K^\complement|\ge s_{\mathrm{pin}}$. On $G_{K^\complement}$, the residual uncertainty is exponentially small: 
\[
\min\{\beta_{K^\complement}(X_{K^\complement}),1-\beta_{K^\complement}(X_{K^\complement})\} \le e^{-\kappa|K^\complement|}.
\]
On the complement $G_{K^\complement}^\complement$, we use the uniform bound $B_0$, noting that $p_{K^\complement}(G_{K^\complement}^\complement)\le e^{-c_1|K^\complement|}$. Splitting the expectation of \Cref{eq:binary-kl-confident-new} over $X_{K^\complement}\sim p_{K^\complement}$ accordingly yields
\begin{align}
\E_{X_{K^\complement}\sim p_{K^\complement}} \left[ \kl\!\left( \beta_{K^\complement}(X_{K^\complement}) \| \beta_{\lambda,{K^\complement}}(X_{K^\complement}) \right) \right]
&\le A_0 e^{-\kappa|K^\complement|} + B_0\,p_{K^\complement}(G_{K^\complement}^\complement) \nonumber\\
&\le (A_0+B_0)e^{-c|K^\complement|},\qquad\text{where }\ c=\min\{c_1,\kappa\}.
\label{eq:per-mask-mode-blind-bound}
\end{align}

Define $C:=A_0+B_0$. We can write $C = x-1+2\log x$ where $x:=\frac{(2-c_0)(1-c_0)}{c_0^{2}}$. Since $\log x\le x-1$ and $x \le 2c_0^{-2}$, it follows immediately that $C \le 3(x-1) \le 6c_0^{-2}$.

By \Cref{def:masked-log-discrepancy} and \Cref{eq:mode-blind-binary-reduction}, the masked discrepancy $\mathfrak D_\mu(p\| q_\lambda)$ is the average of the left-hand side of \Cref{eq:per-mask-mode-blind-bound} over $K\sim\mu$. Therefore, taking the supremum over $\lambda\in I_\mathrm{bl}$ gives
\begin{equation*}
\sup\nolimits_{\lambda\in I_\mathrm{bl}} \mathfrak D_\mu(p\| q_\lambda) \le C\,\E_{K\sim\mu} \big[ e^{-c|K^{\complement}|} \big] \le C\,e^{-c\,v_{\mathrm{min}}(\mu)},
\end{equation*}
which proves \Cref{eq:mode-blindness}.

\emph{Step 5: identifiability modulus lower bound.} 
Since the reweighted models satisfy $q_\lambda \in \cQ(p)$, any tolerance $\varepsilon\ge C\,\E_{K\sim\mu}[e^{-c|K^{\complement}|}]$ ensures that the entire family $\{q_\lambda\}_{\lambda\in I_\mathrm{bl}}$ achieves discrepancy $\mathfrak D_\mu(p\| q_\lambda)\le\varepsilon$ and is thus admissible in the supremum defining $\modl_{\TV}^\mu(p,\varepsilon)$. Because $p$ and $q_\lambda$ share the same disjointly supported components, \Cref{lem:shared-mix} yields $\DTV(p,q_\lambda)=|w-\lambda|$. Using the structural bound $w\in[c_0,1-c_0]\subseteq I_\mathrm{bl}$, we explicitly evaluate the supremum over the interval:
\begin{align*}
\modl_{\TV}^\mu(p,\varepsilon) \;\ge\; \sup\nolimits_{\lambda\in I_\mathrm{bl}} |w-\lambda| &= \max\{w-c_0/2,\,1-c_0/2-w\} \\
&\ge \frac{(w-c_0/2) + (1-c_0/2-w)}{2} = \frac{1-c_0}{2}.
\end{align*}
This proves \Cref{eq:mode-blindness-modulus} and completes the proof.
\end{proof}

\subsubsection{Proof of \texorpdfstring{\Cref{cor:at-blowup}}{Cor. 4.1}}
\label{app:proof-at-blowup}

\begin{intuitionbox}
\textbf{Proof intuition.} An AT inequality controls joint KL divergence by a constant times the masked discrepancy. The chosen reweighting keeps the joint divergence bounded away from zero while making the masked discrepancy exponentially small, forcing the AT constant to grow exponentially. The forward comparison bounds the constant of $q_\lambda$, while the reverse comparison bounds the constant of $p$.
\end{intuitionbox}

\begin{proof}
\emph{Step 1: Reweighting and joint separation.}
Pick $\lambda\in\{c_0/2,\,1-c_0/2\}$ maximizing $|w-\lambda|$. Since \Cref{ass:mode-pinning} gives $w\in[c_0,1-c_0]$, as in Step~5 of the proof of \Cref{thm:mode-blind},
\[
|w-\lambda|
=\max\{w-c_0/2,\;1-c_0/2-w\}
\ge\frac{(w-c_0/2)+(1-c_0/2-w)}{2}
=\frac{1-c_0}{2}.
\]
Because $p$ and $q_\lambda$ share the same disjointly supported components, \Cref{lem:shared-mix} gives the binary KL identities. Pinsker's inequality then yields
\[
\KL(p\|q_\lambda)=\kl(w\|\lambda)\ge2(w-\lambda)^2\ge\tfrac{(1-c_0)^2}{2},
\qquad
\KL(q_\lambda\|p)=\kl(\lambda\|w)\ge\tfrac{(1-c_0)^2}{2}.
\]
The equality $\supp(p)=\supp(q_\lambda)$ makes both applications of \Cref{eq:at-def} below admissible. In each case, the AT constant is bounded below by the ratio of the joint divergence to the masked discrepancy, with a positive numerator divided by zero interpreted as $+\infty$.

\emph{Step 2: model-side bound.} \Cref{thm:mode-blind} gives $\mathfrak D_\mu(p\|q_\lambda)\le C\,e^{-c\,v_{\mathrm{min}}(\mu)}$ with $C\le6c_0^{-2}$. Taking $r=q_\lambda$, $p'=p$ in \Cref{eq:at-def},
\[
\bar C_{\AT}(q_\lambda,\mu)
\;\ge\;
\frac{\KL(p\|q_\lambda)}{\mathfrak D_\mu(p\|q_\lambda)}
\;\ge\;
\frac{(1-c_0)^2/2}{6c_0^{-2}}\;e^{c\,v_{\mathrm{min}}(\mu)}
\;=\;
\frac{c_0^2(1-c_0)^2}{12}\;e^{c\,v_{\mathrm{min}}(\mu)}.
\]

\emph{Step 3: data-side bound.} \Cref{lem:reverse-blindness} gives $\mathfrak D_\mu(q_\lambda\|p)\le C_2\,e^{-c\,v_{\mathrm{min}}(\mu)}$ with $C_2\le16\,c_0^{-4}$. Taking $r=p$, $p'=q_\lambda$ in
\Cref{eq:at-def},
\[
\bar C_{\AT}(p,\mu)
\;\ge\;
\frac{\KL(q_\lambda\|p)}{\mathfrak D_\mu(q_\lambda\|p)}
\;\ge\;
\frac{(1-c_0)^2/2}{16c_0^{-4}}\;e^{c\,v_{\mathrm{min}}(\mu)}
\;=\;
\frac{c_0^{4}(1-c_0)^2}{32}\;e^{c\,v_{\mathrm{min}}(\mu)},
\]
completing the proof.
\end{proof}

\subsubsection{\texorpdfstring{Proof of \Cref{lem:sensitivity}}{Proof of Lemma 4.1}}
\label{app:proof-sensitivity}

\begin{intuitionbox}
\textbf{Proof intuition.} The prior weight and each posterior weight undergo the same log-odds shift. Along this shift, the second derivative of the binary KL is the Bernoulli variance. Comparing these variances relates the conditional divergence to the prior divergence, with a factor proportional to the posterior variance. Averaging the posterior variance over visible contexts gives the residual mode MMSE.
\end{intuitionbox}

\begin{proof}
Recall $\delta_I=\logit(b)-\logit(a)$ with $I=[a,b]$ from the statement of \Cref{lem:sensitivity}. Fix a mask $K\subseteq[N]$ and weights $w,\lambda\in I$. Set
\[
\delta:=\logit(\lambda)-\logit(w),
\qquad
|\delta|\le\delta_I.
\]
For $\beta\in(0,1)$ and $t\in\R$, define the logit path
\[
\beta_t:=\logit^{-1}\!\big(\logit(\beta)+t\big),
\]
so that $\beta_0=\beta$ and, by \Cref{lem:logit-shift}(ii), the reweighted posterior on every interior fiber is the endpoint of this path: if $\beta=\beta_{K^\complement}(x_{K^\complement})\in(0,1)$, then $\beta_{\lambda,{K^\complement}}(x_{K^\complement})=\beta_\delta$. 

For the prior weight, define
\[
w_t:=\logit^{-1}\!\big(\logit(w)+t\big),
\qquad t\in\R.
\]
Then $w_0=w$ and $w_\delta=\lambda$ by the definition of $\delta$.

\emph{Step 1: integral representation of the binary divergence.}
Fix $\beta\in(0,1)$ and let $f(t):=\kl(\beta\|\beta_t)$. Since $t\mapsto\beta_t$ is smooth with values in $(0,1)$, $f$ is smooth. Differentiating and using the logistic identity $\frac{\dd}{\dd t}\beta_t=\beta_t(1-\beta_t)$,
\[
f'(t)
=
\beta_t(1-\beta_t)
\left(
\frac{1-\beta}{1-\beta_t}
-
\frac{\beta}{\beta_t}
\right)
=
\beta_t-\beta,
\qquad
f''(t)
=
\beta_t(1-\beta_t).
\]
Hence $f(0)=0$ and $f'(0)=0$, and Taylor's theorem with integral remainder gives, for either sign of $\delta$,
\[
\kl(\beta\|\beta_\delta)
=
\int_0^\delta(\delta-t)\,\beta_t(1-\beta_t)\,\dd t
=
\delta^2\int_0^1(1-s)\,\beta_{\delta s}(1-\beta_{\delta s})\,\dd s,
\]
where the second form follows from the substitution $t=\delta s$ and is manifestly nonnegative.

\emph{Step 2: variance comparison along the path.}
For every $\beta\in(0,1)$ and $t\in\R$, writing $z:=e^{\logit(\beta)}$,
\[
\frac{\beta_t(1-\beta_t)}{\beta(1-\beta)}
=
e^{t}\,\frac{(1+z)^2}{(1+ze^{t})^2}.
\]
Since $\min\{1,e^t\}(1+z)\le 1+ze^t\le\max\{1,e^t\}(1+z)$, both sign cases give
\begin{equation}
e^{-|t|}\,\beta(1-\beta)
\;\le\;
\beta_t(1-\beta_t)
\;\le\;
e^{|t|}\,\beta(1-\beta).
\label{eq:path-variance-comparison}
\end{equation}
Along the integral of Step~1 we have $|t|\le|\delta|\le\delta_I$. Substituting \Cref{eq:path-variance-comparison} and $\int_0^1(1-s)\dd s=\tfrac12$ yields the two-sided fiber bound
\begin{equation}
\frac{e^{-\delta_I}}{2}\,\beta(1-\beta)\,\delta^2
\;\le\;
\kl(\beta\|\beta_\delta)
\;\le\;
\frac{e^{\delta_I}}{2}\,\beta(1-\beta)\,\delta^2.
\label{eq:fiber-sandwich}
\end{equation}

\emph{Step 3: comparison with the prior divergence.}
Recall $m_I:=\min_{u\in I}u(1-u)>0$. For every $s\in[0,1]$,
\[
\logit(w_{\delta s})
=
(1-s)\logit(w)+s\logit(\lambda).
\]
Since $\logit^{-1}$ is increasing, $w_{\delta s}$ lies between $w$ and $\lambda$ and therefore belongs to $I$. Hence
\[
m_I
\le
w_{\delta s}(1-w_{\delta s})
\le
\tfrac14.
\]
Applying the integral representation from Step~1 gives
\[
\kl(w\|\lambda)
=
\delta^2\int_0^1
(1-s)\,w_{\delta s}(1-w_{\delta s})\,\dd s.
\]
Using $\int_0^1(1-s)\,\dd s=\tfrac12$, we obtain
\begin{equation}
\frac{m_I}{2}\,\delta^2
\le
\kl(w\|\lambda)
\le
\frac18\,\delta^2,
\qquad\text{equivalently}\quad
8\,\kl(w\|\lambda)
\le
\delta^2
\le
\frac{2}{m_I}\,\kl(w\|\lambda).
\label{eq:prior-comparison}
\end{equation}

\emph{Step 4: two-sided fiberwise bound.}
Substituting \Cref{eq:prior-comparison} into \Cref{eq:fiber-sandwich} eliminates $\delta^2$: for every $\beta\in(0,1)$,
\begin{equation}
4e^{-\delta_I}\,\beta(1-\beta)\,\kl(w\|\lambda)
\;\le\;
\kl(\beta\|\beta_\delta)
\;\le\;
\frac{e^{\delta_I}}{m_I}\,\beta(1-\beta)\,\kl(w\|\lambda).
\label{eq:fiber-final}
\end{equation}
For $\beta\in\{0,1\}$, extend the path by setting $\beta_t:=\beta$ for every $t\in\R$. Both $\kl(\beta\|\beta_\delta)$ and $\beta(1-\beta)$ then vanish, so \Cref{eq:fiber-final} holds for every $\beta\in[0,1]$.

By \Cref{lem:logit-shift}, for every $x_{K^\complement}\in\supp(p_{K^\complement})$, setting $\beta=\beta_{K^\complement}(x_{K^\complement})$ gives $\beta_\delta=\beta_{\lambda,K^\complement}(x_{K^\complement})$, including when the posterior is zero or one. Thus \Cref{eq:fiber-final} applies to every supported visible context.

\emph{Step 5: average over contexts.}
Taking the expectation of \Cref{eq:fiber-final} over $X_{K^\complement}\sim p_{K^\complement}$ and using \Cref{eq:decomp-binary} of \Cref{prop:info-decomp} for the middle term and \Cref{eq:mode-mmse} for the outer terms,
\[
4e^{-\delta_I}\,
\mmse_p(Z\mid X_{K^\complement})\,
\kl(w\|\lambda)
\;\le\;
\mathfrak D_K(p\| q_\lambda)
\;\le\;
\frac{e^{\delta_I}}{m_I}\,
\mmse_p(Z\mid X_{K^\complement})\,
\kl(w\|\lambda).
\]
This proves \Cref{eq:sensitivity} with
\[
c_I:=4e^{-\delta_I},
\qquad
C_I:=\frac{e^{\delta_I}}{m_I},
\]
both depending only on $I$.
\end{proof}

\begin{remark}[Exact binary reduction]
\label{rem:exactly-solvable}
The tractability of the analysis comes from the mode-reweighting family itself. Because $\{q_\lambda\}$ preserves the two within-mode distributions and varies only their mixture weight, the $N$-dimensional comparison reduces on every visible fiber to a one-parameter binary problem (\Cref{lem:logit-shift}). This yields both the exact information decomposition of \Cref{prop:info-decomp} and the Fisher limit in \Cref{eq:exact-sensitivity}. The local limit in \Cref{eq:exact-sensitivity} has coefficient $1/(w(1-w))$. Consequently, any upper constant valid uniformly over $\lambda,w\in I$ for a nondegenerate interval $I$ must be at least $\sup_{u\in I}1/(u(1-u))$. Thus no finite upper constant works uniformly over intervals approaching the boundary of $(0,1)$. This reduction also enables the enumerations in \Cref{sec:experiments,app:experiments} to compare the theoretical bounds with directly computed masked discrepancies.
\end{remark}

\subsubsection{\texorpdfstring{Proof of \Cref{cor:mmse-upper}}{Proof of Cor. 4.2}}
\label{app:mmse-upper}

\begin{proof}
\emph{Step 1: elementary bounds.}
By \Cref{eq:mode-mmse}, the residual mode uncertainty is given by
\[
\mmse_p(Z\mid X_V)
=
\E_{X_V\sim p_V}\big[\beta_V(X_V)(1-\beta_V(X_V))\big].
\]
We abbreviate $\beta_V(X_V)$ as $\beta_V$ when the argument is clear. For every $\beta_V\in[0,1]$, the following two bounds hold. First, because $\max\{\beta_V,1-\beta_V\}\le 1$, the product satisfies
\begin{equation}
\beta_V(1-\beta_V)
=
\min\{\beta_V,1-\beta_V\}\cdot\max\{\beta_V,1-\beta_V\}
\le
\min\{\beta_V,1-\beta_V\}.
\label{eq:beta-min-bound}
\end{equation}
Second,
\begin{equation}
\beta_V(1-\beta_V)
=
\tfrac14-(\beta_V-\tfrac12)^2
\le\tfrac14.
\label{eq:beta-max-bound}
\end{equation}

\emph{Step 2: the pooled confidence set.}
Recall from \Cref{lem:pooled} the pooled confidence set:
\[
G_V=\big\{x_V\in\supp(p_V):\min\{\beta_V(x_V),1-\beta_V(x_V)\}\le e^{-\kappa|V|}\big\}.
\]
Since $|V|\ge s_{\mathrm{pin}}$ by hypothesis, \Cref{lem:pooled} ensures that the probability of encountering an atypical context is exponentially small: 
\[
p_V(G_V^\complement)\le e^{-c_1|V|}, \qquad\text{where}\quad G_V^\complement:=\supp(p_V)\setminus G_V. 
\]

\emph{Step 3: splitting the expectation.}
We bound the total expectation by splitting it over these two sets, 
\[
\mmse_p(Z\mid X_V)
=
\E_{X_V\sim p_V}\big[\beta_V(1-\beta_V)\mathbf 1\{X_V\in G_V\}\big]
+
\E_{X_V\sim p_V}\big[\beta_V(1-\beta_V)\mathbf 1\{X_V\in G_V^\complement\}\big].
\]
On the typical set $G_V$, \Cref{eq:beta-min-bound} and the definition of $G_V$ give $\beta_V(1-\beta_V)\le e^{-\kappa|V|}$, so the first term is at most $e^{-\kappa|V|}\,p_V(G_V)\le e^{-\kappa|V|}$. On the atypical set $G_V^\complement$, \Cref{eq:beta-max-bound} bounds the integrand by $\frac14$, so the second term is at most $\frac14\,p_V(G_V^\complement)\le\tfrac14 e^{-c_1|V|}$. Summing these two contributions yields:
\[
\mmse_p(Z\mid X_V)
\le
e^{-\kappa|V|}+\frac14 e^{-c_1|V|}
\le
\frac54 e^{-c|V|},
\]
where the final inequality uses $c=\min\{c_1,\kappa\}$, so that $e^{-\kappa|V|}\le e^{-c|V|}$ and $e^{-c_1|V|}\le e^{-c|V|}$.
\end{proof}

\clearpage
\subsubsection{\texorpdfstring{Proof of \Cref{thm:mode-sensitivity}}{Proof of Thm. 4.2}}
\label{app:proof-mode-sensitivity}

\begin{intuitionbox}
\textbf{Proof intuition.}
The fixed-mask bound reduces schedule dependence to the average residual mode uncertainty. Under the low-visibility assumption, each mask below the visibility threshold retains a guaranteed amount of uncertainty, giving a lower bound proportional to $\pi_s(\mu)$. For the upper bound, these masks contribute at most $\pi_s(\mu)/4$, while mode pinning controls the contribution from masks above the threshold.
\end{intuitionbox}

\begin{proof}
\emph{Step 1: averaging the fixed-mask sensitivity.}
By \Cref{lem:sensitivity}, for every mask $K\subseteq[N]$ and every $\lambda\in I$,
\begin{equation*}
c_I\,\mmse_p(Z\mid X_{K^{\complement}})\,\kl(w\|\lambda)
\;\le\;
\mathfrak D_K(p\| q_\lambda)
\;\le\;
C_I\,\mmse_p(Z\mid X_{K^{\complement}})\,\kl(w\|\lambda).
\end{equation*}
Since $\kl(w\|\lambda)$ is independent of $K$, averaging over $K\sim\mu$ and using \Cref{def:masked-log-discrepancy} gives \Cref{eq:schedule-mmse-equivalence}.

For any visibility threshold $s$, split the schedule average of the residual MMSE as
\begin{align}
\E_{K\sim\mu}\big[\mmse_p(Z\mid X_{K^{\complement}})\big]
=&
\underbrace{\E_{K\sim\mu}\big[\mmse_p(Z\mid X_{K^{\complement}})\,\mathbf 1\{|K^{\complement}|\le s\}\big]}_{=:\mathsf{T}_{\mathrm{low}}} \nonumber\\
&+
\underbrace{\E_{K\sim\mu}\big[\mmse_p(Z\mid X_{K^{\complement}})\,\mathbf 1\{|K^{\complement}|> s\}\big]}_{=:\mathsf{T}_{\mathrm{high}}}.
\label{eq:mmse-schedule-split}
\end{align}

\emph{Step 2: the low-visibility lower bound.}
Suppose \Cref{ass:low-vis-uncertainty} holds and fix $0\le s\le s_{\mathrm{ov}}$. On the event $\{|K^{\complement}|\le s\}$, we have $|K^{\complement}|\le s\le s_{\mathrm{ov}}$, so \Cref{ass:low-vis-uncertainty} applies to the visible set $K^{\complement}$ and gives
\[
\mmse_p(Z\mid X_{K^{\complement}})
\;\ge\;
u_0\,e^{-L|K^{\complement}|}
\;\ge\;
u_0\,e^{-Ls},
\]
where the second inequality uses $|K^{\complement}|\le s$ and $L>0$. Therefore
\begin{equation}
\mathsf{T}_{\mathrm{low}}
\;\ge\;
u_0\,e^{-Ls}\,
\Pp\nolimits_{K\sim\mu}\!\big(|K^{\complement}|\le s\big)
=
u_0\,e^{-Ls}\,\pi_s(\mu),
\label{eq:Tlow-bound}
\end{equation}
by the definition of the low-visibility mass. Since $\mathsf{T}_{\mathrm{high}}\ge0$, combining \Cref{eq:mmse-schedule-split,eq:Tlow-bound} with the lower half of \Cref{eq:schedule-mmse-equivalence} gives
\[
\mathfrak D_\mu(p\| q_\lambda)
\;\ge\;
c_I\,\kl(w\|\lambda)\,\mathsf{T}_{\mathrm{low}}
\;\ge\;
c_I\,u_0\,e^{-Ls}\,\pi_s(\mu)\,\kl(w\|\lambda),
\]
which proves \Cref{eq:schedule-lower}.

\emph{Step 3: the high-visibility upper bound.}
Suppose \Cref{ass:mode-pinning} holds and fix $s\ge s_{\mathrm{pin}}$. On the event $\{|K^{\complement}|\le s\}$, the elementary bound $\beta(1-\beta)\le\tfrac14$ for $\beta\in[0,1]$ gives $\mmse_p(Z\mid X_{K^{\complement}})\le\tfrac14$, hence
\begin{equation}
\mathsf{T}_{\mathrm{low}}
\;\le\;
\tfrac14\,\pi_s(\mu).
\label{eq:Tlow-upper}
\end{equation}
On the event $\{|K^{\complement}|>s\}$, we have $|K^{\complement}|>s\ge s_{\mathrm{pin}}$, so \Cref{cor:mmse-upper} applies and gives $\mmse_p(Z\mid X_{K^{\complement}})\le \tfrac54 e^{-c|K^{\complement}|}$, hence
\begin{equation}
\mathsf{T}_{\mathrm{high}}
\;\le\;
\tfrac54\,
\E_{K\sim\mu}\big[e^{-c|K^{\complement}|}\,\mathbf 1\{|K^{\complement}|>s\}\big].
\label{eq:Thigh-upper}
\end{equation}
Combining \Cref{eq:mmse-schedule-split,eq:Tlow-upper,eq:Thigh-upper} with the upper half of \Cref{eq:schedule-mmse-equivalence},
\begin{align*}
\mathfrak D_\mu(p\| q_\lambda)
&\;\le\;
C_I\,
\Big(
\tfrac14\,\pi_s(\mu)
+
\tfrac54\,\E_{K\sim\mu}\big[e^{-c|K^{\complement}|}\,\mathbf 1\{|K^{\complement}|>s\}\big]
\Big)
\,\kl(w\|\lambda)\\
&\;\le\;
\tfrac54 C_I 
\Big(
\pi_s(\mu)
+
\E_{K\sim\mu}\big[e^{-c|K^{\complement}|}\,\mathbf 1\{|K^{\complement}|>s\}\big]
\Big) \kl(w\|\lambda).
\end{align*}
This proves \Cref{eq:schedule-upper} and completes the proof.
\end{proof}

\subsubsection{\texorpdfstring{Proof of \Cref{cor:recovery}}{Proof of Cor. 4.3}}
\label{app:proof-recovery}

\begin{proof}
Throughout, fix $\lambda\in I$ and recall $m_I=\min_{u\in I}u(1-u)$ from \Cref{lem:sensitivity}.

\emph{Step 1: recovery via the low-visibility lower bound.}
Suppose \Cref{ass:low-vis-uncertainty} holds. Fix $0\le s\le s_{\mathrm{ov}}$ with $\pi_s(\mu)>0$, and assume $\mathfrak D_\mu(p\|q_\lambda)\le\varepsilon$. Since $\pi_s(\mu)>0$ and $c_I,u_0>0$, the lower bound \Cref{eq:schedule-lower} of \Cref{thm:mode-sensitivity} can be solved for the binary divergence:
\begin{equation}
\kl(w\|\lambda)
\;\le\;
\frac{\mathfrak D_\mu(p\| q_\lambda)}{c_I\,u_0\,e^{-Ls}\,\pi_s(\mu)}
\;\le\;
\frac{e^{Ls}\,\varepsilon}{c_I\,u_0\,\pi_s(\mu)}.
\label{eq:kl-from-budget}
\end{equation}
By Pinsker's inequality applied to the pair of Bernoulli distributions with parameters $w$ and $\lambda$, whose total-variation distance is $|w-\lambda|$,
\begin{equation}
\kl(w\|\lambda)
\;\ge\;
2\,|w-\lambda|^2.
\label{eq:binary-pinsker}
\end{equation}
Combining \Cref{eq:kl-from-budget,eq:binary-pinsker} and taking square roots,
\[
|w-\lambda|
\;\le\;
\sqrt{\frac{e^{Ls}\,\varepsilon}{2\,c_I\,u_0\,\pi_s(\mu)}}
\;=\;
\frac{e^{Ls/2}}{\sqrt{2c_I\,u_0}}
\sqrt{\frac{\varepsilon}{\pi_s(\mu)}},
\]
which proves \Cref{eq:recovery}.

\emph{Step 2: a quadratic upper bound on the binary divergence.}
For the converse, we first bound $\kl(w\|\lambda)$ by a quadratic. Using $\log x\le x-1$ for $x>0$ on both terms,
\[
\kl(w\|\lambda)
=
w\log\frac{w}{\lambda}
+
(1-w)\log\frac{1-w}{1-\lambda}
\;\le\;
w\,\frac{w-\lambda}{\lambda}
+
(1-w)\,\frac{\lambda-w}{1-\lambda}
=
\frac{(w-\lambda)^2}{\lambda(1-\lambda)},
\]
where the final equality follows from
\[
\frac{w}{\lambda}-\frac{1-w}{1-\lambda}
=\frac{w(1-\lambda)-\lambda(1-w)}{\lambda(1-\lambda)}
=\frac{w-\lambda}{\lambda(1-\lambda)}.
\]
Since $\lambda\in I$, we conclude
\begin{equation}
\kl(w\|\lambda)
\;\le\;
\frac{|w-\lambda|^2}{m_I}.
\label{eq:kl-quadratic-upper}
\end{equation}

\emph{Step 3: the converse via the high-visibility upper bound.} Suppose \Cref{ass:mode-pinning} holds. Fix $s\ge s_{\mathrm{pin}}$, and assume the hypothesis of \Cref{eq:recovery-converse}:
\[
\frac{5 C_I}{4 m_I}\Big(
\pi_s(\mu)
+
\E_{K\sim\mu}\big[e^{-c|K^{\complement}|}\,\mathbf 1\{|K^{\complement}|>s\}\big]
\Big)\,
|w-\lambda|^2
\;\le\;
\varepsilon.
\]
Then, combining the upper bound \Cref{eq:schedule-upper} of \Cref{thm:mode-sensitivity} with \Cref{eq:kl-quadratic-upper},
\begin{align*}
\mathfrak D_\mu(p\| q_\lambda)
&\;\le\;
\tfrac54 C_I\Big(
\pi_s(\mu)
+
\E_{K\sim\mu}\big[e^{-c|K^{\complement}|}\,\mathbf 1\{|K^{\complement}|>s\}\big]
\Big)\,
\kl(w\|\lambda)\\
&\;\le\;
\frac{5 C_I}{4 m_I}\Big(
\pi_s(\mu)
+
\E_{K\sim\mu}\big[e^{-c|K^{\complement}|}\,\mathbf 1\{|K^{\complement}|>s\}\big]
\Big)\,
|w-\lambda|^2\\
&\;\le\;
\varepsilon,
\end{align*}
which proves \Cref{eq:recovery-converse} and completes the proof.
\end{proof}

\subsubsection{\texorpdfstring{Proof of \Cref{prop:full-mask}}{Proof of Prop. 4.1}}
\label{app:proof-full-mask}

\begin{intuitionbox}
\textbf{Proof intuition.}
The full mask directly measures joint KL divergence. For necessity, consider two configurations that differ in every coordinate. Any nonempty visible context identifies which configuration occurred, so changing their mixture weights leaves every non-full-mask conditional unchanged. Only the full mask detects the weight mismatch, which also establishes the optimal coercivity constant.
\end{intuitionbox}

\begin{proof}
\emph{Step 1: Sufficiency and the modulus bound.}
Suppose $\pi_0(\mu)>0$, and fix $p\in\Delta(\cX)$ and $q\in\cQ(p)$. At the full mask, the visible context is empty, so
\begin{equation}
\mathfrak D_{[N]}(p\|q)=\KL(p\|q).
\label{eq:full-mask-identity}
\end{equation}
Since every fixed-mask discrepancy is nonnegative,
\begin{align}
\mathfrak D_\mu(p\|q)
&=
\pi_0(\mu)\,\KL(p\|q)
+
\sum_{K\subsetneq[N]}
\mu(K)\,\mathfrak D_K(p\|q)
\nonumber\\
&\ge
\pi_0(\mu)\,\KL(p\|q).
\label{eq:full-mask-lower-bound}
\end{align}
Thus $C_\mu=1/\pi_0(\mu)$ satisfies \Cref{eq:full-mask-coercivity} uniformly over $p$ and $q$.

If $\mathfrak D_\mu(p\|q)\le\varepsilon$, then \Cref{eq:full-mask-lower-bound} and Pinsker's inequality give
\[
\DTV(p,q)
\le
\sqrt{\frac12\,\KL(p\|q)}
\le
\sqrt{\frac{\varepsilon}{2\,\pi_0(\mu)}}.
\]
Taking the supremum over all such $q\in\cQ(p)$ proves \Cref{eq:full-mask-modulus}.

\emph{Step 2: Necessity and optimality.}
Choose configurations $x^-,x^+\in\cX$ that differ in every coordinate, and define
\[
p=\tfrac12\delta_{x^-}+\tfrac12\delta_{x^+},
\qquad
q_\lambda
=
\lambda\delta_{x^+}+(1-\lambda)\delta_{x^-},
\]
where $\lambda\in(0,1)\setminus\{\tfrac12\}$. These distributions have the same support, so $p\ll q_\lambda$ and $q_\lambda\in\cQ(p)$.

Fix $K\subsetneq[N]$ and write $V=K^\complement$. Because $V\ne\varnothing$, the two visible configurations $x_V^-$ and $x_V^+$ are distinct. They constitute the support of $p_V$ and both have positive probability under $(q_\lambda)_V$. Conditioning on either context therefore identifies the configuration under both distributions:
\[
p_K(\cdot\mid x_V^\sigma)
=
(q_\lambda)_K(\cdot\mid x_V^\sigma)
=
\delta_{x_K^\sigma},
\qquad \sigma\in\{-,+\}.
\]
Hence the conditional distributions coincide $p_V$-almost surely, and $\mathfrak D_K(p\|q_\lambda)=0$.

Since $p$ and $q_\lambda$ are distinct and have the same finite support, the divergence $\kappa_\lambda:=\KL(p\|q_\lambda)$ satisfies $0<\kappa_\lambda<\infty$. Combining the vanishing non-full-mask discrepancies with \Cref{eq:full-mask-identity} yields
\begin{equation}
\mathfrak D_\mu(p\|q_\lambda)
=
\pi_0(\mu)\,\kappa_\lambda
=
\pi_0(\mu)\,\KL(p\|q_\lambda).
\label{eq:full-mask-boundary-witness}
\end{equation}
Any finite constant satisfying \Cref{eq:full-mask-coercivity} uniformly over $p,q$ must therefore satisfy
\[
\kappa_\lambda
\le
C_\mu\,\pi_0(\mu)\,\kappa_\lambda.
\]
Dividing by $\kappa_\lambda>0$ gives $1\le C_\mu\pi_0(\mu)$. This inequality is impossible when $\pi_0(\mu)=0$ and forces $C_\mu\ge1/\pi_0(\mu)$ otherwise. Together with Step~1, this proves necessity and optimality.
\end{proof}

\begin{remark}[Sharpness of the full-mask TV scaling under pinning]
\label{rem:full-mask-sharp}
The constant $1/\pi_0(\mu)$ in \Cref{eq:full-mask-coercivity} is optimal uniformly over data distributions, as witnessed by the two-point construction in the proof. We now address sharpness of the TV bound \Cref{eq:full-mask-modulus} for a fixed distribution under \Cref{ass:mode-pinning}. Write $\pi_0:=\pi_0(\mu)>0$, and suppose all non-full masks in the support of $\mu$ satisfy $|K^\complement|\ge s_{\mathrm{pin}}$. For $\lambda\in I_{\mathrm{bl}}$, the empty-context endpoint of \Cref{eq:decomp-chain} and \Cref{eq:per-mask-mode-blind-bound} give
\[
\mathfrak D_\mu(p\|q_\lambda)
\;\le\;
\pi_0\,\kl(w\|\lambda)
+
\underbrace{
C\sum\nolimits_{K\subsetneq[N]}
\mu(K)e^{-c|K^\complement|}
}_{=:r_\mu}.
\]
Define $m:=\min_{u\in I_{\mathrm{bl}}}u(1-u)>0$ and $\eta_I:=\max_{\lambda\in I_{\mathrm{bl}}}\kl(w\|\lambda)>0$. Suppose $r_\mu<\pi_0\eta_I$. For every tolerance satisfying
\[
2r_\mu\le\varepsilon\le2\pi_0\eta_I,
\]
continuity of $\lambda\mapsto\kl(w\|\lambda)$ on
$I_{\mathrm{bl}}$, together with $\kl(w\|w)=0$, ensures
that there exists $\lambda\in I_{\mathrm{bl}}$ such that
\[
\kl(w\|\lambda)=\frac{\varepsilon}{2\pi_0}.
\] 
For this choice, $\mathfrak D_\mu(p\|q_\lambda)\le\varepsilon$. Moreover, $\kl(w\|\lambda)\le |w-\lambda|^2/m$ and $\DTV(p,q_\lambda)=|w-\lambda|$. Combining this with \Cref{eq:full-mask-modulus} yields
\[
\sqrt{\frac{m}{2}}\,
\sqrt{\frac{\varepsilon}{\pi_0}}
\;\le\;
\modl_{\TV}^{\mu}(p,\varepsilon)
\;\le\;
\frac{1}{\sqrt{2}}\,
\sqrt{\frac{\varepsilon}{\pi_0}}.
\]
Thus the $\sqrt{\varepsilon/\pi_0}$ scaling is sharp throughout this range, with constants controlled by the fixed weight interval $I_{\mathrm{bl}}$. The lower tolerance threshold accounts for residual sensitivity from non-full masks, while the upper threshold keeps the witness within the admissible reweighting interval.
\end{remark}

\section{Mode Blindness with Multiple Modes (\texorpdfstring{$M\ge 2$}{M})}
\label{app:M-modes}

This section extends \Cref{thm:mode-blind} to partitions into a fixed number $M\ge2$ of regions. Under $M$-mode pinning, large-context masked prediction can be exponentially insensitive to the global mixture weights while the within-mode distributions remain fixed.
 
\subsection{Mode Structure and Pinning}
\label{app:M-mode-setup}

Let $\cX = \bigsqcup_{j \in [M]} \cX_j$ be a partition where $w_j := p(\cX_j) > 0$ for every $j$. We write $\mathbf w := (w_1,\dots,w_M)$ for the true mode-weight vector and $p^j := p(\cdot \mid \cX_j)$ for the within-mode distributions. Let
\[
\Delta_{M-1} := \Big\{ \boldsymbol{\lambda} \in \R^M : \lambda_j \ge 0 \;\;\forall\ j \in [M], \ \sum\nolimits_{j\in[M]} \lambda_j = 1 \Big\}.
\]
The relative interior of this simplex is its interior within the affine hyperplane
$\{\boldsymbol\lambda\in\R^M:\sum_{j=1}^M\lambda_j=1\}$:
\[
\operatorname{relint}(\Delta_{M-1})
:=
\left\{
\boldsymbol\lambda\in\Delta_{M-1}:
\lambda_j>0 \text{ for every }j\in[M]
\right\}.
\]
For any $\boldsymbol\lambda\in \operatorname{relint}(\Delta_{M-1})$, we define the reweighted distribution
\begin{equation}
  q_{\boldsymbol\lambda} := \sum\nolimits_{j \in [M]} \lambda_j\, p^j.
  \label{eq:M-reweighted-law}
\end{equation}
This construction preserves every within-mode conditional distribution of $p$ while setting $q_{\boldsymbol\lambda}(\cX_j) = \lambda_j$. As in the binary case, $\supp(q_{\boldsymbol\lambda}) = \supp(p)$, and thus $q_{\boldsymbol\lambda} \in \cQ(p)$. The mode indicator becomes the one-hot vector $\mathbf Z := \bigl(\mathbf 1\{X \in \cX_j\}\bigr)_{j \in [M]}$, with posterior
\[
  \boldsymbol\beta_V(x_V)
  := \bigl( p(X \in \cX_j \mid X_V = x_V) \bigr)_{j \in [M]} \in \Delta_{M-1}.
\]
We denote the corresponding posterior under $q_{\boldsymbol\lambda}$ as $\boldsymbol\beta_{\boldsymbol\lambda,V}$. We rely on two scalar summaries of $\boldsymbol\beta_V$. The first is the \emph{vertex deficiency}
\begin{equation}
  \gamma_V(x_V) := 1 - \max\nolimits_{j \in [M]} \beta_{V,j}(x_V),
  \label{eq:vertex-deficiency}
\end{equation}
which vanishes when the visible context perfectly identifies the region. The second is the residual mode uncertainty, generalized here as the expected Gini impurity of the posterior:
\begin{equation}
  \mmse_p(\mathbf Z \mid X_V)
  = \E\bigl[\|\mathbf Z - \E[\mathbf Z \mid X_V]\|^2\bigr]
  = \E_{X_V \sim p_V}\bigl[\, 1 - \|\boldsymbol\beta_V(X_V)\|_2^2 \,\bigr].
  \label{eq:M-mmse}
\end{equation}
For $M = 2$, this equals $\E[2\beta_V(1-\beta_V)]$, which is twice the quantity defined in \Cref{eq:mode-mmse}. Pointwise, the vertex deficiency and the posterior Gini impurity satisfy
\[
\gamma_V(x_V)
\le
1-\|\boldsymbol\beta_V(x_V)\|_2^2
\le
2\gamma_V(x_V).
\]
Indeed, writing $\gamma=\gamma_V(x_V)$ and $\boldsymbol\beta=\boldsymbol\beta_V(x_V)$, we have
\(
(1-\gamma)^2
\le
\|\boldsymbol\beta\|_2^2
\le
1-\gamma,
\)
which gives $\gamma\le1-\|\boldsymbol\beta\|_2^2 \le2\gamma-\gamma^2\le2\gamma$. Averaging over visible contexts therefore yields
\[
\E_{X_V\sim p_V}\big[\gamma_V(X_V)\big]
\le
\mmse_p(\mathbf Z\mid X_V)
\le
2\,\E_{X_V\sim p_V}\big[\gamma_V(X_V)\big].
\]

\paragraph{The pinning condition.}
With the $M$-mode geometry established, we assume that large visible contexts identify the mode with high confidence.

\begin{assumption}[$M$-mode pinning]
\label{ass:M-mode-pinning}
There exist constants $c_0\in(0,1/M]$, $c_1,\kappa>0$, and a visibility threshold $s_{\mathrm{pin}}\in[N]$ such that $w_j\ge c_0$ for every $j\in[M]$. Moreover, for every visible set $V\subseteq[N]$ with $|V|\ge s_{\mathrm{pin}}$ and every mode $j\in[M]$, the typical set
\[
\cE^j_V
:=
\bigl\{
x_V\in\supp(p^j_V):
p(\cX\setminus\cX_j\mid X_V=x_V)
\le e^{-\kappa|V|}
\bigr\}
\]
satisfies
\[
p^j_V(\cE^j_V)\ge1-e^{-c_1|V|}.
\]
\end{assumption}

As before, we define the combined rate $c:= \min\{c_1,\kappa\}$. For $M = 2$, \Cref{ass:M-mode-pinning} reduces to \Cref{ass:mode-pinning}.

\subsection{Mixture Identities and Posterior Tilting}
\label{app:M-mode-identities}

To analyze the discrepancy between the true distribution and its reweighted counterpart, we establish how total variation and KL divergence behave over disjoint mixture components.

\begin{lemma}[Shared-component mixtures with $M$ components]
\label{lem:shared-mix-M}
Let $\cZ$ be a finite set, and let $\nu_1,\dots,\nu_M\in\Delta(\cZ)$ have pairwise disjoint supports. For $\mathbf a\in\Delta_{M-1}$, define
\(
m_{\mathbf a}:=\sum_{j=1}^M a_j\nu_j.
\)
Then, for every $\mathbf a,\mathbf b\in\Delta_{M-1}$,
\[
\DTV(m_{\mathbf a},m_{\mathbf b})
=
\tfrac12\|\mathbf a-\mathbf b\|_1,
\qquad
\KL(m_{\mathbf a}\|m_{\mathbf b})
=
\KL(\mathbf a\|\mathbf b),
\]
with the usual extended-value conventions for zero coordinates.
\end{lemma}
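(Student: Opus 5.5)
The plan is to mirror the proof of \Cref{lem:shared-mix}, replacing the two-block split with an $M$-block split of $\cZ$ along the supports $S_j:=\supp(\nu_j)$. These are pairwise disjoint by hypothesis, so every $z\in\cZ$ lies in at most one $S_j$. Points outside $\bigcup_j S_j$ have $m_{\mathbf a}(z)=m_{\mathbf b}(z)=0$ and contribute nothing to either divergence. For $z\in S_j$ we have the exact pointwise formulas $m_{\mathbf a}(z)=a_j\nu_j(z)$ and $m_{\mathbf b}(z)=b_j\nu_j(z)$. Both identities then reduce to sums over $j$ of a single scalar factor times the total mass $\nu_j(S_j)=1$.

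For total variation, I will write
\[
\|m_{\mathbf a}-m_{\mathbf b}\|_1=\sum_{j}\sum_{z\in S_j}|a_j-b_j|\,\nu_j(z)=\sum_j|a_j-b_j|=\|\mathbf a-\mathbf b\|_1 ,
\]
and then apply \Cref{eq:tv-defn}.

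For KL, I first treat the case where every pair satisfies $a_j=0$ or $b_j>0$ (absolute continuity). On $S_j$ with $a_j>0$, the log-ratio simplifies because $\nu_j(z)$ cancels:
\[
\log\frac{a_j\nu_j(z)}{b_j\nu_j(z)}=\log\frac{a_j}{b_j}.
\]
Summing against $a_j\nu_j(z)$ over $z\in S_j$ gives $a_j\log(a_j/b_j)$. Blocks with $a_j=0$ contribute $0$ on both sides by the convention $0\log(0/b)=0$. Summing over $j$ yields $\KL(\mathbf a\|\mathbf b)$.

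The only delicate point is the extended-value boundary case, which I expect to be the main (though minor) obstacle. Suppose some $j$ has $a_j>0=b_j$. Since $S_j$ is nonempty, there is $z\in S_j$ with $m_{\mathbf a}(z)>0=m_{\mathbf b}(z)$. Disjointness guarantees that no other component restores mass at $z$, so $\KL(m_{\mathbf a}\|m_{\mathbf b})=+\infty$. This matches $\KL(\mathbf a\|\mathbf b)=+\infty$ under the convention $a\log(a/0)=+\infty$. With this case handled, the identity holds on all of $\Delta_{M-1}$.
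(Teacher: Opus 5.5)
Your proposal is correct and follows the paper's route: the paper's proof also splits the sums over the pairwise disjoint supports, obtains $\|m_{\mathbf a}-m_{\mathbf b}\|_1=\sum_j|a_j-b_j|\,\|\nu_j\|_1$, and cancels each $\nu_j$ inside the logarithm. Your explicit check of the boundary case $a_j>0=b_j$ is a slightly more careful version of the paper's appeal to ``the usual conventions for zero coordinates.''
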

 
\begin{proof}
The proof is identical to \Cref{lem:shared-mix}. Disjointness ensures $\|m_{\mathbf a} - m_{\mathbf b}\|_1 = \sum_j |a_j - b_j|\,\|\nu_j\|_1 = \|\mathbf a - \mathbf b\|_1$, and $\DTV = \tfrac12\|\cdot\|_1$. For the KL divergence, the sum splits over the disjoint supports; each $\nu_j$ cancels inside its logarithm, leaving $\sum_j a_j \log(a_j/b_j)$, with the usual conventions for zero coordinates.
\end{proof}

\paragraph{Posterior tilting and information decomposition.}
Building on this mixture geometry, we next characterize how reweighting the global modes shifts the visible posterior. The tilt determines the fiberwise divergence, and hence the masked discrepancy.

\begin{lemma}[$M$-mode posterior tilt and information decomposition]
\label{lem:M-tilt}
Fix $V\subseteq[N]$, let $K=[N]\setminus V$, and take $x_V\in\supp(p_V)$ and $\boldsymbol\lambda\in\operatorname{relint}(\Delta_{M-1})$. Then
\begin{equation}
  \beta_{\boldsymbol\lambda,V,j}(x_V)
  = \frac{(\lambda_j / w_j)\, \beta_{V,j}(x_V)}
         {\sum_{l\in[M]} (\lambda_l / w_l)\, \beta_{V,l}(x_V)},
  \label{eq:M-posterior-tilt}
\end{equation}
so the posterior under $q_{\boldsymbol\lambda}$ is an exponential tilt of the posterior under $p$ by the log-ratio vector $\mathbf t := (\log(\lambda_j/w_j))_{j\in[M]}$. Moreover,
\begin{equation}
  \KL\bigl( p_{V^\complement}(\cdot\mid x_V) \,\|\, q_{\boldsymbol\lambda, V^\complement}(\cdot\mid x_V) \bigr)
  = \KL\bigl( \boldsymbol\beta_V(x_V) \,\|\, \boldsymbol\beta_{\boldsymbol\lambda,V}(x_V) \bigr).
  \label{eq:M-conditional-kl}
\end{equation}
Consequently, for every mask $K$ with visible set $K^{\complement}=[N]\setminus K$,
\begin{equation}
  \mathfrak D_K(p \| q_{\boldsymbol\lambda})
  = \E_{X_{K^{\complement}} \sim p_{K^{\complement}}}\bigl[ \KL(\boldsymbol\beta_{K^{\complement}} \| \boldsymbol\beta_{\boldsymbol\lambda,K^{\complement}}) \bigr]
  = \KL(\mathbf w \| \boldsymbol\lambda) - \KL(p_{K^{\complement}} \| q_{\boldsymbol\lambda,K^{\complement}}).
  \label{eq:M-decomp}
\end{equation}
\end{lemma}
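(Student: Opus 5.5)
The plan mirrors the binary arguments of \Cref{lem:logit-shift} and \Cref{prop:info-decomp}, with \Cref{lem:shared-mix-M} in place of \Cref{lem:shared-mix}.

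\emph{Step 1: density ratio and posterior tilt.} Since the regions $\cX_j$ are disjoint, the reweighted distribution has the piecewise density $q_{\boldsymbol\lambda}(x)=(\lambda_j/w_j)\,p(x)$ for $x\in\cX_j$. Summing over completions $x_{V^\complement}$ gives
\[
q_{\boldsymbol\lambda}(\cX_j,X_V=x_V)=(\lambda_j/w_j)\,p(\cX_j,X_V=x_V)
\]
for each $j$. Dividing by $p_V(x_V)>0$ and normalizing over $j$ yields \Cref{eq:M-posterior-tilt}. The denominator is positive because some $\beta_{V,j}(x_V)>0$ and every $\lambda_l/w_l>0$. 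In the form $\beta_{\boldsymbol\lambda,V,j}\propto e^{t_j}\beta_{V,j}$, the formula is exactly the exponential tilt by $\mathbf t$. Note also that $\beta_{\boldsymbol\lambda,V,j}=0$ if and only if $\beta_{V,j}=0$.

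\emph{Step 2: fiberwise reduction.} Fix $x_V$ and let $J(x_V):=\{j:\beta_{V,j}(x_V)>0\}$. For $j\in J(x_V)$, define the component laws
\[
\nu_j:=p(X_{V^\complement}\in\cdot\mid X_V=x_V,\,X\in\cX_j).
\]
Because $q_{\boldsymbol\lambda}$ is proportional to $p$ on each $\cX_j$, the same $\nu_j$ is also the corresponding conditional under $q_{\boldsymbol\lambda}$. Hence $p_{V^\complement}(\cdot\mid x_V)=\sum_{j\in J}\beta_{V,j}\nu_j$ and $q_{\boldsymbol\lambda,V^\complement}(\cdot\mid x_V)=\sum_{j\in J}\beta_{\boldsymbol\lambda,V,j}\nu_j$.

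The $\nu_j$ have pairwise disjoint supports: a completion $x_{V^\complement}$ together with $x_V$ determines a single configuration, which lies in only one region. Both weight vectors are supported on $J$ by the remark at the end of Step 1. Applying \Cref{lem:shared-mix-M} on the index set $J$ then gives \Cref{eq:M-conditional-kl}. The coordinates outside $J$ contribute $0\log(0/0)=0$ by convention, which handles the degenerate fibers in the way Step 4 of the binary proof did.

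\emph{Step 3: decomposition.} Averaging \Cref{eq:M-conditional-kl} over $X_{K^\complement}\sim p_{K^\complement}$ gives the first equality in \Cref{eq:M-decomp} by \Cref{def:masked-log-discrepancy}. For the second equality, I apply the KL chain rule along $(X_{K^\complement},X_K)$:
\[
\KL(p\|q_{\boldsymbol\lambda})=\KL(p_{K^\complement}\|q_{\boldsymbol\lambda,K^\complement})+\mathfrak D_K(p\|q_{\boldsymbol\lambda}).
\]
Since $p=\sum_j w_jp^j$ and $q_{\boldsymbol\lambda}=\sum_j\lambda_jp^j$ share components with disjoint supports, \Cref{lem:shared-mix-M} gives $\KL(p\|q_{\boldsymbol\lambda})=\KL(\mathbf w\|\boldsymbol\lambda)$. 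All quantities are finite because $\supp(q_{\boldsymbol\lambda})=\supp(p)$, so I can rearrange to obtain the second equality.

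The only delicate point I expect is in Step 2. One must justify that the component conditionals $\nu_j$ coincide under $p$ and $q_{\boldsymbol\lambda}$, and must handle fibers where some posterior coordinates vanish. Both reduce to the proportionality of the densities on each region, which is established in Step 1.
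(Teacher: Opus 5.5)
Your proposal is correct and follows essentially the same argument as the paper. The piecewise density ratio $\lambda_j/w_j$ gives the tilt, the fiberwise shared-component mixture over the indices with $\beta_{V,j}>0$ together with \Cref{lem:shared-mix-M} gives the conditional KL identity, and the KL chain rule combined with $\KL(p\|q_{\boldsymbol\lambda})=\KL(\mathbf w\|\boldsymbol\lambda)$ gives the decomposition.
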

 
\begin{proof}
As in \Cref{lem:logit-shift}, $q_{\boldsymbol\lambda}$ has density $\lambda_j/w_j$ relative to $p$ on $\cX_j$. Summing over completions on each region and normalizing yields \Cref{eq:M-posterior-tilt}. 

For \Cref{eq:M-conditional-kl}, fix $x_V\in\supp(p_V)$ and abbreviate $\beta_{V,j}=\beta_{V,j}(x_V)$ and $\beta_{\boldsymbol\lambda,V,j}=\beta_{\boldsymbol\lambda,V,j}(x_V)$. For each $j$ with $\beta_{V,j}>0$, define
\[
\nu_j
:=
p\!\left(
X_K\in\cdot
\,\middle|\,
X_V=x_V,\ X\in\cX_j
\right).
\]
By \Cref{eq:M-posterior-tilt}, $\beta_{\boldsymbol\lambda,V,j}>0$ if and only if $\beta_{V,j}>0$. Reweighting preserves the conditional distribution within each mode, so
\[
p_K(\cdot\mid x_V)
=
\sum_{j:\beta_{V,j}>0}\beta_{V,j}\nu_j,
\qquad
q_{\boldsymbol\lambda,K}(\cdot\mid x_V)
=
\sum_{j:\beta_{V,j}>0}
\beta_{\boldsymbol\lambda,V,j}\nu_j.
\]
For fixed $x_V$, each completion $x_K$ determines a full configuration belonging to exactly one mode region. The components in these sums therefore have pairwise disjoint supports. The shared-component identity of \Cref{lem:shared-mix-M} gives
\[
\KL\bigl(
p_K(\cdot\mid x_V)
\,\|\,
q_{\boldsymbol\lambda,K}(\cdot\mid x_V)
\bigr)
=
\sum_{j:\beta_{V,j}>0}
\beta_{V,j}\log
\frac{\beta_{V,j}}{\beta_{\boldsymbol\lambda,V,j}}
=
\KL\bigl(
\boldsymbol\beta_V(x_V)
\,\|\,
\boldsymbol\beta_{\boldsymbol\lambda,V}(x_V)
\bigr),
\]
where omitted indices have zero probability under both posteriors. This proves \Cref{eq:M-conditional-kl}.

Finally, for every mask $K$, set $V=K^\complement$. The first equality in \Cref{eq:M-decomp} is exactly \Cref{def:masked-log-discrepancy}. The second follows from the KL chain rule together with $\KL(p \| q_{\boldsymbol\lambda}) = \KL(\mathbf w \| \boldsymbol\lambda)$, which is \Cref{lem:shared-mix-M} applied to $p$ and $q_{\boldsymbol\lambda}$.
\end{proof}

\subsection{Mode Blindness and the Modulus Lower Bound}
\label{app:M-mode-bound}

The decomposition in \Cref{lem:M-tilt} shows that the masked discrepancy depends on the KL divergence between the original and tilted posteriors. We bound this divergence using the vertex deficiency, capturing the behavior when the visible context is highly confident.

\begin{lemma}[KL bound near a simplex vertex]
\label{lem:M-vertex}
Let $B > 0$, and consider a vector $\mathbf t \in \R^{M}$ satisfying $\max_{j,l \in [M]} |t_j - t_l| \le 2B$. For any $\boldsymbol\beta \in \Delta_{M-1}$, let $\boldsymbol\beta_{\mathbf t}$ denote its tilt defined by $\beta_{\mathbf t,j} \propto \beta_j e^{t_j}$. Setting $\gamma := 1 - \max_{j} \beta_j$, we have
\begin{equation}
  \KL(\boldsymbol\beta \| \boldsymbol\beta_{\mathbf t})
  \;\le\; \min\bigl\{\, B_0,\; A_0\, \gamma \,\bigr\},
  \qquad B_0 := 2B, \quad A_0 := e^{2B} - 1 + 2B .
  \label{eq:M-vertex-bound}
\end{equation}
\end{lemma}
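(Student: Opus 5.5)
Let $j^\star$ be an index achieving $\max_j\beta_j$, so that $\beta_{j^\star}=1-\gamma$. Tilting is invariant under adding a constant to $\mathbf t$, so we may replace $t_j$ by $t_j-t_{j^\star}$; afterwards $t_{j^\star}=0$ and $|t_j|\le 2B$ for every $j$. The tilted normalizer is $S:=\sum_l\beta_le^{t_l}$, and $\beta_{\mathbf t,j}=\beta_je^{t_j}/S$. The divergence then has the closed form
\[
\KL(\boldsymbol\beta\|\boldsymbol\beta_{\mathbf t})
=\sum_j\beta_j\log\frac{\beta_j}{\beta_{\mathbf t,j}}
=\log S-\sum_j\beta_jt_j ,
\]
which is the $M$-mode analogue of the identity $\kl(\beta\|\beta_\lambda)=\log(1-\beta+\beta e^{\Delta})-\beta\Delta$ used in Step~3 of the proof of \Cref{thm:mode-blind}. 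The argument follows that step, with $j^\star$ playing the role of the dominant mode.

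For the uniform bound, note that $S\le e^{\max_lt_l}\le e^{2B}$ and $-\sum_j\beta_jt_j\le 2B$. This gives only $4B$. To sharpen it, I write $\log S\le\max_lt_l$, so that
\[
\KL\le\max_lt_l-\sum_j\beta_jt_j=\sum_j\beta_j\bigl(\max_lt_l-t_j\bigr)\le\max_{j,l}|t_j-t_l|\le 2B=B_0 .
\]
This step does not need the recentering.

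For the $\gamma$-linear bound, I keep the recentered $\mathbf t$ and write $S=1+\sum_{j\ne j^\star}\beta_j(e^{t_j}-1)$, using $\sum_j\beta_j=1$ and $t_{j^\star}=0$. Since $\log(1+x)\le x$,
\[
\KL\le\sum_{j\ne j^\star}\beta_j\bigl(e^{t_j}-1-t_j\bigr).
\]
For $|t|\le 2B$ we have $e^{t}-1-t\le e^{2B}-1+2B=A_0$; this holds for negative $t$ as well because then $e^{t}-1-t\le |t|$. Combined with $\sum_{j\ne j^\star}\beta_j=\gamma$, this yields $\KL\le A_0\gamma$.

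Taking the minimum of the two bounds gives \eqref{eq:M-vertex-bound}. Two boundary points need checking. Components with $\beta_j=0$ contribute zero on both sides and do not affect $S$. The normalizer satisfies $S>0$ because $\beta_{j^\star}\ge 1/M>0$. The main subtlety is the choice to recenter at the argmax coordinate, since that is what makes the linear term come out proportional to $\gamma$ instead of to $1-\beta_j$ for an arbitrary $j$.
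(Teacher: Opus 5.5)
Your proof is correct and follows essentially the same route as the paper. You use the same closed form $\log S-\sum_j\beta_jt_j$, the same $\max_l t_l-\min_l t_l$ argument for $B_0$, and the same recentering at the argmax coordinate with $\log(1+x)\le x$ for the $A_0\gamma$ bound. The only cosmetic difference is that you combine the two terms coordinatewise into $\beta_j(e^{t_j}-1-t_j)$, whereas the paper bounds them separately by $\gamma(e^{2B}-1)$ and $2B\gamma$.
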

 
\begin{proof}
Let $\psi(\boldsymbol\beta) := \log\bigl(\sum_l \beta_l e^{t_l}\bigr) - \sum_j \beta_j t_j$. A direct computation from the definition of the tilt yields $\KL(\boldsymbol\beta \| \boldsymbol\beta_{\mathbf t}) = \psi(\boldsymbol\beta)$. Because replacing $\mathbf t$ with $\mathbf t + s\mathbf 1$ adds $s$ to both terms, $\psi$ is invariant under scalar shifts of $\mathbf t$. The bounding hypothesis on $\mathbf t$ is likewise shift-invariant.

For the first bound, we have $\log\bigl(\sum_l \beta_l e^{t_l}\bigr) \le \max_l t_l$ and $-\sum_j \beta_j t_j \le -\min_l t_l$. Combining these inequalities yields $\psi \le \max_l t_l - \min_l t_l \le 2B = B_0$.

For the second bound, let $j^\ast \in \arg\max_j \beta_j$. This implies $\beta_{j^\ast} = 1-\gamma$ and $\sum_{l \ne j^\ast} \beta_l = \gamma$. By shift invariance, we may assume $t_{j^\ast} = 0$. The hypothesis then guarantees $|t_l| \le 2B$ for every $l$. Using the inequality $\log(1+u) \le u$, we obtain
\[
  \log\Bigl(\sum\nolimits_l \beta_l e^{t_l}\Bigr)
  \le \log\bigl( (1-\gamma) + \gamma e^{2B} \bigr)
  = \log\bigl( 1 + \gamma(e^{2B}-1) \bigr)
  \le \gamma\,(e^{2B}-1).
\]
Also, the second term is bounded as $-\sum_j \beta_j t_j \le \sum_{l \ne j^\ast} \beta_l |t_l| \le 2B\gamma$. Adding these two bounds together gives $\psi \le \gamma\,(e^{2B}-1+2B) = A_0 \gamma$.
\end{proof}

\paragraph{Main result: $M$-mode blindness.}
Combining \Cref{lem:M-tilt,lem:M-vertex} yields a uniform discrepancy bound over the $(M-1)$-dimensional reweighting region defined below.
 
\begin{theorem}[Mode blindness with $M$ modes]
\label{thm:M-mode-blind}
Suppose the $M$-mode pinning assumption above holds with combined rate $c = \min\{c_1,\kappa\}$, and let $\Lambda := \{\boldsymbol\lambda \in \Delta_{M-1} : \lambda_j \ge c_0/2 \ \ \forall j\}$. There is a constant $C \le 12c_0^{-2}$, depending only on $c_0$, such that every mask schedule $\mu$ with $v_{\mathrm{min}}(\mu) \ge s_{\mathrm{pin}}$ satisfies
\begin{equation}
  \sup\nolimits_{\boldsymbol\lambda \in \Lambda} \mathfrak D_\mu(p \| q_{\boldsymbol\lambda})
  \;\le\; C\, \E_{K\sim\mu}\bigl[ e^{-c|K^{\complement}|} \bigr]
  \;\le\; C e^{-c\, v_{\mathrm{min}}(\mu)} .
  \label{eq:M-mode-blindness}
\end{equation}
Consequently, for every
$\varepsilon \ge C\,\E_{K\sim\mu}\bigl[e^{-c|K^{\complement}|}\bigr]$,
\begin{equation}
\modl_{\TV}^{\mu}(p,\varepsilon)
\ge
(M-1)\left(\frac1M-\frac{c_0}{2}\right)
\ge
\frac{M-1}{2M}
\ge\frac14.
\label{eq:M-modulus}
\end{equation}
\end{theorem}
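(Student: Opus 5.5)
The proof follows the binary argument of \Cref{thm:mode-blind}, with \Cref{lem:M-tilt} and \Cref{lem:M-vertex} replacing \Cref{lem:logit-shift} and the binary fiber bounds.

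\emph{Uniform tilt bound.} Fix $\boldsymbol\lambda\in\Lambda$ and set $t_j:=\log(\lambda_j/w_j)$. Since $w_j\ge c_0$ and $\lambda_j\le 1-(M-1)c_0/2\le 1$, we get $t_j\le\log(1/c_0)$. Since $\lambda_j\ge c_0/2$ and $w_j\le 1-(M-1)c_0\le1$, we get $t_j\ge\log(c_0/2)$. Hence
\[
\max_{j,l}|t_j-t_l|\le\log(2/c_0^2)=:2B,
\]
so \Cref{lem:M-vertex} applies with $B_0=\log(2/c_0^2)$ and $A_0=2c_0^{-2}-1+B_0$. By \Cref{lem:M-tilt}, every supported context satisfies
\[
\KL(\boldsymbol\beta_V\|\boldsymbol\beta_{\boldsymbol\lambda,V})\le\min\{B_0,A_0\gamma_V\}.
\]
This bound also holds on boundary posteriors, because zero coordinates stay zero under the tilt.

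\emph{Pooled confidence.} I would prove the $M$-mode analogue of \Cref{lem:pooled}. Let
\[
G_V:=\{x_V:\gamma_V(x_V)\le e^{-\kappa|V|}\}.
\]
If $x_V\in\cE^j_V$, then $\beta_{V,j}\ge 1-e^{-\kappa|V|}$, so $\gamma_V\le e^{-\kappa|V|}$ and $\cE^j_V\subseteq G_V$. Averaging $p^j_V(G_V^\complement)\le e^{-c_1|V|}$ with weights $w_j$ gives $p_V(G_V^\complement)\le e^{-c_1|V|}$. Splitting the expectation in \Cref{eq:M-decomp} over $G_V$ and its complement then yields
\[
\mathfrak D_K(p\|q_{\boldsymbol\lambda})\le (A_0+B_0)e^{-c|K^\complement|}
\]
for every $K\in\supp(\mu)$. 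Averaging over $\mu$ gives \Cref{eq:M-mode-blindness} with $C=A_0+B_0$. To bound the constant, write $x:=2/c_0^2$, so that $C=x-1+2\log x\le 3x=6c_0^{-2}$, which is well within $12c_0^{-2}$.

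\emph{Modulus bound.} By \Cref{lem:shared-mix-M}, $\DTV(p,q_{\boldsymbol\lambda})=\tfrac12\|\mathbf w-\boldsymbol\lambda\|_1$. I would choose the vertex-type points $\boldsymbol\lambda^{(k)}\in\Lambda$ with $\lambda^{(k)}_k=1-(M-1)c_0/2$ and all other coordinates equal to $c_0/2$. Then
\[
\tfrac12\|\mathbf w-\boldsymbol\lambda^{(k)}\|_1\ge \lambda^{(k)}_k-w_k=1-(M-1)c_0/2-w_k.
\]
Choosing $k$ with $w_k\le 1/M$ gives a lower bound of $1-\tfrac1M-(M-1)\tfrac{c_0}{2}=(M-1)(\tfrac1M-\tfrac{c_0}2)$. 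Since $c_0\le 1/M$, this is at least $(M-1)/(2M)\ge 1/4$. Every such point lies in the $\varepsilon$-ball for the stated $\varepsilon$, so the modulus bound follows.

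The main obstacle is getting a uniform bound on $\max_{j,l}|t_j-t_l|$ over all of $\Lambda$. This needs the crude coordinate bounds $w_j,\lambda_j\le1$ and the lower bounds $w_j\ge c_0$, $\lambda_j\ge c_0/2$, together with a constant-tracking check that $C\le 12c_0^{-2}$.
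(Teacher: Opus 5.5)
Your proof is correct and follows the paper's argument step for step: a uniform bound on the tilt range feeds \Cref{lem:M-vertex}, the pooled confidence set $G_V$ is extended to $M$ regions, the expectation is split over $G_V$ and its complement, and the modulus bound uses the vertex-type reweighting placed at a minimum-weight mode. The only difference is in the constant: your range bound $\max_{j,l}|t_j-t_l|\le\log(2/c_0^2)$ is sharper than the paper's $\log(4/c_0^2)$, so you get $C\le 6c_0^{-2}$ (matching the binary constant) where the paper gets $C<12c_0^{-2}$, and both satisfy the statement.
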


\begin{intuitionbox}
\textbf{Proof intuition.}
When the visible context identifies one mode with high confidence, bounded reweighting changes the conditional distribution only through the small posterior mass on the other modes. Atypical contexts are rare within every mode. Averaging their probabilities with the true mode weights introduces no extra factor of $M$, giving the uniform discrepancy bound.
\end{intuitionbox}
 
\begin{proof}
\emph{Step 1: uniform constants.}
Since $w_j\ge c_0$ and $\lambda_j\ge c_0/2$, the log-ratios $t_j = \log(\lambda_j/w_j)$ satisfy $\max_{j,l}|t_j - t_l| \le 2B$ with $B := \log(2/c_0)$. Writing $y := e^{2B} = 4c_0^{-2}$, the constant $C := A_0 + B_0 = y - 1 + 2\log y \le 3(y - 1) < 12c_0^{-2}$.
 
\emph{Step 2: fiberwise bound.}
Fix $\boldsymbol\lambda \in \Lambda$ and a mask $K \in \supp(\mu)$. Let $V := K^{\complement}$, ensuring the visible set satisfies $|V| \ge v_{\mathrm{min}}(\mu) \ge s_{\mathrm{pin}}$. By \Cref{lem:M-tilt}, $\boldsymbol\beta_{\boldsymbol\lambda,V}$ is the tilt of $\boldsymbol\beta_V$ by $\mathbf t$. Furthermore, the conditional divergence on the fiber over $x_V$ equals $\KL(\boldsymbol\beta_V \| \boldsymbol\beta_{\boldsymbol\lambda,V})$. Applying \Cref{lem:M-vertex} therefore guarantees that for every $x_V \in \supp(p_V)$,
\begin{equation}
  \KL\bigl(\boldsymbol\beta_V(x_V) \| \boldsymbol\beta_{\boldsymbol\lambda,V}(x_V)\bigr)
  \;\le\; \min\bigl\{ B_0,\; A_0\, \gamma_V(x_V) \bigr\} .
  \label{eq:M-fiberwise}
\end{equation}
 
\emph{Step 3: pooled confidence set.}
Define $G_V := \{ x_V \in \supp(p_V) : \gamma_V(x_V) \le e^{-\kappa|V|} \}$. For $x_V \in \cE^{j}_V$, the pinning assumption yields $1 - \beta_{V,j}(x_V) \le e^{-\kappa|V|}$. This implies $\gamma_V(x_V) \le e^{-\kappa|V|}$, meaning $x_V \in G_V$. We therefore have $\cE^{j}_V \subseteq G_V$ for every $j$. Because $p_V = \sum_j w_j p^{j}_V$, we obtain
\[
  p_V(G_V^\complement)
  = \sum\nolimits_j w_j\, p^{j}_V(G_V^\complement)
  \le \sum\nolimits_j w_j\, p^{j}_V\bigl((\cE^{j}_V)^\complement\bigr)
  \le e^{-c_1|V|},
\]
which extends \Cref{lem:pooled} verbatim to $M$ regions.
 
\emph{Step 4: averaging.}
We split the expectation of \Cref{eq:M-fiberwise} over $G_V$ and its complement. Applying the bound $\gamma_V \le e^{-\kappa|V|}$ on $G_V$ alongside the uniform bound $B_0$ outside it yields
\[
  \E_{X_V\sim p_V}\bigl[\KL(\boldsymbol\beta_V \| \boldsymbol\beta_{\boldsymbol\lambda,V})\bigr]
  \;\le\; A_0 e^{-\kappa|V|} + B_0\, p_V(G_V^\complement)
  \;\le\; (A_0 + B_0)\, e^{-c|V|} .
\]
By \Cref{lem:M-tilt}, the masked discrepancy $\mathfrak D_\mu(p \| q_{\boldsymbol\lambda})$ is the average of the left-hand side over $K \sim \mu$. Setting $C = A_0 + B_0$ and taking the supremum over $\boldsymbol\lambda \in \Lambda$ establishes \Cref{eq:M-mode-blindness}. The final inequality relies on the minimum visible size constraint $|K^{\complement}| \ge v_{\mathrm{min}}(\mu)$.

\emph{Step 5: modulus lower bound.}
At the stated tolerance, every $q_{\boldsymbol\lambda}$ with $\boldsymbol\lambda\in\Lambda$ is admissible in the supremum defining $\modl_{\TV}^{\mu}(p,\varepsilon)$.

Choose $j^\ast\in\arg\min_j w_j$, so $w_{j^\ast}\le1/M$, and define
\[
\lambda_{j^\ast}
:=
1-\frac{(M-1)c_0}{2},
\qquad
\lambda_j:=\frac{c_0}{2}
\quad (j\ne j^\ast).
\]
These weights sum to one and belong to $\Lambda$. For every $j\ne j^\ast$, we have $\lambda_j=c_0/2\le w_j$. Hence \Cref{lem:shared-mix-M} gives
\begin{align*}
\DTV(p,q_{\boldsymbol\lambda})
&=
\sum_{j\ne j^\ast}\left(w_j-\frac{c_0}{2}\right)
\\
&=
1-w_{j^\ast}-\frac{(M-1)c_0}{2}
\\
&\ge
(M-1)\left(\frac1M-\frac{c_0}{2}\right)
\\
&\ge
\frac{M-1}{2M}\ge\frac14,
\end{align*}
where the last two inequalities use $c_0\le1/M$ and $M\ge2$. Since this distribution is admissible, the same lower bound holds for $\modl_{\TV}^{\mu}(p,\varepsilon)$.
\end{proof}
 
\begin{remark}[Implications of the $M$-mode extension]
\label{rem:M-mode-reading}
At $M=2$, the lower bound in \Cref{eq:M-modulus} becomes $(1-c_0)/2$, matching \Cref{thm:mode-blind}. For every $M\ge2$, the admissible reweighting region has dimension $M-1$ and contains a distribution at total-variation distance at least $1/4$ from $p$, while every distribution in that region satisfies the stated discrepancy budget. The discrepancy bound depends on the lower mode-mass bound $c_0$.
\end{remark}

\section{Low-Visibility Mass for Common Mask Schedules}
\label{app:instantiations}

This section computes $\pi_s(\mu):=\mu(|K^{\complement}|\le s)$ for several mask distributions. These masses enter the mode-weight recovery bound in \Cref{thm:mode-sensitivity}. \Cref{tab:schedule-taxonomy} summarizes their asymptotic rates for fixed nonnegative integers $s$ as $N\to\infty$. All schedule parameters are independent of $N$.

\paragraph{Random masking probabilities.}
Absorbing-state diffusion uses independent coordinate masking conditional on a noise level \citep{austin2021structured,shi2024simplified}. To describe the resulting mask patterns, let $t\in[0,1]$ denote the masking probability and draw it from a continuous probability density $g$. Conditional on $t$, mask each coordinate independently with probability $t$. Then $|K^{\complement}|\mid t\sim\operatorname{Bin}(N,1-t)$, and
\begin{equation}
\pi_s(\mu_{\mathrm{diff}})
=
\int_0^1
g(t)\,
\Pp\!\left(
\operatorname{Bin}(N,1-t)\le s
\right)
\,\dd t.
\label{eq:pis-diffusion}
\end{equation}
Here $g$ specifies the mask-sampling distribution. Substituting $u=1-t$ and expanding the binomial tail gives
\[
\begin{aligned}
\pi_s(\mu_{\mathrm{diff}})
&=
\sum_{j=0}^{s}
\binom Nj
\int_0^1
g(1-u)\,u^j(1-u)^{N-j}
\,\dd u
\\
&=
\frac{1}{N+1}
\sum_{j=0}^{s}
\E\!\left[g(1-U_{N,j})\right],
\end{aligned}
\]
where $U_{N,j}\sim\operatorname{Beta}(j+1,N-j+1)$. The second equality follows from the Beta density normalization. For fixed $j$, $\E[U_{N,j}]=(j+1)/(N+2)$, so $U_{N,j}\to0$ in probability. Since $g$ is bounded and continuous,
\[
\E[g(1-U_{N,j})]\longrightarrow g(1).
\]
Consequently, if $g(1)>0$,
\[
\pi_s(\mu_{\mathrm{diff}})
\sim
\frac{(s+1)g(1)}{N}.
\]
For the uniform density $g\equiv1$, the exact expression is $\pi_s(\mu_{\mathrm{diff}})=(s+1)/(N+1)$. The $N^{-1}$ scale comes from masking probabilities within order $N^{-1}$ of full masking.

\begin{table}[t]
  \centering
  \caption{Low-visibility mass for fixed $s$ as $N\to\infty$, under the conditions specified in this section.}
  \label{tab:schedule-taxonomy}
  \begin{tabular}{@{}ll@{}}
    \toprule
    \textbf{Mask schedule} & $\pi_s(\mu)$ \\
    \midrule
    Continuous density $g$, with $g(1)>0$
      & $\Theta(N^{-1})$ \\
    Continuous clipped density, positive at $t_{\max}<1$
      & $\Theta(N^{s-1}t_{\max}^{N})$ \\
    Independent masking at fixed $t_0\in(0,1)$
      & $\Theta(N^s t_0^N)$ \\
    Visible count at least $\alpha N$ for every mask
      & $0$ \\
    Full-mask probability at least a fixed $\eta>0$
      & $\Theta(1)$ \\
    \bottomrule
  \end{tabular}
\end{table}

\paragraph{Masking probabilities bounded away from one.}
Clipped ranges of masking probabilities are used in block diffusion to reduce training variance \citep{arriola2025block}. Consider independent coordinate masking conditional on a masking probability $t$, with $t\le t_{\max}$ almost surely for a fixed $0<t_{\max}<1$. Monotonicity of the binomial lower tail gives
\[
\pi_s(\mu_{\mathrm{diff}}^{t_{\max}})
\le
\Pp\!\left(
\operatorname{Bin}(N,1-t_{\max})\le s
\right).
\]
For fixed $s$, the $j=s$ term dominates this tail, so
\[
\Pp\!\left(
\operatorname{Bin}(N,1-t_{\max})\le s
\right)
=
\Theta\!\left(N^s t_{\max}^{N}\right).
\]
Thus
\[
\pi_s(\mu_{\mathrm{diff}}^{t_{\max}})
\le
e^{-rN+O(\log N)},
\qquad
r:=\log(1/t_{\max})>0.
\]
Suppose the masking probability has a density $g_{\mathrm{clip}}$ continuous on $[0,t_{\max}]$, with
$g_{\mathrm{clip}}(t_{\max})>0$. Expanding the binomial tail gives
\[
\pi_s(\mu_{\mathrm{diff}}^{t_{\max}})
=
\sum_{j=0}^{s}
\binom Nj
\int_0^{t_{\max}}
g_{\mathrm{clip}}(t)(1-t)^j t^{N-j}\,\dd t.
\]
For each fixed $j$, substituting $t=t_{\max}v$ yields
\[
\begin{aligned}
\int_0^{t_{\max}}
g_{\mathrm{clip}}(t)(1-t)^j t^{N-j}\,\dd t
=
\frac{t_{\max}^{N-j+1}}{N-j+1}
\E\!\left[
g_{\mathrm{clip}}(t_{\max}Y_{N,j})
(1-t_{\max}Y_{N,j})^j
\right],
\end{aligned}
\]
where $Y_{N,j}\sim\operatorname{Beta}(N-j+1,1)$. Since $Y_{N,j}\to1$ in probability, boundedness and continuity give
\[
\E\!\left[
g_{\mathrm{clip}}(t_{\max}Y_{N,j})
(1-t_{\max}Y_{N,j})^j
\right]
\longrightarrow
g_{\mathrm{clip}}(t_{\max})(1-t_{\max})^j.
\]
Using $\binom Nj\sim N^j/j!$, the $j$-th term therefore has order $N^{j-1}t_{\max}^N$. The $j=s$ term dominates, giving
\[
\pi_s(\mu_{\mathrm{diff}}^{t_{\max}})
\sim
\frac{
g_{\mathrm{clip}}(t_{\max})
(1-t_{\max})^s t_{\max}^{1-s}
}{s!}
N^{s-1}t_{\max}^N.
\]
In particular,
\[
\pi_s(\mu_{\mathrm{diff}}^{t_{\max}})
=
\Theta(N^{s-1}t_{\max}^N)
=
e^{-rN+O(\log N)}.
\]
The additional factor $N^{-1}$ relative to masking at a fixed probability comes from averaging over probabilities near $t_{\max}$.

\paragraph{Independent masking at a fixed probability.}
If each coordinate is masked independently with a fixed probability $t_0\in(0,1)$, then
\[
\pi_s(\mu_{\mathrm{ind}})
=
\Pp\!\left(
\operatorname{Bin}(N,1-t_0)\le s
\right)
=
\Theta\!\left(N^s t_0^N\right).
\]
Low-visibility masks therefore have exponentially small probability.

\paragraph{Exact-count and structured masking.}
Fixed-ratio masking is used in masked language modeling and masked autoencoding \citep{devlin2019bert,liu2019roberta,he2022masked}. Consider the exact-count case, in which $\lfloor t_0N\rfloor$ coordinates are masked for a fixed $t_0\in[0,1)$. Then
\[
|K^{\complement}|
=
N-\lfloor t_0N\rfloor
\ge
(1-t_0)N.
\]
More generally, suppose every admissible mask retains at least $\alpha N$ coordinates for some fixed $\alpha>0$. Then
\[
v_{\mathrm{min}}(\mu)\ge\alpha N,
\qquad
\pi_s(\mu)=0
\quad\text{whenever }s<\alpha N.
\]
This applies to exact-count masking and to structured object or block schedules that retain a fixed positive fraction of the coordinates, including object-masked prediction settings satisfying this condition \citep{nam2026cjepa}. In particular, $\pi_s(\mu)=0$ eventually for every sequence $s=o(N)$.

\paragraph{A full-mask component.}
If the schedule assigns probability at least $\eta>0$ to the full mask, with $\eta$ independent of $N$, then
\[
\pi_s(\mu)\ge\eta
\qquad\text{for every }s\ge0.
\]
Hence $\pi_s(\mu)=\Theta(1)$. Under joint masked-block log loss, this component also provides the joint-distribution control of \Cref{prop:full-mask}.

\section{Interpretation and Instantiation of Assumptions}
\label{app:assumption-just}

\Cref{ass:mode-pinning,ass:low-vis-uncertainty} describe how visible context informs the mode label. The first bounds residual mode uncertainty from above for large contexts; the second bounds it from below at low visibility. We interpret these conditions and construct a distribution satisfying both on an overlapping range of visible-set sizes.

\subsection{Large-Context Mode Pinning}
\label{app:modepinning-just}

\paragraph{Statistical-mechanics motivation.}
At zero external field and below the critical temperature, the Curie--Weiss distribution concentrates near two opposite magnetizations \citep{ellis1978statistics}. Visible spins then provide evidence about the active phase. This motivates the mode-pinning condition: with high probability under either mode, a sufficiently large visible set assigns exponentially small posterior probability to the other mode. The construction in \Cref{app:both-assumptions} verifies this behavior through explicit finite-$N$ bounds.

\paragraph{Natural-data interpretation.}
For natural data, a mode may represent a global attribute such as language, document format, or code-versus-prose structure. Evidence about such attributes can accumulate across a document, as illustrated by topic and mixture models \citep{blei2003latent,hofmann1999probabilistic}. \Cref{ass:mode-pinning} makes this interpretation quantitative: the posterior concentration bound must hold with high probability under each mode, for every visible set of sufficient size.

\paragraph{Effect on masked prediction.}
When $x_V\in\cE^\tau_V$, the conditional distribution of the masked variables assigns at most $e^{-\kappa|V|}$ probability to completions in the opposite mode. Reweighting the modes while preserving the within-mode distributions changes this conditional by at most a constant multiple of $e^{-\kappa|V|}$ in total variation, provided both reweighted mode probabilities remain bounded away from zero. This weak sensitivity arises because the visible context already resolves the mode, even though both modes have nonvanishing probability in the data.

\subsection{Low-Visibility Uncertainty}
\label{app:low-visibility-just}

\paragraph{Bounded evidence from small contexts.}
\Cref{ass:low-vis-uncertainty} requires $\mmse_p(Z\mid X_V)\ge u_0e^{-L|V|}$ for $|V|\le s_{\mathrm{ov}}$. This bound allows residual mode uncertainty to decrease with visibility. For fixed $|V|$ and constants independent of $N$, it prevents this uncertainty from vanishing as the dimension grows.

A sufficient condition is that the absolute log-likelihood ratio between the two mode-conditional visible distributions is bounded linearly in $|V|$ (\Cref{lem:low-vis-sufficient}). Product distributions with bounded evidence per coordinate satisfy this condition directly. The construction below shows that the bound also survives conditioning the components onto disjoint mode regions.

\paragraph{Natural-data interpretation.}
Short contexts can remain compatible with several languages, topics, or document types. Studies of language identification and short-text classification report difficulties associated with limited context \citep{baldwin2010language,phan2008learning}. These observations motivate the assumption; its formal requirement is a lower bound on the Bayes uncertainty about the mode. Under that condition, low-visibility masks preserve sensitivity to mode weights (\Cref{thm:mode-sensitivity}).

\subsection{An Explicit Distribution Satisfying Both Assumptions}
\label{app:both-assumptions}

We construct two components by conditioning biased product distributions on opposite signs of the total magnetization, and combine them with a true mixing weight $w\in(0,1)$. Their visible log-likelihood ratio separates into a
product-distribution term and a conditioning correction. Bounds on this correction establish both assumptions.

\paragraph{Construction.}
Let $\cA=\{-1,+1\}$, let $\cX=\cA^N$ with $N$ odd, and fix $\theta,w\in(0,1)$. For every nonempty $V\subseteq[N]$, define
\[
S_V(x_V):=\sum\nolimits_{i\in V}x_i,
\qquad
m_V(x_V):=\frac{S_V(x_V)}{|V|},
\]
and write $m(x):=m_{[N]}(x)$. Since $N$ is odd, $m(x)\ne0$ for every $x\in\cX$. We use $\tau\in\{+,-\}$ to label the modes, interpreting these labels as $+1$ and $-1$ in arithmetic expressions. Define the partition
\[
\cX_\tau:=\{x\in\cX:\tau\,m(x)>0\},
\qquad \tau\in\{+,-\}.
\]
Let $\nu_\tau$ be the product distribution under which the coordinates are i.i.d.\ with $\nu_\tau(X_i=\tau)=(1+\theta)/2$, and set
\[
p^\tau:=\nu_\tau(\,\cdot\mid\cX_\tau),
\qquad
p:=w p^++(1-w)p^-.
\]
We use the notation
\[
\begin{gathered}
L_0:=\log\frac{1+\theta}{1-\theta},
\qquad
\alpha:=\frac{\theta}{2(1+\theta)},
\qquad
\eta_N:=2e^{-\theta^2N/8},
\qquad
Z_\theta:=\nu_+(\cX_+).
\end{gathered}
\]
The components have disjoint supports, and $p$ has full support on $\cX$. Reflection gives
\[
p^-(x)=p^+(-x),
\qquad
\nu_-(\cX_-)=Z_\theta,
\]
while the mode probabilities are $p(\cX_+)=w$ and $p(\cX_-)=1-w$.

\begin{lemma}[Exact visible log-likelihood ratio]
\label{lem:exact-llr}
For every nonempty $V\subseteq[N]$ and every $x_V\in\cA^V$,
\[
\log\frac{p_V^+(x_V)}{p_V^-(x_V)}
=
L_0S_V(x_V)+T_V(x_V),
\]
where
\[
T_V(x_V):=
\log
\frac{
\Pp_{\nu_+}(\cX_+\mid X_V=x_V)
}{
\Pp_{\nu_+}(\cX_+\mid X_V=-x_V)
}.
\]
Consequently,
\[
\logit\beta_V(x_V)
=
\logit(w)+L_0S_V(x_V)+T_V(x_V).
\]
The identities are interpreted in the extended real sense when a mode marginal vanishes.
\end{lemma}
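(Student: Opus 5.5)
The plan is a direct computation: write each mode-conditional visible marginal as a product term times a conditioning probability, then use reflection symmetry to cancel the normalizers. Fix a nonempty $V$ and $x_V\in\cA^V$.

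By definition $p^+=\nu_+(\cdot\mid\cX_+)$. Summing over completions therefore gives
\[
p_V^+(x_V)=\frac{\nu_{+,V}(x_V)\,\Pp_{\nu_+}(\cX_+\mid X_V=x_V)}{Z_\theta}.
\]
For the minus mode I use the reflection $p^-(x)=p^+(-x)$, which yields
\[
p_V^-(x_V)=p_V^+(-x_V)=\frac{\nu_{+,V}(-x_V)\,\Pp_{\nu_+}(\cX_+\mid X_V=-x_V)}{Z_\theta}.
\]
This step should be checked once. The map $x\mapsto -x$ sends $\nu_-$ to $\nu_+$ and $\cX_-$ to $\cX_+$, so $\nu_-(\cdot\mid\cX_-)$ pushed through negation is exactly $\nu_+(\cdot\mid\cX_+)$. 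In particular the normalizers coincide, $\nu_-(\cX_-)=Z_\theta$.

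Taking the ratio, $Z_\theta$ cancels. The remaining product factor is
\[
\frac{\nu_{+,V}(x_V)}{\nu_{+,V}(-x_V)}=\prod_{i\in V}\Bigl(\frac{1+\theta}{1-\theta}\Bigr)^{x_i}.
\]
This holds because each coordinate with $x_i=+1$ contributes $\frac{1+\theta}{2}\big/\frac{1-\theta}{2}$ and each coordinate with $x_i=-1$ contributes the reciprocal. The logarithm of this product is $L_0S_V(x_V)$. The conditioning factors give exactly $T_V(x_V)$, which proves the first identity.

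For the posterior statement I apply Bayes' rule in odds form. With $\beta_V=w p_V^+/(w p_V^++(1-w)p_V^-)$, we get $\logit\beta_V=\logit(w)+\log(p_V^+/p_V^-)$, and I substitute the first identity. This is valid whenever $x_V\in\supp(p_V)$; in fact $p$ has full support, so every $x_V$ qualifies.

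The only delicate point is the extended-real case, when one of the conditional probabilities $\Pp_{\nu_+}(\cX_+\mid X_V=\pm x_V)$ vanishes. This happens when $|V|$ is large and $x_V$ forces the sign of $m$. Since $\nu_{+,V}(\pm x_V)>0$ always, $p_V^\pm(x_V)=0$ exactly when the corresponding conditional probability is $0$. Both sides then equal $\pm\infty$ with matching sign, using the conventions $\log(a/0)=+\infty$ and $\logit(1)=+\infty$. Both probabilities cannot vanish at once, because $p_V(x_V)>0$. I expect no real obstacle: the computation is mechanical once the reflection symmetry is used to identify $p^-_V$ with $p^+_V(-\,\cdot)$.
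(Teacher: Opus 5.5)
Your proposal is correct and follows essentially the same route as the paper's proof. Like the paper, you factor each mode marginal as $\nu_{+,V}(\pm x_V)\,\Pp_{\nu_+}(\cX_+\mid X_V=\pm x_V)/Z_\theta$ via reflection, cancel the normalizers, evaluate the product ratio as $e^{L_0 S_V(x_V)}$, and finish with Bayes' rule in odds form. Your treatment of the vanishing-marginal case, using $\nu_{+,V}(\pm x_V)>0$ and the full support of $p$, is somewhat more explicit than the paper's, which simply invokes the extended-real convention.
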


\begin{proof}
Marginalizing the conditioned distribution gives
\[
p_V^+(x_V)
=
\frac{\nu_{+,V}(x_V)}{Z_\theta}
\Pp_{\nu_+}(\cX_+\mid X_V=x_V).
\]
By reflection,
\[
p_V^-(x_V)
=
\frac{\nu_{+,V}(-x_V)}{Z_\theta}
\Pp_{\nu_+}(\cX_+\mid X_V=-x_V).
\]
The normalizers cancel, giving
\[
\frac{p_V^+(x_V)}{p_V^-(x_V)}
=
\frac{\nu_{+,V}(x_V)}{\nu_{+,V}(-x_V)}
\frac{
\Pp_{\nu_+}(\cX_+\mid X_V=x_V)
}{
\Pp_{\nu_+}(\cX_+\mid X_V=-x_V)
}.
\]
The logarithm of the second factor is precisely the conditioning term
\[
T_V(x_V)
=
\log
\frac{
\Pp_{\nu_+}(\cX_+\mid X_V=x_V)
}{
\Pp_{\nu_+}(\cX_+\mid X_V=-x_V)
}.
\]
For the first factor, independence and $x_i\in\{-1,+1\}$ give
\[
\begin{aligned}
\frac{\nu_{+,V}(x_V)}{\nu_{+,V}(-x_V)}
&=
\prod_{i\in V}
\frac{\nu_+(X_i=x_i)}{\nu_+(X_i=-x_i)}
=
\prod_{i\in V}
\left(\frac{1+\theta}{1-\theta}\right)^{x_i}
\\
&=
\exp\!\left(
\log\frac{1+\theta}{1-\theta}
\sum_{i\in V}x_i
\right)\\
&=
e^{L_0S_V(x_V)}.
\end{aligned}
\]
Taking logarithms of the factorization therefore yields
\[
\log\frac{p_V^+(x_V)}{p_V^-(x_V)}
=
L_0S_V(x_V)+T_V(x_V).
\]

To obtain the posterior-logit identity, recall that
$\beta_V(x_V)=p(\cX_+\mid X_V=x_V)$.
Since the mixture components are supported on their respective mode regions,
Bayes' rule gives
\[
\beta_V(x_V)
=
\frac{w\,p_V^+(x_V)}
{w\,p_V^+(x_V)+(1-w)p_V^-(x_V)},
\qquad
1-\beta_V(x_V)
=
\frac{(1-w)p_V^-(x_V)}
{w\,p_V^+(x_V)+(1-w)p_V^-(x_V)}.
\]
Dividing these expressions and taking logarithms,
\[
\begin{aligned}
\logit\beta_V(x_V)
&=
\log\frac{\beta_V(x_V)}{1-\beta_V(x_V)}
=
\log\frac{w}{1-w}
+
\log\frac{p_V^+(x_V)}{p_V^-(x_V)}
\\
&=
\logit(w)+L_0S_V(x_V)+T_V(x_V).
\end{aligned}
\]
When either mode marginal vanishes, the corresponding identities hold under the stated extended-real convention.
\end{proof}

The term $T_V(x_V)$ accounts for conditioning $\nu_\tau$ on $\cX_\tau$ when constructing $p^\tau$. For mode pinning, we use its sign on contexts whose magnetization favors the corresponding mode. For low-visibility uncertainty, we bound its magnitude uniformly over small visible sets.

\begin{lemma}[Bounds on the conditioning term]
\label{lem:T-control}
In the setting above, for every nonempty $V\subseteq[N]$, the following hold:
\begin{enumerate}
\item[(i)] If $S_V(x_V)\ge0$, then $T_V(x_V)\ge0$. Moreover,
\[
T_V(-x_V)=-T_V(x_V).
\]
\item[(ii)] If $|V|\le\alpha N$ and $\eta_N\le1$, then
\[
|T_V(x_V)|\le\eta_N \qquad\text{for every}\quad x_V\in\cA^V.
\]
\end{enumerate}
\end{lemma}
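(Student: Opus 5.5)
Both parts reduce to the single function $h(k):=\Pp_{\nu_+}(\cX_+\mid S_V=k)$, because under the product law $\nu_+$ the event $\cX_+$ depends on $x_V$ only through $S_V(x_V)$. Writing $R:=\sum_{i\notin V}X_i$, which is independent of $X_V$ under $\nu_+$, we have
\[
h(k)=\Pp_{\nu_+}(R>-k),
\qquad
T_V(x_V)=\log h(S_V(x_V))-\log h(-S_V(x_V)).
\]
Here $k$ ranges over integers of the parity of $|V|$, and $R$ has the parity of $N-|V|$. Since $N$ is odd, $k+R$ is odd and never zero, so no ties occur.

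For part (i), the antisymmetry $T_V(-x_V)=-T_V(x_V)$ is immediate from the display above, since swapping $x_V\mapsto -x_V$ exchanges the two logarithms. For the sign claim, $h$ is nondecreasing in $k$, because $\{R>-k\}$ grows with $k$. Hence $S_V\ge0$ gives $h(S_V)\ge h(-S_V)$ and so $T_V\ge0$. We must also check that $h(-S_V)>0$, so that $T_V$ is finite or at worst $+\infty$ in the extended sense. This holds because $R$ can reach $N-|V|$ and $|S_V|\le|V|$, so we need $N-|V|>|V|$, and otherwise the extended convention applies. In fact positivity is immediate whenever $|V|<N/2$, and for larger $V$ the extended-real reading covers it.

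For part (ii), the plan is to show $h(k)\in[1-\eta_N/2,\,1]$ for all $|k|\le|V|\le\alpha N$. Under $\nu_+$, $R$ is a sum of $N-|V|$ i.i.d.\ $\pm1$ variables with mean $\theta$, so $\E R=\theta(N-|V|)$. By Hoeffding's inequality,
\[
1-h(k)=\Pp(R\le -k)\le \Pp\bigl(R-\E R\le -(\theta(N-|V|)-|V|)\bigr)\le \exp\!\Bigl(-\frac{(\theta(N-|V|)-|V|)^2}{2(N-|V|)}\Bigr).
\]
With $|V|\le\alpha N$ and $\alpha=\theta/(2(1+\theta))$, we get $\theta(N-|V|)-|V|\ge \theta N-(1+\theta)\alpha N=\theta N/2$. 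Using $N-|V|\le N$, the exponent is at least $\theta^2N/8$, so $1-h(k)\le e^{-\theta^2N/8}=\eta_N/2$.

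It then follows that
\[
|T_V|\le -\log\bigl(1-\eta_N/2\bigr).
\]
Since $\eta_N\le1$, we have $x:=\eta_N/2\le1/2$, and $-\log(1-x)\le 2x\cdot\log 2\le 2x=\eta_N$ on $[0,1/2]$ by convexity. This gives $|T_V|\le\eta_N$.

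The main obstacle is bookkeeping the Hoeffding constant: the range of each increment is $2$, which produces the factor $2(N-|V|)$ in the denominator. This is exactly where the choice of $\alpha$ and the rate $\theta^2/8$ must match, so I would verify that matching first.
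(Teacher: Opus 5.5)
Your proof is correct and follows the paper's argument essentially step for step. Both proofs reduce $\Pp_{\nu_+}(\cX_+\mid X_V=x_V)$ to $\Pp(R>-S_V(x_V))$, with $R=\sum_{i\notin V}X_i$ independent of $X_V$, and both use monotonicity in the threshold for the sign claim and the swap of numerator and denominator for antisymmetry. Both then apply the same Hoeffding bound via $\theta(N-|V|)-|V|\ge\theta N/2$ and finish with $-\log(1-u)\le 2u$ on $[0,1/2]$. Your remarks on parity and on positivity of the denominator are harmless additions that the paper leaves to its extended-real convention.
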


\begin{proof}
Under $\nu_+$, let
\[
M:=\sum_{i\notin V}X_i.
\]
This sum is independent of $X_V$ and has mean $\theta(N-|V|)$. Since $\cX_+=\{S_V+M>0\}$,
\[
\Pp_{\nu_+}(\cX_+\mid X_V=x_V)
=
\Pp(M>-S_V(x_V)).
\]

\emph{Part (i).}
If $S_V(x_V)\ge0$, then
\[
\Pp(M>-S_V(x_V))
\ge
\Pp(M>S_V(x_V)),
\]
which gives $T_V(x_V)\ge0$. Replacing $x_V$ by $-x_V$ interchanges the numerator and denominator in the definition of $T_V$, proving the reflection identity.

\emph{Part (ii).}
For $|V|\le\alpha N$, the distance from the mean of $M$ to the threshold $-S_V(x_V)$ satisfies
\[
\theta(N-|V|)+S_V(x_V)
\ge
\theta N-(1+\theta)|V|
\ge
\frac{\theta N}{2}.
\]
Hoeffding's inequality therefore gives
\[
\Pp(M\le-S_V(x_V))
\le
\exp\!\left(
-\frac{(\theta N/2)^2}{2(N-|V|)}
\right)
\le
e^{-\theta^2N/8}.
\]
The same bound holds with $x_V$ replaced by $-x_V$. Both probabilities in the ratio defining $T_V$ thus lie in
$[1-e^{-\theta^2N/8},1]$. Using
$-\log(1-u)\le2u$ for $0\le u\le1/2$ yields
$|T_V(x_V)|\le\eta_N$.
\end{proof}

\begin{proposition}[Compatibility of mode pinning and low-visibility uncertainty]
\label{prop:product-instantiation}
Let $p$ be the distribution constructed above and set
\[
\begin{gathered}
c_0:=\min\{w,1-w\},
\qquad
\kappa:=\frac{L_0\theta}{4},
\qquad
c_1:=\frac{\theta^2}{16},
\qquad
u_0:=w(1-w),\\
s_{\mathrm{pin}}
:=
\left\lceil
\max\left\{
\frac{16\ln2}{\theta^2},
\frac{4|\logit(w)|}{L_0\theta}
\right\}
\right\rceil,
\qquad
L:=L_0+1,
\qquad
s_{\mathrm{ov}}:=\lfloor\alpha N\rfloor.
\end{gathered}
\]
There exists $N_0=N_0(\theta,w)$ 
such that, for every odd $N\ge N_0$:
\begin{enumerate}
\item[(i)] \Cref{ass:mode-pinning} holds with constants $c_0,\kappa,c_1,s_{\mathrm{pin}}$.

\item[(ii)] \Cref{ass:low-vis-uncertainty} holds with $u_0=w(1-w)$, rate $L$, and scale $s_{\mathrm{ov}}$.

\item[(iii)] The combined rate is $c=\min\{c_1,\kappa\}=\theta^2/16<L_0<L$, and the visibility ranges overlap: $s_{\mathrm{pin}}\le s_{\mathrm{ov}}$.
\end{enumerate}
\end{proposition}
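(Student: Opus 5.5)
Everything will run through the exact posterior logit from \Cref{lem:exact-llr},
\[
\logit\beta_V(x_V)=\logit(w)+L_0S_V(x_V)+T_V(x_V),
\]
together with the two properties of the conditioning term in \Cref{lem:T-control}. The order is (iii), then (ii), then (i), since (i) is the most delicate.

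\emph{Part (iii).} The combined rate reduces to the elementary inequality $\theta^2/16\le L_0\theta/4$. This holds because $L_0=\log\frac{1+\theta}{1-\theta}\ge 2\theta$. The same bound gives $\theta^2/16<L_0<L_0+1$. Since $s_{\mathrm{pin}}$ depends only on $(\theta,w)$ and $s_{\mathrm{ov}}=\lfloor\alpha N\rfloor$ grows linearly, the overlap $s_{\mathrm{pin}}\le s_{\mathrm{ov}}$ holds once $N\ge N_0$.

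\emph{Part (ii).} Take $N_0$ large enough that $\eta_N\le1$. For $1\le|V|\le\alpha N$, \Cref{lem:T-control}(ii) gives
\[
\Bigl|\log\tfrac{p_V^+}{p_V^-}\Bigr|\le L_0|V|+\eta_N\le(L_0+1)|V|.
\]
This is exactly hypothesis \Cref{eq:bounded-llr} with constant $L$. \Cref{lem:low-vis-sufficient} then yields $\mmse_p(Z\mid X_V)\ge w(1-w)e^{-L|V|}$. The case $V=\varnothing$ is trivial, since the MMSE equals $w(1-w)$.

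\emph{Part (i).} The weight bound $p(\cX_\tau)\ge c_0$ is immediate. By reflection symmetry it suffices to treat $\tau=+$. Restrict to contexts with $S_V(x_V)\ge\theta|V|/2$. On this set $T_V\ge0$ by \Cref{lem:T-control}(i), so
\[
\logit\beta_V\ge\logit(w)+L_0\theta|V|/2.
\]
Because $|V|\ge s_{\mathrm{pin}}\ge 4|\logit w|/(L_0\theta)$, this is at least $L_0\theta|V|/4=\kappa|V|$. Hence $1-\beta_V\le e^{-\kappa|V|}$, and these contexts lie in $\cE^+_V$.

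It remains to show $p^+_V(S_V<\theta|V|/2)\le e^{-c_1|V|}$. Here I write $p^+_V\le\nu_{+,V}/Z_\theta$ and apply Hoeffding under $\nu_+$. The deviation is $\theta|V|/2$, giving the bound $e^{-\theta^2|V|/8}/Z_\theta$. The factor $Z_\theta\ge 1-e^{-\theta^2N/2}\ge1/2$ for large $N$. The stray factor $2$ is absorbed by halving the rate: $2e^{-\theta^2|V|/8}\le e^{-\theta^2|V|/16}$ whenever $|V|\ge16\ln2/\theta^2$, which is exactly the first term defining $s_{\mathrm{pin}}$.

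The main obstacle is this last step, where the conditioning on $\cX_+$ distorts the visible marginal. The sign property of $T_V$ and the crude $1/Z_\theta$ comparison should suffice. I would double-check that the typical-set condition only needs $x_V\in\supp(p^+_V)$, which holds by full support.
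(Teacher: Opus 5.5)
Your proof is correct and follows the paper's argument essentially step for step: the exact log-likelihood ratio from \Cref{lem:exact-llr}, the sign of $T_V$ to pin the posterior on $\{\tau m_V\ge\theta/2\}$, the comparison $p^\tau_V\le\nu_{\tau,V}/Z_\theta$ combined with Hoeffding (with the threshold $16\ln2/\theta^2$ absorbing the factor $2$), the bound $|T_V|\le\eta_N$ fed into \Cref{lem:low-vis-sufficient}, and $L_0\ge2\theta$ for the rates. Two cosmetic points. First, $p$ is not reflection-invariant when $w\ne1/2$, so the paper treats both modes at once via $\tau\logit\beta_V\ge-|\logit(w)|+L_0\theta|V|/2$; your reduction to $\tau=+$ is valid for the same reason, namely that only $|\logit(w)|$ enters. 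Second, contexts with $S_V\ge\theta|V|/2$ lie in $\supp(p^+_V)$ because completing them with $+1$'s lands in $\cX_+$, not because $p^+_V$ has full support; it does not have full support once $|V|$ is large.
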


\begin{intuitionbox}
\textbf{Proof intuition.}
With sufficiently many visible coordinates, magnetization concentrates in the direction favored by the component. The resulting likelihood evidence overcomes the prior log-odds and pins the mode posterior. For small visible sets, the conditioning correction is uniformly small, so the log-likelihood ratio is bounded in magnitude by a constant times the visible count. This limits how quickly posterior uncertainty can decrease and gives the MMSE lower bound.
\end{intuitionbox}

\begin{proof}
Choose $N_0=N_0(\theta,w)$ sufficiently large that $\eta_N\le1$ and $\alpha N\ge s_{\mathrm{pin}}$ for every $N\ge N_0$.

\emph{Part (i): mode pinning.}
For $|V|\ge s_{\mathrm{pin}}$ and $\tau\in\{+,-\}$, define
\[
G_V^\tau
:=
\left\{
x_V\in\operatorname{supp}(p_V^\tau):
\tau\,m_V(x_V)\ge\theta/2
\right\}.
\]

For $x_V\in G_V^\tau$, \Cref{lem:T-control}(i) gives $\tau T_V(x_V)\ge0$. Hence
\[
\begin{aligned}
\tau\logit\beta_V(x_V)
&=
\tau\logit(w)
+
L_0\tau S_V(x_V)
+
\tau T_V(x_V)
\\
&\ge
-|\logit(w)|+\frac{L_0\theta}{2}|V|
\\
&\ge
\frac{L_0\theta}{4}|V|
=
\kappa|V|.
\end{aligned}
\]
Here, the last inequality uses $|V|\ge s_{\mathrm{pin}}\ge4|\logit(w)|/(L_0\theta)$. Therefore
\[
p(\cX_{-\tau}\mid X_V=x_V)
=
\frac{1}{1+\exp\!\bigl(\tau\logit\beta_V(x_V)\bigr)}
\le
e^{-\kappa|V|},
\]
so $G_V^\tau\subseteq\cE_V^\tau$ for both modes. For every visible event $A$,
\[
p_V^\tau(A)\le\frac{\nu_{\tau,V}(A)}{Z_\theta}.
\]
Under $\nu_{\tau,V}$, the variables $\tau X_i$ are independent with mean $\theta$, so
\[
\nu_{\tau,V}(\tau m_V<\theta/2)
\le
e^{-\theta^2|V|/8}.
\]
Also,
\[
Z_\theta
=
\Pp_{\nu_+}\!\left(\sum_iX_i>0\right)
\ge
1-e^{-\theta^2N/2}
\ge
\tfrac12.
\]
It follows that
\[
p_V^\tau\bigl((G_V^\tau)^\complement\bigr)
\le
2e^{-\theta^2|V|/8}
\le
e^{-\theta^2|V|/16}
=
e^{-c_1|V|},
\]
where the second inequality uses $|V|\ge16\ln2/\theta^2$. Thus $p_V^\tau(\cE_V^\tau)\ge1-e^{-c_1|V|}$. Since $p(\cX_\tau)\ge c_0$ for both modes, \Cref{ass:mode-pinning} holds.

\emph{Part (ii): low-visibility uncertainty.}
For $V=\varnothing$,
\[
\mmse_p(Z\mid X_V)
=
\operatorname{Var}_p(Z)
=
w(1-w).
\]
For $1\le|V|\le s_{\mathrm{ov}}$, \Cref{lem:exact-llr,lem:T-control}(ii) give
\begin{equation}
\left|
\log\frac{p_V^+(x_V)}{p_V^-(x_V)}
\right|
\le
L_0|V|+\eta_N
\le
(L_0+1)|V|
=
L|V|,
\label{eq:rate_L}
\end{equation}
where $\eta_N\le1$ for $N\ge N_0$. Applying \Cref{lem:low-vis-sufficient} yields
\[
\mmse_p(Z\mid X_V)
\ge
w(1-w)e^{-L|V|}.
\]

\emph{Part (iii): rates and overlap.}
Since $L_0=2\operatorname{artanh}(\theta)\ge2\theta$, we have $\kappa\ge\theta^2/2>c_1$. Therefore
\[
c=c_1=\frac{\theta^2}{16}<L_0<L.
\]
The choice of $N_0$ ensures $\alpha N\ge s_{\mathrm{pin}}$. Since $s_{\mathrm{pin}}$ is an integer, this gives
\[
s_{\mathrm{pin}}
\le
\lfloor\alpha N\rfloor
=
s_{\mathrm{ov}}.
\]
\end{proof}

\begin{remark}[Compatible bounds and a reference decay rate]
\label{rem:bracketing}
On the overlap window, the two assumptions give
\[
w(1-w)e^{-L|V|}
\le
\mmse_p(Z\mid X_V)
\le
\frac54e^{-c|V|},
\qquad
s_{\mathrm{pin}}\le|V|\le s_{\mathrm{ov}},
\]
with $c<L$. The mixing weight enters the lower-bound prefactor and the pinning threshold, while the rates $c$ and $L$ depend only on $\theta$. For comparison, nearly balanced visible configurations under the unconditioned product distributions have the large-deviation rate
\[
r^\star(\theta)
:=
\kl\!\left(
\tfrac12\,\middle\|\,\tfrac{1+\theta}{2}
\right)
=
-\tfrac12\log(1-\theta^2).
\]
This gives the reference rate used in the experiments. It satisfies $c=\frac{\theta^2}{16}<r^\star(\theta)<L_0<L$.
\end{remark}

\section{Experimental Protocols and Additional Results}
\label{app:experiments}

This section provides the experimental protocols and supporting analyses for \Cref{sec:experiments}, with settings summarized in \Cref{tab:exp-protocol}. We cover exact population geometry, low-visibility interventions, direct optimization within the coherent family $q_\lambda$, and the associated sampling cost. An exploratory natural-text study examines how residual mode uncertainty varies with context visibility.

\begin{table}[t]
    \centering
    \small
    \renewcommand{\arraystretch}{1.08}
    \setlength{\tabcolsep}{5pt}
    \caption{Experimental settings and evaluation protocols.}
    \label{tab:exp-protocol}
    \begin{tabularx}{\linewidth}{
        @{}
        >{\raggedright\arraybackslash}p{0.25\linewidth}
        >{\raggedright\arraybackslash}X
        @{}
    }
        \toprule
        Component & \textbf{Settings} \\
        \midrule

        Exact population geometry
        &
        Conditioned-product mixture: $\theta\in\{0.6,0.8\}$.\par 
        Curie--Weiss model: $\beta\in\{1.5,0.5\}$.\par
        Sequence lengths $N\in\{127,255,511,1023\}$ and mode weights $w\in\{1/2,9/10\}$.\par
        Visible sizes $m=|K^\complement|$ include $\{0,1,2,4\}$, dyadic sizes, and $\lfloor\varrho N\rfloor$ for $\varrho\in\{1/4,1/2,3/4\}$.\par
        Exact summation over visible magnetization.
        \\
        \midrule

        \addlinespace[0.2em]
        Schedule intervention
        &
        Two-point visible-size distribution $(1-\pi)\delta_{m_{\mathrm{base}}}+\pi\delta_s$, with $m_{\mathrm{base}}\in\{\lfloor N/4\rfloor,\lfloor N/2\rfloor\}$, $s\in\{0,1,2,4\}$, and $\pi\in\{0,10^{-5},10^{-4},10^{-3},10^{-2},10^{-1}\}$.\par
        Evaluation at four excess-risk budgets.
        \\
        \midrule

        \addlinespace[0.2em]
        Optimization within $q_\lambda$
        &
        Conditioned-product mixture with $\theta=0.8$ and $w=1/2$.\par
        Calibration over $N\in\{31,63,127\}$ selects $N=63$ and $m_{\mathrm{base}}=31$.\par
        Learning rate $\eta=0.1$ and a budget of $10^5$ steps.\par
        Stochastic runs use batch size $512$ and five seeds.
        \\
        \midrule

        \addlinespace[0.2em]
        Sampling cost
        &
        Conditioned-product mixture with $N=1023$, $m_{\mathrm{base}}=511$, and $\lambda=3/4$.\par
        Exact first and second moments of the per-example excess loss.
        \\
        \midrule
        
        \addlinespace[0.2em]
        Natural-text exploration
        &
        Code versus prose (\texttt{github-code-clean} Python~\citep{codeparrot_githubcodeclean} against C4 \texttt{en}~\citep{raffel2020exploring}) and German versus English (OPUS-100 \texttt{de-en}~\citep{tiedemann-2012-parallel,zhang-etal-2020-improving}).\par
        Document-disjoint splits for fitting, calibration, and evaluation; GPT-2 tokenization~\citep{radford2019language}.\par
        A short-context posterior estimator and calibrated long-context classifiers.
        Uncertainty estimated using $1{,}000$ document-level bootstrap replicates.
        \\
        \bottomrule
    \end{tabularx}
\end{table}

\subsection{Controlled Distributions, Estimands, and Computation}
\label{app:exp-protocol}

\paragraph{Conditioned-product mixture.}
For $\tau\in\{-1,+1\}$, let $\nu_\tau$ be the product distribution on $\{-1,+1\}^N$ satisfying $\nu_\tau(X_i=\tau)=(1+\theta)/2$, and define
\[
\cX_\tau=\big\{x:\tau\sum\nolimits_i x_i>0\big\},
\qquad
p^\tau=\nu_\tau(\,\cdot\mid\cX_\tau).
\]
The data distribution is $p=w p^+ +(1-w)p^-$, with $w\in(0,1)$ and component label $Z\in\{+,-\}$. Setting $c_0:=\min\{w,1-w\}$, we vary only the global mode weight over a compact interval:
\[
q_\lambda=\lambda p^+ +(1-\lambda)p^-,
\qquad
\lambda\in I:=
\left[{c_0}/{2},\,1-{c_0}/{2}\right].
\]
The within-mode distributions are identical under $p$ and $q_\lambda$, and, for each $\lambda$, every masked conditional is induced by the same joint distribution $q_\lambda$. Because the components have disjoint sign-of-magnetization supports, $\DTV(p,q_\lambda)=|w-\lambda|$. This separates an interpretable joint-distribution error from the visibility-dependent signal seen by the masked objective. 

The main experiments use $\theta=0.8$ and $w=1/2$, giving $I=[1/4,3/4]$. Additional robustness evaluations cover $\theta\in\{0.6,0.8\}$ and $w\in\{1/2,0.9\}$; for $w=0.9$, the corresponding interval is $I=[0.05,0.95]$. Direct optimization uses the main setting $(\theta,w)=(0.8,1/2)$.

\paragraph{Curie--Weiss model.}
We use the zero-field Curie--Weiss model \citep{ellis1978statistics},
\[
p_{\beta,N}(x)
\propto
\exp\!\left(
\frac{\beta}{2N}\left(\sum_{i=1}^N x_i\right)^2
\right),
\qquad x\in\{-1,+1\}^N,
\]
where $\beta$ is the inverse temperature. We include this model to distinguish the effect of macroscopic mode structure from artifacts of the conditionally independent construction. At $\beta=1.5$, the spins interact and the magnetization has two macroscopic modes, providing an interacting test of whether mode blindness persists beyond the conditioned-product mixture we constructed. At $\beta=0.5$, the model is in the high-temperature unimodal regime and serves as a null control without comparable large-context mode pinning. Both settings use the same sign-of-magnetization partition and mode-reweighting construction defined above. Note that the closed-form reference rate of \Cref{rem:bracketing} applies only to the conditioned-product mixture; any Curie--Weiss reference obtained by substituting its spontaneous magnetization is used only as a heuristic visual guide.

\paragraph{Estimands.}
For the MMSE, we encode the positive and negative mode labels as $1$ and $0$, respectively. Since $p$ and $q_\lambda$ are exchangeable, the quantities below depend on the visible set $V=K^\complement$ only through its size $m=|V|$. We define
\begin{equation}
\mathfrak D_{N,m}(\lambda)
:=\mathfrak D_K(p\Vert q_\lambda),
\qquad
\Delta_{N,m}
:=\sup\nolimits_{\lambda\in I}\mathfrak D_{N,m}(\lambda),
\qquad
U_{N,m}:=\mmse_p(Z\mid X_V).
\label{eq:exp-estimands}
\end{equation}
Here $\Delta_{N,m}$ is the largest masked discrepancy over the allowed mode weights, and $U_{N,m}$ is the residual uncertainty about the mode after observing $m$ coordinates. The maximum defining $\Delta_{N,m}$ is attained at an endpoint of $I$: the discrepancy is nonincreasing for $\lambda\le w$ and nondecreasing for $\lambda\ge w$. For $U_{N,m}>0$ and $\lambda\ne w$, we also define
\begin{equation}
\rho_{N,m}(\lambda)
:=
\frac{\mathfrak D_{N,m}(\lambda)}
{U_{N,m}\kl(w\Vert\lambda)}.
\label{eq:exp-ratio}
\end{equation}
This ratio compares the discrepancy with the sensitivity scale $U_{N,m}\kl(w\Vert\lambda)$ in \Cref{lem:sensitivity}.

For both controlled model families, the mode posteriors depend on the visible configuration only via its magnetization $r=2j-m$, where $j$ is the number of visible $+1$ spins. Write $\beta_V=p(Z=1\mid X_V)$ and $\beta_{\lambda,V}=q_\lambda(Z=1\mid X_V)$. Conditional on $X_V$, the positive- and negative-mode distributions of the masked block have disjoint supports and are identical under $p$ and $q_\lambda$; only their posterior mixing weights differ. Consequently, the conditional block KL divergence is exactly $\kl(\beta_V\Vert\beta_{\lambda,V})$. Each expectation therefore reduces to a sum over the $m+1$ possible visible magnetizations, weighted by their probabilities under $p$. We evaluate these sums using log-domain arithmetic.

\paragraph{Scope and validation.}
The controlled experiments below study sensitivity and recovery within the mode-reweighting family $q_\lambda$. To check robustness beyond the symmetric setting, we also evaluate the conditioned-product mixture at $\theta=0.8$ and $w=0.9$. The discrepancy again flattens as visibility increases, and the normalized sensitivity ratio lies in $[6.550,13.825]$, which includes the local Fisher coefficient $1/(w(1-w))\approx11.1$. The log-domain implementation and a $50$-digit reference differ by at most $8.2\times10^{-14}$ in $\log\mathfrak D$ on representative cells. Exhaustive enumeration at small $N$ also checks the posterior tilt, risk identity, gradient, curvature, schedule linearity, and full-mask endpoint. Further details of these checks are included in the reproducibility package.

\subsection{Exact Computation of Mode-Blindness Geometry}
\label{app:exp-e1}

\paragraph{Discrepancy profiles.}
\Cref{fig:exp-profiles-pinned} shows the conditioned-product profiles at $\theta=0.8$, $w=1/2$, and $N=1023$. With little visible context, uncertainty about $Z$ makes the masked conditional sensitive to the mixture weight. As visibility increases, the context identifies the mode more reliably, so changing its prior weight has less effect on conditional prediction. The discrepancy profiles consequently flatten across the reweighting interval.

The Curie--Weiss control at $\beta=0.5$ uses the same sign-of-magnetization partition and reweighting construction (\Cref{fig:exp-profiles-null}). Its magnetization distribution is unimodal, so the sign partition does not correspond to two separated macroscopic modes. Its profiles retain appreciable curvature and show much weaker suppression over the displayed visibility range.

For both constructions, the disjoint component supports give $\DTV(p,q_\lambda)=|w-\lambda|$, independently of visibility (\Cref{fig:exp-profiles-tv}). Increasing visibility can therefore suppress the masked discrepancy while leaving the joint-distribution error unchanged.

\begin{figure}[tbp]
    \centering
    \begin{subfigure}[t]{0.32\linewidth}
        \centering
        \includegraphics[width=\linewidth]{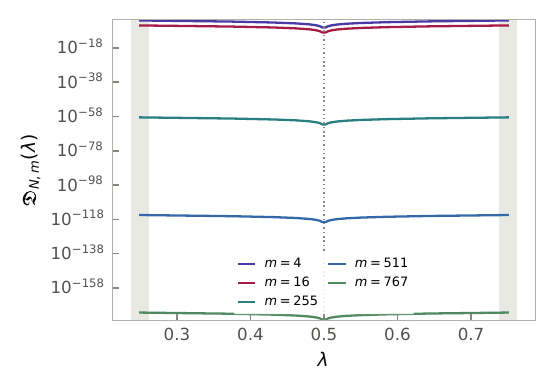}
        \subcaption{Conditioned-product mixture.}
        \label{fig:exp-profiles-pinned}
    \end{subfigure}\hfill
    \begin{subfigure}[t]{0.32\linewidth}
        \centering
        \includegraphics[width=\linewidth]{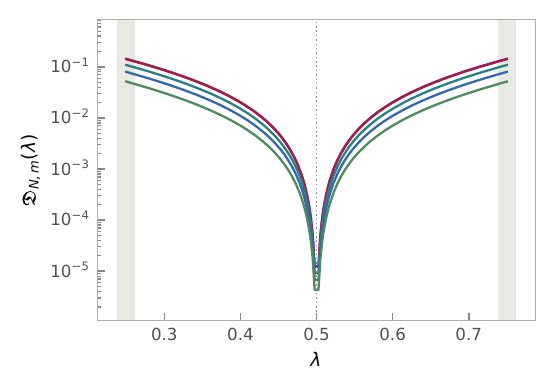}
        \subcaption{Curie--Weiss null control.}
        \label{fig:exp-profiles-null}
    \end{subfigure}\hfill
    \begin{subfigure}[t]{0.32\linewidth}
        \centering
        \includegraphics[width=\linewidth]{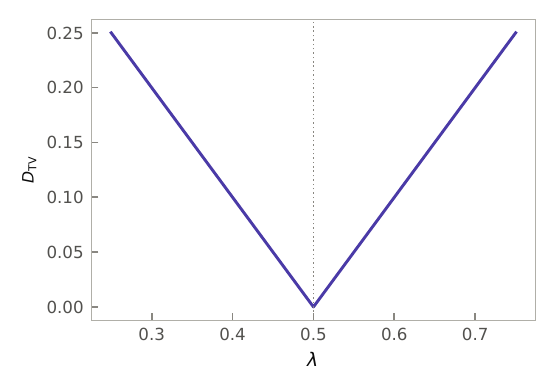}
        \subcaption{Joint total-variation error.}
        \label{fig:exp-profiles-tv}
    \end{subfigure}
    \caption{Mode-reweighting geometry at $N=1023$ and $w=1/2$. (a--b) Discrepancy profiles $\lambda\mapsto\mathfrak D_{N,m}(\lambda)$ at different visible sizes $m$ for the conditioned-product mixture ($\theta=0.8$) and Curie--Weiss control ($\beta=0.5$). Shaded strips mark the endpoints of $I=[1/4,3/4]$. (c) Joint TV error, common to both constructions and all visible sizes.}
    \label{fig:exp-profiles}
\end{figure}

\paragraph{Decay rates.}
At $\theta=0.8$ and $N=1023$, the maximal discrepancy $\Delta_{N,m}$ decreases from $1.44\times10^{-1}$ at $m=0$ to $4.33\times10^{-173}$ at $m=767$, a reduction of $171.52$ orders of magnitude. Across the evaluated system sizes, the fitted conditioned-product rates $\hat{c}_N$ approach the closed-form reference
\[
r^{\star}(\theta)
=
-\tfrac12\log(1-\theta^2)
\]
from above (\Cref{fig:exp-rates-measured}). At $N=1023$, the relative errors are $0.27\%$ for $\theta=0.8$ and $0.58\%$ for $\theta=0.6$. 

The unimodal control's fitted rate instead decreases with $N$. At each fixed visible fraction $\varrho\in\{1/4,1/2,3/4\}$, $\log\Delta_{N,\lfloor\varrho N\rfloor}$ decreases approximately linearly with $N$ (\Cref{fig:exp-rates-fixed-fraction}), consistent with the predicted exponential suppression in system size.

\begin{figure}[tbp]
    \centering
    \begin{subfigure}[t]{0.5\linewidth}
        \centering
        \includegraphics[width=\linewidth]{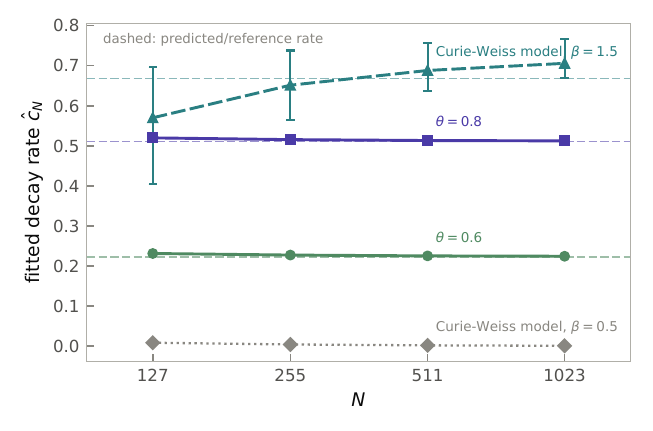}
        \subcaption{Fitted and reference decay rates.}
        \label{fig:exp-rates-measured}
    \end{subfigure}\hfill
    \begin{subfigure}[t]{0.4\linewidth}
        \centering
        \includegraphics[width=\linewidth]{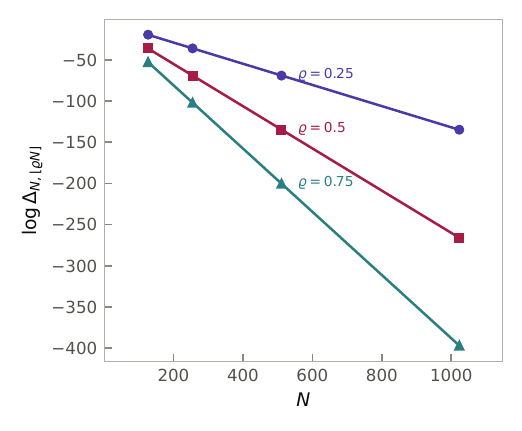}
        \subcaption{Decay at fixed visible fractions.}
        \label{fig:exp-rates-fixed-fraction}
    \end{subfigure}
    \caption{Decay rates across system sizes. (a) Fitted rates for the conditioned-product mixture and both Curie--Weiss settings. The conditioned-product reference is given in closed form; the interacting Curie--Weiss reference is a heuristic visual guide. (b) Log discrepancy $\log\Delta_{N,\lfloor\varrho N\rfloor}$ versus $N$ at fixed visible fractions $\varrho\in\{1/4,1/2,3/4\}$.}
    \label{fig:exp-rates}
\end{figure}

\paragraph{Residual mode uncertainty.}
For $w=1/2$, across the $98$ evaluated $(N,m)$ settings ($49$ for each value of $\theta$) and all retained $\lambda$-grid points with $|\lambda-w|\ge0.01$, the normalized ratios satisfy
\[
\rho_{N,m}(\lambda)\in[4.000,4.408].
\]
These values are close to the local Fisher coefficient $1/(w(1-w))=4$ in \Cref{eq:exact-sensitivity} and lie within the uniform bounds of \Cref{lem:sensitivity}. Thus, after normalization by $\kl(w\Vert\lambda)$, the masked discrepancy tracks residual mode uncertainty at the predicted scale.

\subsection{Low-Visibility Schedule Intervention}
\label{app:exp-e2}

\paragraph{Mask schedule and estimands.}
We fix the sequence length at $N=1023$. Let $\delta_m$ denote a schedule supported on masks leaving exactly $m$ coordinates visible. For $s<m_{\mathrm{base}}$, we evaluate
\begin{equation}
\mu
=
(1-\pi)\delta_{m_{\mathrm{base}}}+\pi\delta_s,
\qquad
\pi_s(\mu)=\pi.
\label{eq:exp-two-point}
\end{equation}
Here, since $s<m_{\mathrm{base}}$, the schedule's low-visibility mass is exactly $\pi_s(\mu)=\pi$. A fraction $\pi$ of examples therefore leave $s$ coordinates visible, while the remainder use the baseline visibility $m_{\mathrm{base}}$. We test $s\in\{0,1,2,4\}$, including both the full mask ($s=0$) and non-full masks.

We compute the curvature with respect to $\lambda$ and the admissible mode-weight recovery radius:
\begin{align}
\varkappa(\pi,s)
&:=
\left.
\partial_\lambda^2
\mathfrak D_\mu(p\Vert q_\lambda)
\right|_{\lambda=w}
=
\frac{
\E_{K\sim\mu}
\left[
\mmse_p(Z\mid X_{K^\complement})
\right]
}{
\bigl(w(1-w)\bigr)^2
},
\label{eq:exp-curvature}\\
r(\varepsilon)
&:=
\sup\Big\{
|w-\lambda|:
\lambda\in I,\ 
\mathfrak D_\mu(p\Vert q_\lambda)
\le \varepsilon
\Big\}.
\label{eq:exp-radius}
\end{align}
The discrepancy and curvature are exact weighted averages of their fixed-visibility values. We mark radii limited by the boundary of $I$ as right-censored. In the symmetric setting $w\!=\!1/2$, this means that the entire tested interval satisfies the risk tolerance; larger reweightings outside $I$ are not evaluated.

\begin{figure}[tbp]
    \centering
    \begin{subfigure}[t]{0.32\linewidth}
        \centering
        \includegraphics[width=\linewidth]{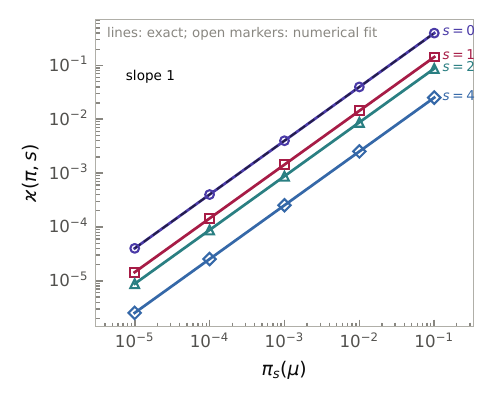}
        \subcaption{Curvature vs. $\pi_s(\mu)$.}
        \label{fig:exp-boost-curvature}
    \end{subfigure}\hfill
    \begin{subfigure}[t]{0.32\linewidth}
        \centering
        \includegraphics[width=\linewidth]{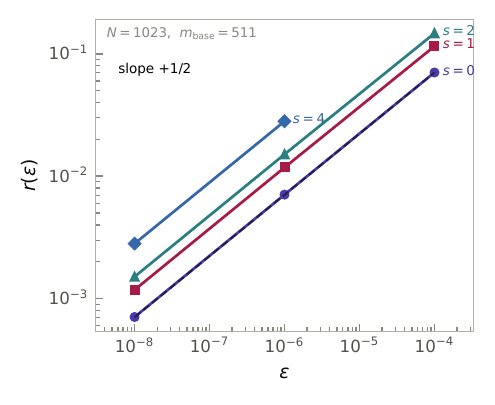}
        \subcaption{Recovery radius vs. tolerance.}
        \label{fig:exp-boost-radius}
    \end{subfigure}\hfill
    \begin{subfigure}[t]{0.32\linewidth}
        \centering
        \includegraphics[width=\linewidth]{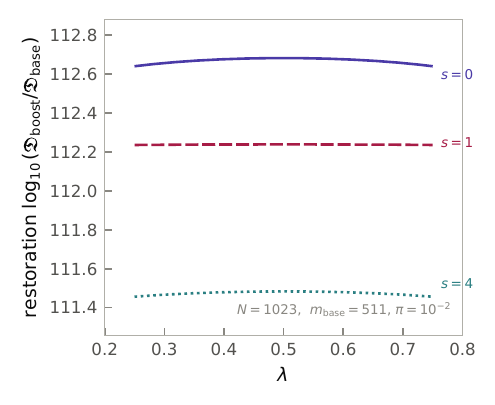}
        \subcaption{Discrepancy under intervention.}
        \label{fig:exp-boost-restoration}
    \end{subfigure}
    \caption{Low-visibility interventions at sequence length $N=1023$ and the basic number of visible coordinates $m_{\mathrm{base}}=511$, with $s\in\{0,1,2,4\}$. (a) Curvature versus intervention mass $\pi$; lines show exact values and open markers show numerical fits. (b) Recovery radius versus tolerance at $\pi=10^{-2}$. (c) Discrepancy profiles at $\pi=10^{-2}$ compared with the blind baseline.}
    \label{fig:exp-boost}
\end{figure}

\paragraph{Recovery scaling.}
When the low-visibility contribution dominates the baseline, the curvature has a fitted log--log slope of $1.0000$ against $\pi$ (\Cref{fig:exp-boost-curvature}), consistent with linear schedule averaging. For fixed $s$ and sufficiently small tolerances with the interval boundary inactive, the local quadratic approximation gives
\[
r(\varepsilon)
\simeq
\sqrt{\frac{2\varepsilon}{\varkappa(\pi,s)}}
\asymp
\sqrt{\frac{\varepsilon}{\pi}}.
\]
Both dependencies appear in the exact evaluations. At $\pi=10^{-2}$, the radius has a fitted log--log slope of $0.4995$ against $\varepsilon$ (\Cref{fig:exp-boost-radius}). At $\varepsilon=10^{-4}$, its slope against full-mask mass $\pi_0(\mu)$ is $-0.4894$ (\Cref{fig:exp-recovery}) and approaches $-1/2$ as the tolerance decreases.

\paragraph{Restored mode-weight sensitivity.}
At $\pi_0(\mu)=10^{-2}$, the full-mask intervention increases the nonzero reweighting discrepancy by approximately a factor of $4.8\times10^{112}$ relative to the blind baseline (\Cref{fig:exp-boost-restoration}). The $s=1$ and $s=4$ interventions also strengthen the discrepancy, showing that non-full masks can restore sensitivity to the global mode weight. Throughout these comparisons, the within-mode distributions remain fixed. The recovery radius measures control along $q_\lambda$; when $s=0$ and $\pi>0$, \Cref{prop:full-mask} additionally gives distribution-free control over all admissible joint distributions.

\subsection{Direct Optimization within \texorpdfstring{$q_\lambda$}{q-lambda}}
\label{app:exp-training}

\paragraph{Likelihood and curvature.}
We optimize only $a\in\mathbb{R}$, with $\lambda=(1+e^{-a})^{-1}:=\sigma(a)$ ranging over $(0,1)$. Given the data posterior $\beta_V=p(Z=1\mid X_V)$, we obtain the model posterior $\beta_{\lambda,V}$ through the logit tilt defined in \Cref{sec:twomode}. For each sampled binary mode label $Z$, we minimize cross-entropy under this posterior. Let $R_\mu(a)$ denote the population masked-prediction risk of $q_\lambda$, and let $a^\star=\operatorname{logit}(w)$ denote the true parameter. The masked discrepancy is the excess risk above $R_\mu(a^\star)$:
\begin{equation}
\mathfrak D_\mu(p\Vert q_\lambda)
=
R_\mu(a)-R_\mu(a^\star)
=
\E_{K\sim\mu,\,X\sim p}
\kl\!\left(
\beta_{K^\complement}
\Vert
\beta_{\lambda,K^\complement}
\right).
\label{eq:exp-optimization-risk}
\end{equation}
The curvature at the truth is therefore
\(
H_\mu
:=
\left.\partial_a^2R_\mu(a)\right|_{a=a^\star}
=
\E_{K\sim\mu}
\left[
\mmse_p(Z\mid X_{K^\complement})
\right].
\)
Near $a^\star$, population gradient descent with learning rate $\eta>0$ contracts the logit error by approximately $1-\eta H_\mu$ per step. For $\eta H_\mu\ll1$, the corresponding half-time is $t_{1/2}\simeq \log2/(\eta H_\mu)$. Since $\lambda-w\simeq w(1-w)(a-a^\star)$ locally, this also predicts the time to halve the mode-weight error. At each checkpoint, we record $\widehat\lambda_t=\sigma(a_t)$ and the excess risk.

\paragraph{Calibration and optimizer.}
Before running gradient descent, we perform an exact population calibration over $N\in\{31,63,127\}$. For each $N$, we set $m_{\mathrm{base}}=\lfloor N/2\rfloor$ and evaluate the same two-point schedule family used in the low-visibility intervention. We use $s\in\{0,1,4\}$ and the schedule-mass grid
\(
\pi\in
\{0,10^{-4},3\times10^{-4},10^{-3},3\times10^{-3},
10^{-2},3\times10^{-2},10^{-1}\},
\)
where $\pi_s(\mu)=\pi$ is the probability of leaving $s$ coordinates visible. For every calibrated setting, we compute the exact population residual uncertainty, risk gradients, curvature, and risk profile, together with the resulting local prediction of the optimization half-time. The case $\pi=0$ is the blind baseline $\delta_{m_{\mathrm{base}}}$ and is independent of $s$. 

\begin{wrapfigure}{r}{0.45\textwidth}
    \centering
    \vspace{-15pt}
    \includegraphics[width=\linewidth]{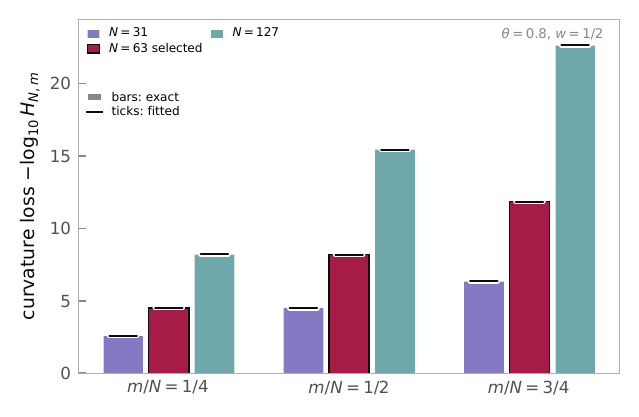}
    \vspace{-18pt}
    \caption{Curvature $H_{N,m}=U_{N,m}$ across visible fractions and system sizes, shown as $-\log_{10}H_{N,m}$. Filled bars give exact values; horizontal ticks show log-linear fits over the middle half of the visibility range. Outlined bars identify $N=63$, selected for optimization before inspecting trajectories.}
    \label{fig:exp-optimization-curvature}
    \vspace{-15pt}
\end{wrapfigure}

Before inspecting any optimization trajectory, a predeclared rule selects the largest numerically resolvable dimension for which the blind predicted half-time is at least ten times the $10^5$-step budget. The rule also requires at least three $s=1$ interventions with predicted half-times between $500$ and $80{,}000$ steps and spanning at least a factor of ten in curvature. It selects $N=63$, with $m_{\mathrm{base}}=31$ visible coordinates under the blind baseline (\Cref{fig:exp-optimization-curvature}).

We choose the learning rate from $\{0.025,0.05,0.1\}$ using fixed-seed, $1000$-step pilot runs on the globally highest-curvature configuration, initialized at $\lambda_0\in\{0.1,0.9\}$. We retain the largest rate for which all pilot runs remain finite, the logit parameter satisfies $|a|\le 20$, and the exact excess risk does not exceed the predeclared $25\%$ risk-growth bound. The resulting rate, $\eta=0.1$, is fixed across all reported runs. Population gradient descent uses $\lambda_0\in\{0.1,0.3,0.4,0.6,0.7,0.9\}$. Minibatch SGD uses $\lambda_0\in\{0.4,0.6\}$, batch size $512$, and five independent seeds. Each run has a maximum budget of $10^5$ steps, with evaluation checkpoints every $100$ steps. A stochastic crossing is accepted only when three consecutive checkpoints satisfy $|\widehat{\lambda}_t-w|\le \tfrac12|\lambda_0-w|$. The reported half-time is the first checkpoint in this sustained three-checkpoint crossing. Runs without a sustained crossing are reported as right-censored.

\paragraph{Recovery times.}
\Cref{tab:exp-training} compares predicted and observed half-times at $\lambda_0=0.4$. For the displayed interventions, population half-times differ from the local prediction by at most approximately $2.3\%$, and stochastic medians preserve the same ordering. Under the blind schedule, neither population gradient descent nor any of the five stochastic runs reaches the threshold within the budget. Calibration confirms that the blind gradient remains numerically resolvable, so the absence of recovery is not a floating-point underflow artifact.

The primary intervention uses $s=1$, while the $s=0$, $\pi=3\times10^{-2}$ schedule provides a full-mask boundary control. Additional runs with $s=4$ exhibit the same inverse-curvature ordering. Over the full $10^5$-step budget, the $s=1$, $\pi=3\times10^{-3}$ population run moves from $\lambda_0=0.4$ to $\lambda=0.493$, whereas the blind population run ends at $\lambda=0.400006$. Uncensored stochastic runs terminate once the sustained half-error crossing has been confirmed. Their terminal iterates therefore indicate recovery to the prescribed threshold, not convergence to $w$. Stochastic runs that never satisfy the criterion continue to the full budget and remain right-censored.

\begin{table}[t]
    \centering
    \small
    \caption{Error-halving times in steps at $N=63$, $\lambda_0=0.4$, and $\eta=0.1$. Predictions use $\log 2/(\eta H_\mu)$. SGD entries report medians over five seeds, with bootstrap intervals in brackets. Entries marked $>10^5$ indicate that the half-error criterion was not met within the step budget.}
    \label{tab:exp-training}
    \begin{tabular}{@{}lrrrr@{}}
        \toprule
        \textbf{Schedule} & $H_\mu$ & \textbf{Predicted} & \textbf{Population GD} & \textbf{SGD} median [interval] \\
        \midrule
        Blind
        & $6.23\!\times\!10^{-9}$
        & $1.11\!\times\!10^9$
        & $>10^5$
        & $>10^5$ \\
        $s=1,\ \pi=3\times10^{-3}$
        & $2.70\!\times\!10^{-4}$
        & $2.57\!\times\!10^4$
        & $2.59\!\times\!10^4$
        & $2.68\,[2.47,3.06]\!\times\!10^4$ \\
        $s=0,\ \pi=3\times10^{-2}$
        & $7.50\!\times\!10^{-3}$
        & $9.24\!\times\!10^2$
        & $9.45\!\times\!10^2$
        & $1.00\,[1.00,1.10]\!\times\!10^3$ \\
        $s=1,\ \pi=10^{-1}$
        & $9.00\!\times\!10^{-3}$
        & $7.70\!\times\!10^2$
        & $7.76\!\times\!10^2$
        & $9.00\,[8.00,9.00]\!\times\!10^2$ \\
        \bottomrule
    \end{tabular}
\end{table}

\subsection{Sampling Cost of Mode-Weight Errors}
\label{app:mask-cost}

\paragraph{Sampling-cost measure.}
To express the strength of the restored objective signal in sampling units, define the per-example excess loss
\[
S: =
\log p_K(X_K\mid X_{K^\complement})
-
\log q_{\lambda,K}(X_K\mid X_{K^\complement}),
\qquad
X\sim p,\quad K\sim\mu.
\]
Its expectation is the scheduled masked discrepancy $\E[S]=\mathfrak D_\mu(p\Vert q_\lambda)$. We define the reference sample size
\begin{equation}
n^\star
:=
\frac{\operatorname{Var}(S)}{\E[S]^2}.
\label{eq:exp-nstar}
\end{equation}
For $n^\star$ independent masked examples, the expected signal $\E[S]$ equals one standard error of the sample mean. We use $n^\star$ as a reference measure of sampling cost across mask schedules for a given mode-weight perturbation.

\begin{figure}[t]
    \centering
    \begin{subfigure}[t]{0.32\linewidth}
        \centering
        \includegraphics[width=\linewidth]{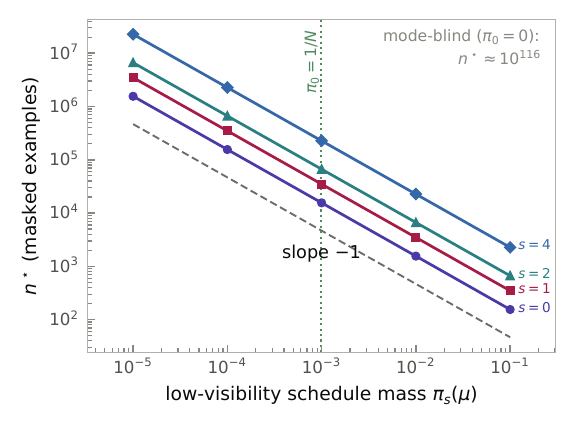}
        \subcaption{Cost versus low-visibility mass.}
        \label{fig:exp-cost-mass}
    \end{subfigure}\hfill
    \begin{subfigure}[t]{0.32\linewidth}
        \centering
        \includegraphics[width=\linewidth]{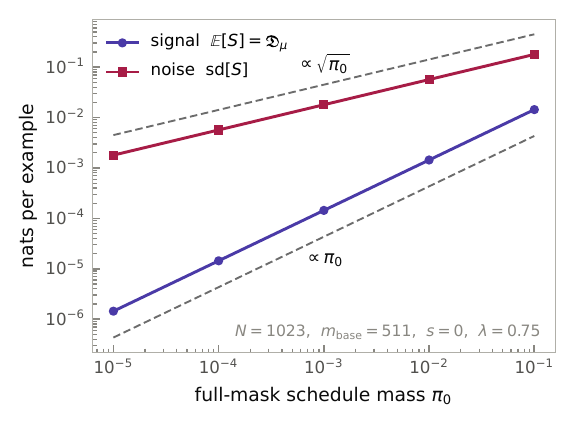}
        \subcaption{Signal and noise scaling.}
        \label{fig:exp-cost-signal}
    \end{subfigure}\hfill
    \begin{subfigure}[t]{0.32\linewidth}
        \centering
        \includegraphics[width=\linewidth]{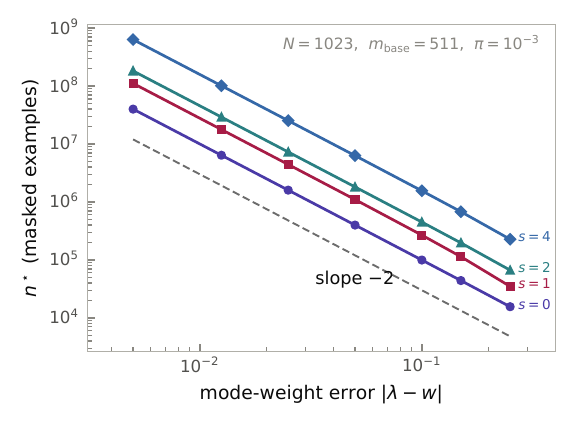}
        \subcaption{Cost versus mode-weight error.}
        \label{fig:exp-cost-resolution}
    \end{subfigure}
    \caption{Reference sample size for mode-weight errors in the conditioned-product mixture at $N=1023$, $m_{\mathrm{base}}=511$, $\theta=0.8$, and $w=1/2$. (a) At the endpoint $\lambda=3/4$, the reference sample size scales as $n^\star\propto\pi_s(\mu)^{-1}$ in the boost-dominated regime. The vertical reference marks $\pi_0=1/N$. (b) For the $s=0$ channel, the mean excess loss is proportional to $\pi_0$, whereas its standard deviation is proportional to $\sqrt{\pi_0}$. (c) At $\pi_s(\mu)=10^{-3}$, the reference sample size scales as $n^\star\propto|\lambda-w|^{-2}$ near the truth.}
    \label{fig:exp-cost}
\end{figure}

\paragraph{Sampling cost and per-example loss.}
\Cref{fig:exp-cost-mass} shows that, at $\pi_0=10^{-3}\approx 1/N$ and $\lambda=3/4$, the reference sample scale is $n^\star=1.56\times10^4$. Under the blind baseline $\pi_0=0$, it is $5.94\times10^{115}$. Assigning probability approximately $1/N$ to full-mask examples therefore reduces $n^\star$ by more than $111$ orders of magnitude for this conditioned-product mixture. Across the positive schedule masses shown in the figure, the contribution from low-visibility masks is much larger than the baseline contribution. In this range, the mean excess loss is proportional to $\pi_s(\mu)$, while its standard deviation is proportional to $\sqrt{\pi_s(\mu)}$ (\Cref{fig:exp-cost-signal}). Substitution into \Cref{eq:exp-nstar} gives $n^\star\propto\pi_s(\mu)^{-1}$. At fixed low-visibility mass, the sample scale increases as $n^\star\propto|\lambda-w|^{-2}$ when the mode-weight error decreases (\Cref{fig:exp-cost-resolution}).

We also compute the expected raw training loss $H(X_K\mid X_{K^\complement})$ for each mask size. At $N=1023$, masks leaving $s=1$, $s=2$, and $s=4$ coordinates visible have expected raw losses only $0.21\%$, $0.35\%$, and $0.58\%$ below the full-mask value $H(X)$, respectively. Thus, the increased sensitivity under low visibility is not explained by a substantially lower per-example loss. The $s=0$ case has a universal theoretical guarantee, whereas the results for $s>0$ apply to the conditioned-product mixture studied here.

\subsection{Exploratory Natural-Text Data Geometry}
\label{app:exp-corpora}

We examine how estimated residual uncertainty $\mmse_p(Z\mid X_V)$ changes with context visibility in two binary classification settings: code versus prose and German versus English. Here $Z$ records dataset membership; its interpretation as an intrinsic partition into modes is not assumed. The study provides an exploratory measurement of data geometry.

\paragraph{Natural-text data.}
For code--prose, we compare permissively licensed Python files from \texttt{github-code-clean}~\citep{codeparrot_githubcodeclean} with English C4 documents~\citep{raffel2020exploring}. For German--English, we use the two sides of OPUS-100~\citep{tiedemann-2012-parallel,zhang-etal-2020-improving}. We use GPT-2 tokenization~\citep{radford2019language} for both classes and pack documents to $1{,}152$ tokens, so the packed-document length cannot reveal the label. The realized datasets contain $16{,}000$ documents per class for code--prose and $14{,}718$ per class for German--English. We divide the packed sequences into disjoint $80/10/10$ fitting, calibration, and evaluation splits. Each class contributes $15{,}000$ sampled windows at every visibility and split used by its estimator.

\paragraph{Estimators and residual uncertainty.}
At visible size $|V|\in\{1,2,4,8\}$, we construct a posterior proxy from the ratio of class-specific stupid-backoff $n$-gram scores. At $|V|\in\{12,16,24,32,64,128,256,512,1024\}$, we fit logistic classifiers with unigram or unigram-plus-hashed-bigram features and calibrate their scores by isotonic regression on the calibration split. The bigram features allow us to assess the effect of including local token order.

For each estimator and visibility, evaluation uses $n_{\mathrm{ev}}=30{,}000$ windows, with $15{,}000$ from each class. Each window is sampled by choosing a packed sequence uniformly from its class and then choosing a valid starting position uniformly within that sequence. Let $\widehat\beta_j$ denote the predicted positive-class probability for evaluation window $j$, and let $z_j$ be its binary label. We report the plug-in uncertainty estimate and held-out Brier score
\[
\widehat U_V
:=
\frac{1}{n_{\mathrm{ev}}}
\sum\nolimits_{j=1}^{n_{\mathrm{ev}}}
\widehat\beta_j(1-\widehat\beta_j),
\qquad
\widehat B_V
:=
\frac{1}{n_{\mathrm{ev}}}
\sum\nolimits_{j=1}^{n_{\mathrm{ev}}}
(z_j-\widehat\beta_j)^2.
\]
Both averages weight sampled windows equally. The population Brier risk bounds Bayes residual uncertainty from above; $\widehat B_V$ estimates this risk, while $\widehat U_V$ uses the fitted posterior probabilities. We obtain $95\%$ percentile intervals for $\widehat U_V$ from $1{,}000$ bootstrap replicates. Each replicate resamples the represented packed sequences with replacement, retains all sampled windows belonging to each selected sequence, and recomputes the window average. The fitted predictors and calibration maps remain fixed during resampling.\

\begin{figure}[t]
    \centering
    \begin{subfigure}[t]{0.48\linewidth}
        \centering
        \includegraphics[width=\linewidth]
        {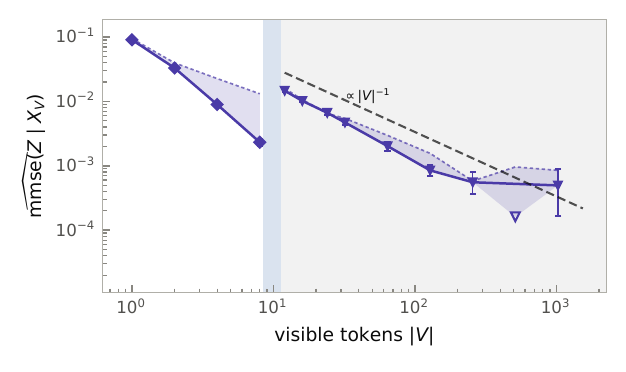}
        \subcaption{Code--prose.}
        \label{fig:exp-corpora-code-prose}
    \end{subfigure}\hfill
    \begin{subfigure}[t]{0.48\linewidth}
        \centering
        \includegraphics[width=\linewidth]
        {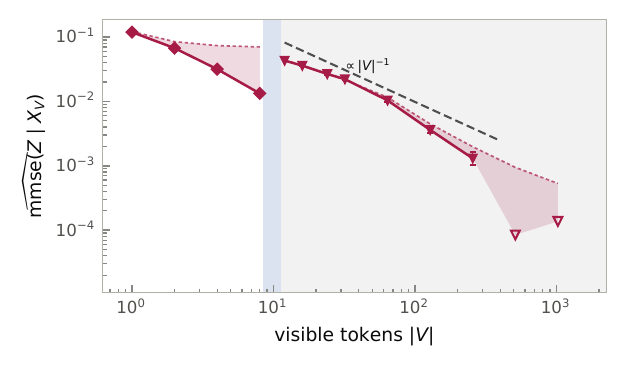}
        \subcaption{German--English.}
        \label{fig:exp-corpora-german-english}
    \end{subfigure}
    \caption{
        Estimated residual label uncertainty in natural-text data. (a) Code--prose. (b) German--English. In both panels, short- and long-context measurements use separate estimators. Error bars show document-level bootstrap intervals; shaded bands extend from the estimated residual uncertainty to the held-out Brier score. Hollow markers indicate censored cells.
    }
    \label{fig:exp-corpora-result}
\end{figure}

\begin{table}[t]
    \centering
    \small
    \caption{Exponential and power-function fits over the same uncensored classifier cells. Parentheses report the coefficient of determination, $R^2$.}
    \label{tab:exp-corpora-form}
    \begin{tabular}{@{}llrr@{}}
        \toprule
        \textbf{Corpus} & \textbf{Features} & \textbf{Exponential rate} & \textbf{Power exponent} \\
        \midrule
        Code--prose
        & unigram
        & $0.0110\ (0.861)$
        & $-0.942\ (0.9998)$ \\
        Code--prose
        & unigram$+$bigram
        & $0.0126\ (0.809)$
        & $-1.109\ (0.993)$ \\
        German--English
        & unigram
        & $0.0143\ (0.943)$
        & $-1.150\ (0.974)$ \\
        German--English
        & unigram$+$bigram
        & $0.0142\ (0.939)$
        & $-1.149\ (0.977)$ \\
        \bottomrule
    \end{tabular}
\end{table}

\paragraph{Calibration and censoring.}
For the classifier estimates, isotonic predictions are clipped to $[f,1-f]$, where $f:=\frac{1}{2n_{\mathrm{cal}}}$ and $u_{\mathrm{clip}}:=f(1-f)$. Here $n_{\mathrm{cal}}=30{,}000$ is the number of calibration windows, so $u_{\mathrm{clip}}\approx1.66664\times10^{-5}$. We censor a classifier cell if $\widehat U_V\le1.05\,u_{\mathrm{clip}}$ or $\widehat B_V>2\widehat U_V$. The first criterion detects estimates near the imposed clipping floor; the second detects a large discrepancy between posterior variance and observed prediction error. The short-context posterior proxies are not calibrated and are not subject to this censoring rule. The study contains $44$ corpus--estimator--visibility cells: eight short-context cells and $36$ classifier cells. We analyze the two estimator ranges separately and compare exponential and power-function fits over the common uncensored classifier range.

\paragraph{Results.}
The reported uncertainty decreases by approximately $183\times$ on code--prose between $|V|=1$ and $1024$, and by $91\times$ on German--English between $|V|=1$ and its largest uncensored context, $256$ (\Cref{fig:exp-corpora-result}). These ratios compare short-context backoff estimates with long-context estimates from the unigram-plus-hashed-bigram classifier. Within the common uncensored classifier range, $12\le|V|\le256$, power-function fits have higher $R^2$ than exponential fits, with exponents ranging from $-1.15$ to $-0.94$ (\Cref{tab:exp-corpora-form}). Adding hashed bigrams lowers Brier error on most cells while leaving the fitted decay exponents broadly similar. Seven of the $36$ classifier cells are censored, all at $|V|\in\{512,1024\}$ because their held-out Brier scores exceed twice their plug-in uncertainty estimates. No cell is censored by the clipping-floor criterion.

\section{Discussion}
\label{app:discussion}

\subsection{Implications for Masking Schedule Design}
\label{app:implications}

\paragraph{Context visibility and global frequencies.}
More visible context can make masked prediction easier while reducing its sensitivity to global mode proportions. Under mode pinning, schedules that always retain a fixed positive fraction of coordinates permit substantial mode-weight errors at exponentially small excess risk (\Cref{thm:mode-blind}). Accurate conditional prediction therefore need not imply accurate global frequencies. When those frequencies matter, the schedule should include contexts that leave uncertainty about mode identity.

\paragraph{Low-visibility interventions.}
A direct intervention to mask schedules is to mix a low-visibility component into the base schedule. Under \Cref{ass:low-vis-uncertainty}, assigning probability \(\eta\) to masks with at most a fixed \(s\le s_{\mathrm{ov}}\) visible coordinates gives a mode-weight recovery bound of order \(\sqrt{\varepsilon/\eta}\) within the reweighting family (\Cref{cor:recovery}). The controlled experiments verify this dependence (\Cref{app:exp-e2}). The sampling calculation also illustrates the role of this probability: when low-visibility examples dominate the excess-loss signal, the reference sample size for detecting a fixed mode-weight error scales as \(\eta^{-1}\) (\Cref{app:mask-cost}).

\subsection{Limitations and Open Directions}
\label{app:limitations}

\paragraph{Scope of recovery.}
The low-visibility guarantee controls mode weights while holding the within-mode distributions fixed. It does not bound the identifiability modulus over all admissible model distributions. Under joint masked-block log loss, full masking contributes \(\pi_0(\mu)\KL(p\Vert q)\) to the discrepancy and therefore controls the entire joint distribution (\Cref{prop:full-mask}). Full masking is sufficient for this control, but recovery without it may follow from additional assumptions. A main open question is which conditions allow non-full masks to control joint-distribution errors beyond mode reweighting.

\paragraph{Prediction heads.}
Our main analysis assumes that all model conditionals come from one joint distribution and that the masked block is scored jointly. The mode-blindness bound transfers to per-token log loss up to a polynomial factor (\Cref{app:factorized}). The recovery guarantees do not transfer in general: at full masking, a factorized head scores only unconditional coordinate marginals, which need not determine the joint distribution. Changing the schedule alone cannot recover dependencies that the prediction head does not score.

\paragraph{Hierarchical data geometry.}
The mode-blindness result extends to partitions into finitely many modes (\Cref{app:M-modes}), but we have not established a corresponding general recovery theorem. The analysis also relies on assumptions about how visible context resolves mode identity. Extending the results to overlapping or hierarchical data regimes requires further work.

\paragraph{From controlled experiments to pretraining.}
By holding the within-mode distributions fixed, the optimization experiment isolates how the mask schedule affects mode-weight recovery (\Cref{app:exp-training}). The results support the predicted link between residual mode uncertainty, objective curvature, and recovery speed. A natural empirical direction is to test low-visibility interventions in pretraining and measure their effects on both global-frequency recovery and learning within modes.

\end{document}